\DeclareSymbolFont{rsfs}{U}{rsfs}{m}{n}
\DeclareSymbolFontAlphabet{\mathscrsfs}{rsfs}
\numberwithin{equation}{section}
\newtheoremstyle{myexample} 
    {\topsep}                    
    {\topsep}                    
    {\rm }                   
    {}                           
    {\bf }                   
    {.}                          
    {.5em}                       
    {}  
\newtheoremstyle{myremark} 
    {\topsep}                    
    {\topsep}                    
    {\rm}                        
    {}                           
    {\bf}                        
    {.}                          
    {.5em}                       
    {}  
\newtheorem{claim}{Claim}[section]
\newtheorem{lemma}[claim]{Lemma}
\newtheorem{theorem}{Theorem}
\newtheorem{proposition}[claim]{Proposition}
\newtheorem{corollary}[claim]{Corollary}
\newtheorem{definition}[claim]{Definition}
\theoremstyle{myremark}
\newtheorem{remark}{Remark}[section]
\theoremstyle{myremark}
\theoremstyle{myexample}
\def\hbx{\hat{\boldsymbol x}}
\def\sign{{\rm sign}}
\def\sT{{\sf T}}
\def\<{\langle}
\def\>{\rangle}
\def\prob{{\mathbb P}}
\def\E{{\mathbb E}} 
\newcommand\norm[1]{\left\lVert{#1}\right\rVert}
\def\bs{\boldsymbol{s}}
\def\de{{\rm d}}
\def\reals{\mathbb{R}}
\def\normal{{\sf N}}
\def\sb{{\sf b}}
\def\cS{{\mathcal{S}}}
\def\cT{{\mathcal{T}}}
\def\bM{{\boldsymbol M}}
\def\by{{\boldsymbol y}}
\def\bQ{{\boldsymbol Q}}
\def\bD{{\boldsymbol D}}
\def\bI{{\boldsymbol I}}
\def\be{{\boldsymbol e}}
\def\br{{\boldsymbol r}}
\def\bx{{\boldsymbol x}}
\def\b0{{\boldsymbol 0}}
\def\ba{{\boldsymbol a}}
\def\bphi{{\boldsymbol \varphi}}
\def\bX{{\boldsymbol X}}
\def\bY{{\boldsymbol Y}}
\def\bZ{{\boldsymbol Z}}
\def\bv{{\boldsymbol v}}
\def\bu{{\boldsymbol u}}
\def\hbx{\hat{\boldsymbol x}}
\def\tbx{\tilde{\boldsymbol x}}
\def\bxs{\hat{\boldsymbol x}^{\rm s}}
\def\id{{\boldsymbol I}}
\def\bfone{{\boldsymbol 1}}
\def\bR{{\boldsymbol R}}
\def\bA{{\boldsymbol A}}
\def\U{{\rm U}}
\def\U1{{\rm U}(1)}
\def\tow2{\stackrel{W_2}{\Rightarrow}}
\def\PL{{\rm PL}}
\def\sfp{{\sf p}}
\newcommand{\towa}[1]{\stackrel{W_{#1}}{\Rightarrow}}
\newcommand\abs[1]{\left\lvert{#1}\right\rvert}
\def\sc{{\sf c}}
\newcommand{\beq}{\begin{equation}}
\newcommand{\eeq}{\end{equation}}
\newcommand{\diag}{{\rm diag}}
\def \bLambda{{\boldsymbol \Lambda}}
\newcommand{\FP}{{\sf FP}}
\title{Optimal Combination of Linear and Spectral Estimators for Generalized Linear Models}
\author{Marco Mondelli\thanks{Institute of Science and Technology (IST) Austria. Email: \texttt{marco.mondelli@ist.ac.at}.},\;\;\;
Christos Thrampoulidis\thanks{Department of Electrical and Computer Engineering, University of British Columbia (UBC). Email: \texttt{cthrampo@ece.ubc.ca}.} \;\;\;and\;\;\;Ramji Venkataramanan\thanks{Department of Engineering, University of Cambridge. Email: \texttt{ramji.v@eng.cam.ac.uk}.}}
\def\x{\mathbf{x}}
\newcommand{\bxl}{\hat{\boldsymbol x}^{\rm L}}
\newcommand{\bxln}{\bar{\boldsymbol x}^{\rm L}}
\newcommand{\bzl}{{\boldsymbol z}^{\rm L}}
\newcommand{\bxc}{\hat{\boldsymbol x}^{\rm c}}
\def\R{\mathbb{R}}
\DeclarePairedDelimiterX{\inp}[2]{\langle}{\rangle}{#1, #2}
\def\ras{\stackrel{\mathclap{\mbox{\footnotesize a.s.}}}{\longrightarrow}}
\newcommand{\simiid}{\stackrel{\text{iid}}{\sim}}
\def\Nn{\mathcal{N}}
\newcommand{\nn}{\notag}
\newcommand{\eqd}{\stackrel{\mathclap{\text{d}}}{=}}
\def\bh{\mathbf{h}}
\newcommand{\vp}{\vspace{3pt}}
\newcommand{\Tc}{\mathcal{T}}
\newcommand{\Tct}{\widetilde{\Tc}}
\begin{document}

\maketitle

\begin{abstract}
    We study the problem of recovering an unknown signal $\bx$ given measurements obtained from a generalized linear model with a Gaussian sensing matrix. Two popular solutions are based on a linear estimator $\bxl$ and a spectral estimator $\bxs$. The former is a data-dependent linear combination of the columns of the measurement matrix, and its analysis is quite simple. The latter is the principal eigenvector of a data-dependent matrix, and a recent line of work has studied its performance. In this paper, we show how to optimally combine $\bxl$ and $\bxs$. At the heart of our analysis is the exact characterization of the empirical joint distribution of $(\bx, \bxl, \bxs)$ in the high-dimensional limit. This allows us to compute the Bayes-optimal combination of $\bxl$ and $\bxs$, given the limiting distribution of the signal $\bx$. When the distribution of the signal is Gaussian, then the Bayes-optimal combination has the form $\theta\bxl+\bxs$ and we derive the optimal combination coefficient. In order to establish the limiting distribution of $(\bx, \bxl, \bxs)$, we
    design and analyze an Approximate Message Passing (AMP) algorithm whose iterates give $\bxl$ and approach $\bxs$. Numerical simulations demonstrate the improvement of the proposed combination with respect to the two methods considered separately.
\end{abstract}

\section{Introduction}

In a generalized linear model (GLM) \cite{nelder1972generalized, mccullagh2018generalized}, we want to recover a $d$-dimensional signal $\bx \in\mathbb R^d$ given $n$ i.i.d. measurements $\by=(y_1, \ldots, y_n)$ of the form:
\begin{equation}\label{eq:model}
y_i\sim p(y\mid \langle\bx, \ba_i \rangle), \qquad i\in\{1, \ldots, n\},
\end{equation}
where $\langle\cdot, \cdot\rangle$ denotes the scalar product, $\{\ba_i\}_{1\le i\le n}$ are known sensing vectors, and the (stochastic) output function $p(\cdot \mid \langle\bx, \ba_i \rangle)$ is a known probability distribution. GLMs arise in several problems in statistical inference and signal processing. Examples include photon-limited imaging \cite{photonUnser, photonVetterli}, compressed sensing \cite{eldar2012compressed}, signal recovery from quantized measurements \cite{noiseRangan, boufounos20081}, phase retrieval \cite{phFienup, shechtman2015phase}, and neural networks with one hidden layer \cite{lecun2015deep}.

The problem of estimating $\bx$ from $\by$ is, in general, non-convex, and semi-definite programming relaxations have been proposed \cite{candes2015phase,  candes2013phaselift,  waldspurger2015phase,thrampoulidis2019lifting}. However, the computational complexity and memory requirement of these approaches quickly grow with the dimension $d$. For this reason, several non-convex approaches have been developed, e.g., alternating minimization \cite{netrapalli2013phase}, 
approximate message passing (AMP) \cite{DMM09, RanganGAMP, schniter2014compressive}, Wirtinger Flow \cite{candes2015wirt}, Kaczmarz methods \cite{wei2015solving}, and iterative convex-programming relaxations \cite{boufounos20081,bahmani2017phase,goldstein2018phasemax,dhifallah2018phase}. The Bayes-optimal estimation and generalization error have also been studied in \cite{barbier2019optimal}. When the output function $p(\cdot \mid \langle\bx, \ba_i \rangle)$ is unknown, \eqref{eq:model} is called the single-index model in the statistics literature, see e.g. \cite{Bri,li1989regression,kakade2011efficient}. The problem of recovering a structured signal (e.g., sparse, low-rank) from high-dimensional single-index measurements has been an active research topic over the past few years \cite{yi2015optimal,plan2017high,plan2016generalized,thrampoulidis2015lasso,neykov2016l1,genzel2017high,goldstein2018structured,genzel2019recovering,thrampoulidis2019lifting}.

Throughout this paper, the performance of an estimator $\hbx$ will be measured by its normalized correlation (or ``overlap") with $\bx$:
\beq 
\frac{\abs{\< \bx, \hbx \>}}{\| \bx \|_2 \| \hbx \|_2 }, 
\label{eq:est_corr_def}
\eeq
where $\| \cdot \|_2$ denotes the Euclidean norm of a vector. 

Most of the existing techniques require an initial estimate of the signal, which can then be refined via a local algorithm. Here, we focus on two popular methods: a linear estimator and a spectral estimator. The \emph{linear estimator} $\bxl$ has the form:
\begin{equation}\label{eq:deflin}
\frac{1}{n}\sum_{i=1}^n \cT_L(y_i)\ba_i \,,    
\end{equation}
where $\cT_L$ denotes a given preprocessing function. The performance analysis of this estimator is quite simple, see e.g. Proposition 1 in \cite{plan2017high} or Section \ref{sec:lin_est} of this paper. The \emph{spectral estimator} consists in the principal eigenvector $\bxs$ of a matrix of the form:
\begin{equation}\label{eq:defspect}
     \frac{1}{n}\sum_{i=1}^n \cT_s(y_i)\ba_i\ba_i^\sT \,,
\end{equation}
where $\cT_s$ is another preprocessing function. The idea of a spectral method first appeared in \cite{li1992principal} and, for the special case of phase retrieval, a series of works has provided more and more refined performance bounds \cite{netrapalli2013phase,candes2013phaselift,chen2015solving}. Recently,  an exact high-dimensional analysis of the spectral method  for generalized linear models with Gaussian sensing vectors has been carried out in \cite{lu2017phase,mondelli2017fundamental}. These works consider a regime where both $n$ and $d$ grow large at a fixed proportional rate $\delta=n/d>0$. The choice of $\cT_s$ which minimizes the value of $\delta$ (and, consequently, the amount of data) necessary to achieve a strictly positive scalar product \eqref{eq:est_corr_def} was obtained in \cite{mondelli2017fundamental}. Furthermore, the choice of $\cT_s$ which maximizes the correlation between $\bx$ and $\bxs$ for any given value of the sampling ratio $\delta$ was obtained in \cite{luo2019optimal}. The case in which the sensing vectors are obtained by picking columns from a Haar matrix is tackled in \cite{dudeja2020analysis}.

In short, the performance of the linear estimate $\bxl$ and the spectral estimate $\bxs$ is well understood, and there is no clear winner between the two. In fact, the superiority of one method over the other depends on the output function $p(\cdot \mid \langle\bx, \ba_i \rangle)$ and on the sampling ratio $\delta$. For example, for phase retrieval ($y_i=|\langle\bx, \ba_i \rangle|$), the spectral estimate provides positive correlation with the ground-truth signal as long as $\delta>1/2$ \cite{mondelli2017fundamental}, while linear estimators of the form \eqref{eq:deflin} are not effective for any $\delta>0$. On the contrary, for 1-bit compressed sensing ($y_i={\rm sign}(\langle\bx, \ba_i \rangle)$) the situation is the opposite: the spectral estimator is uncorrelated with the signal for any $\delta>0$, while the linear estimate works well. For many cases of practical interest, e.g. neural networks with ReLU activation function ($y_i=\max(\langle\bx, \ba_i \rangle, 0)$), both the linear and the spectral method give estimator with non-zero correlation. Thus, a natural question is the following:
\begin{center}
\emph{What is the optimal way to combine the linear estimator $\bxl$ and the spectral estimator $\bxs$?}
\end{center}

This paper closes the gap and answers the question above for Gaussian sensing vectors $\{\ba_i\}_{1\le i\le n}$. Our main technical contribution is to provide an exact high-dimensional characterization of the joint empirical distribution of $(\bx, \bxl, \bxs)$ in the limit $n, d \to \infty$ with a fixed sampling ratio $\delta=n/d$ (see Theorem \ref{th:W2conv2}). In particular, we prove that the conditional distribution of $(\bxl, \bxs)$ given $\bx$ converges to the law of  a bivariate Gaussian whose mean vector and covariance matrix are specified in terms of the preprocessing functions $\cT_L$ and $\cT_s$. As a consequence, we are able to compute the Bayes-optimal combination of $\bxl$ and $\bxs$ for any given prior distribution on $\bx$ (see Theorem \ref{th:optimality}). In the special case in which the signal prior is  Gaussian, the Bayes-optimal combination has the form $\theta\bxl+\bxs$, with $\theta\in\mathbb R$, and we compute the optimal combination coefficient $\theta_*$ that maximizes the normalized correlation  in  \eqref{eq:est_corr_def}
(see Corollary \ref{lem:fopt}).

The characterization of the joint empirical distribution of $(\bx, \bxl, \bxs)$ is achieved by designing and analyzing a suitable approximate message passing (AMP) algorithm. AMP is a family of iterative algorithms that has been applied to several high-dimensional statistical estimation problems including estimation in linear models \cite{DMM09, BM-MPCS-2011,BayatiMontanariLASSO, krzakala2012}, generalized linear models \cite{RanganGAMP,schniter2014compressive,sur2019modern}, and low-rank matrix estimation \cite{deshpande2014information, RanganFletcherGoyal, lesieur2017constrained,montanari2017estimation}. An appealing feature of AMP algorithms is that under suitable conditions on the model, the empirical joint distribution of the iterates can be exactly characterized in the high-dimensional limit, in  terms of a simple scalar recursion called \emph{state evolution}. 

In this paper,  we design an AMP algorithm that is equivalent to a power method computing the principal eigenvector of the matrix \eqref{eq:defspect}. Then, the state evolution analysis leads to the desired joint empirical distribution of $(\bx, \bxl, \bxs)$. Using the limiting distribution, we reduce the vector problem of estimating $\bx\in\mathbb R^d$ given two (correlated) observations $\bxl, \bxs\in\mathbb R^d$ to the scalar problem of estimating the random variable $X\in\mathbb R$ given two (correlated) observations $X_L, X_s\in \mathbb R$.  We emphasize that the focus of this work is not on using the AMP algorithm as an estimator for the generalized linear model.  Rather, we use AMP as a proof technique to characterize the joint empirical distribution of $(\bx, \bxl, \bxs)$, and thereby understand how to optimally combine the two simple estimators. 

\begin{figure}[t]
    \subfigure[]{
    \centering
    \includegraphics[scale=.4]{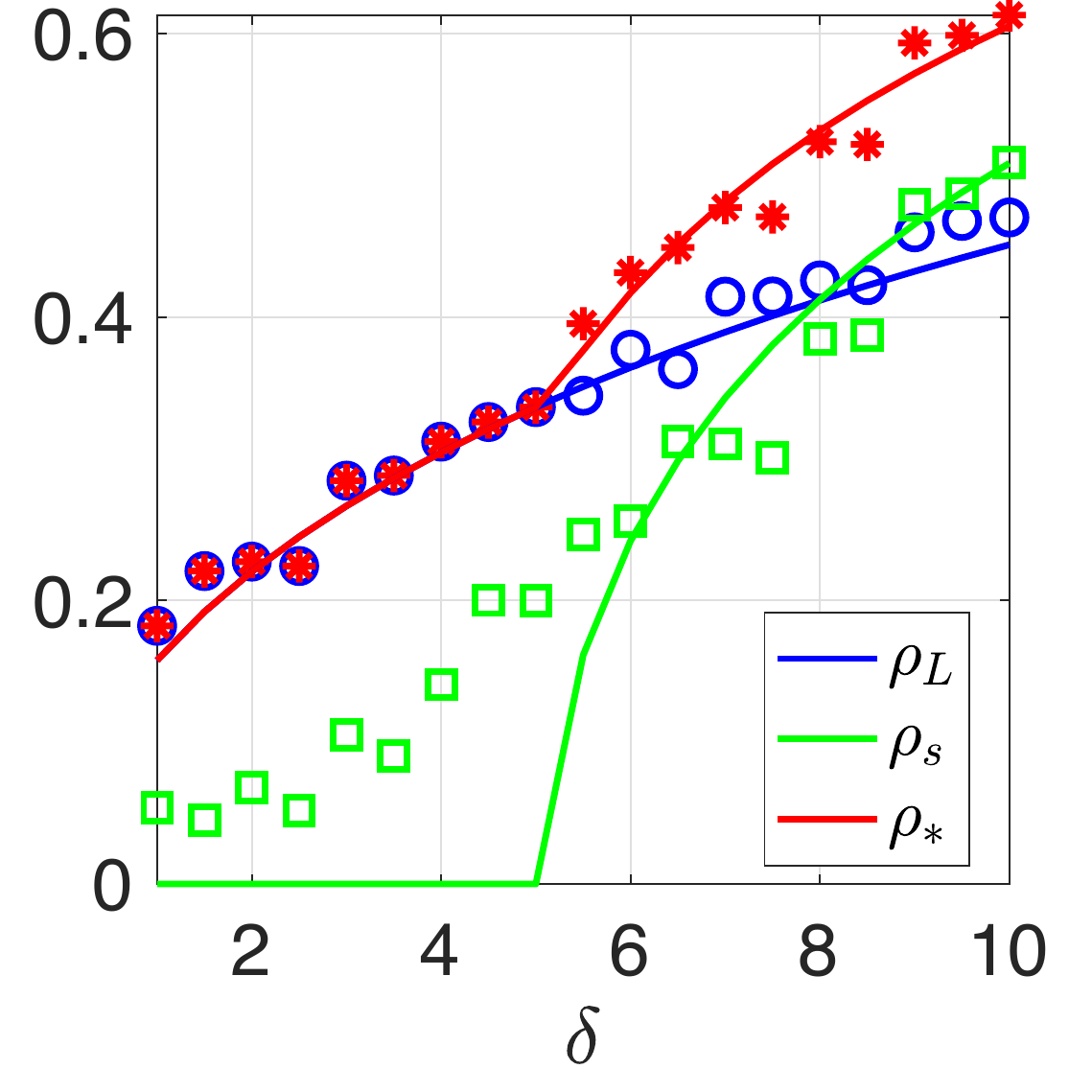}
    }
    \hspace{0.05in}
    \subfigure[]{
    \centering
    \includegraphics[scale=.4]{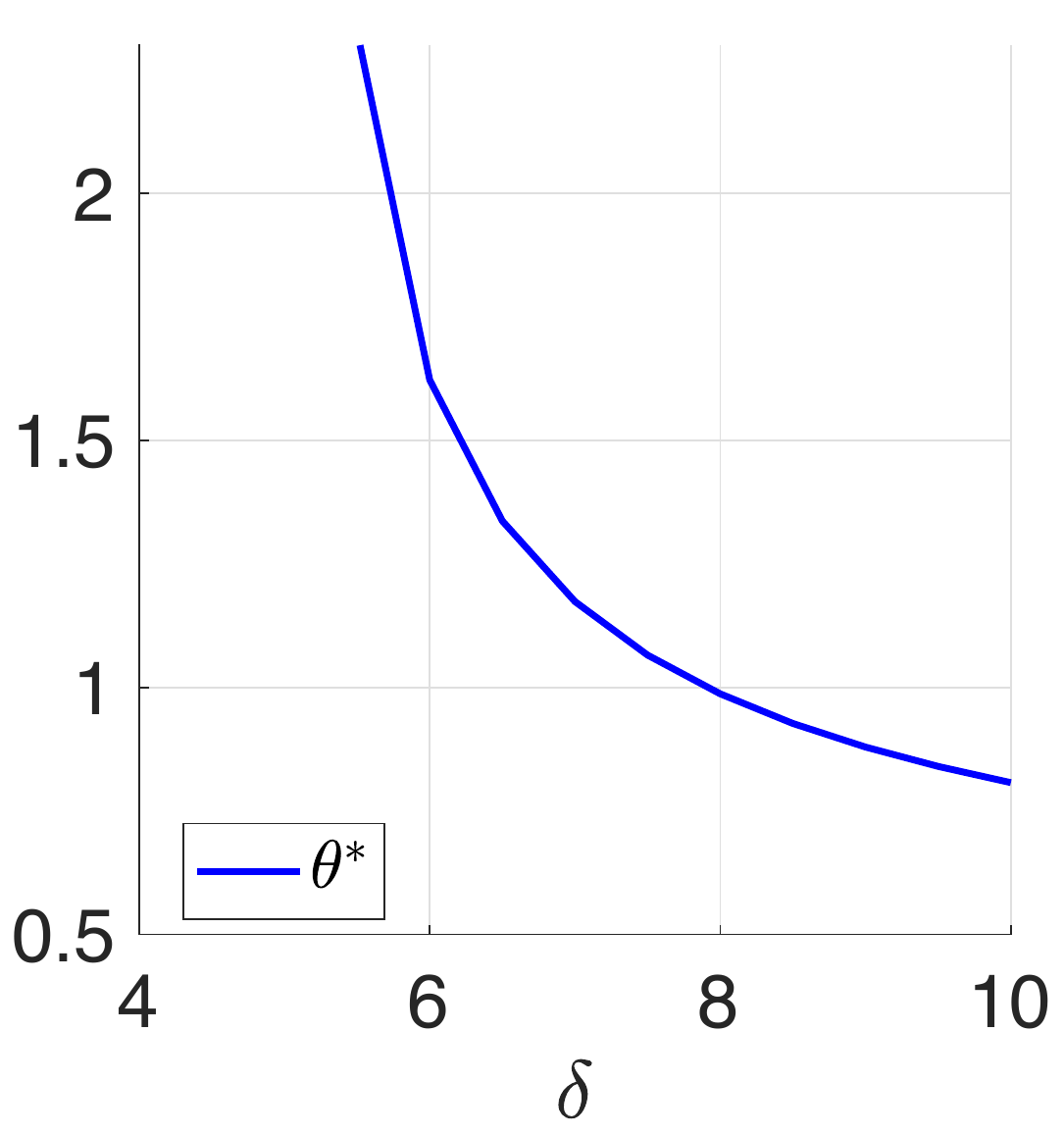}
    }
    \hspace{0.06in}
    \subfigure[]{
    \includegraphics[scale=0.34]{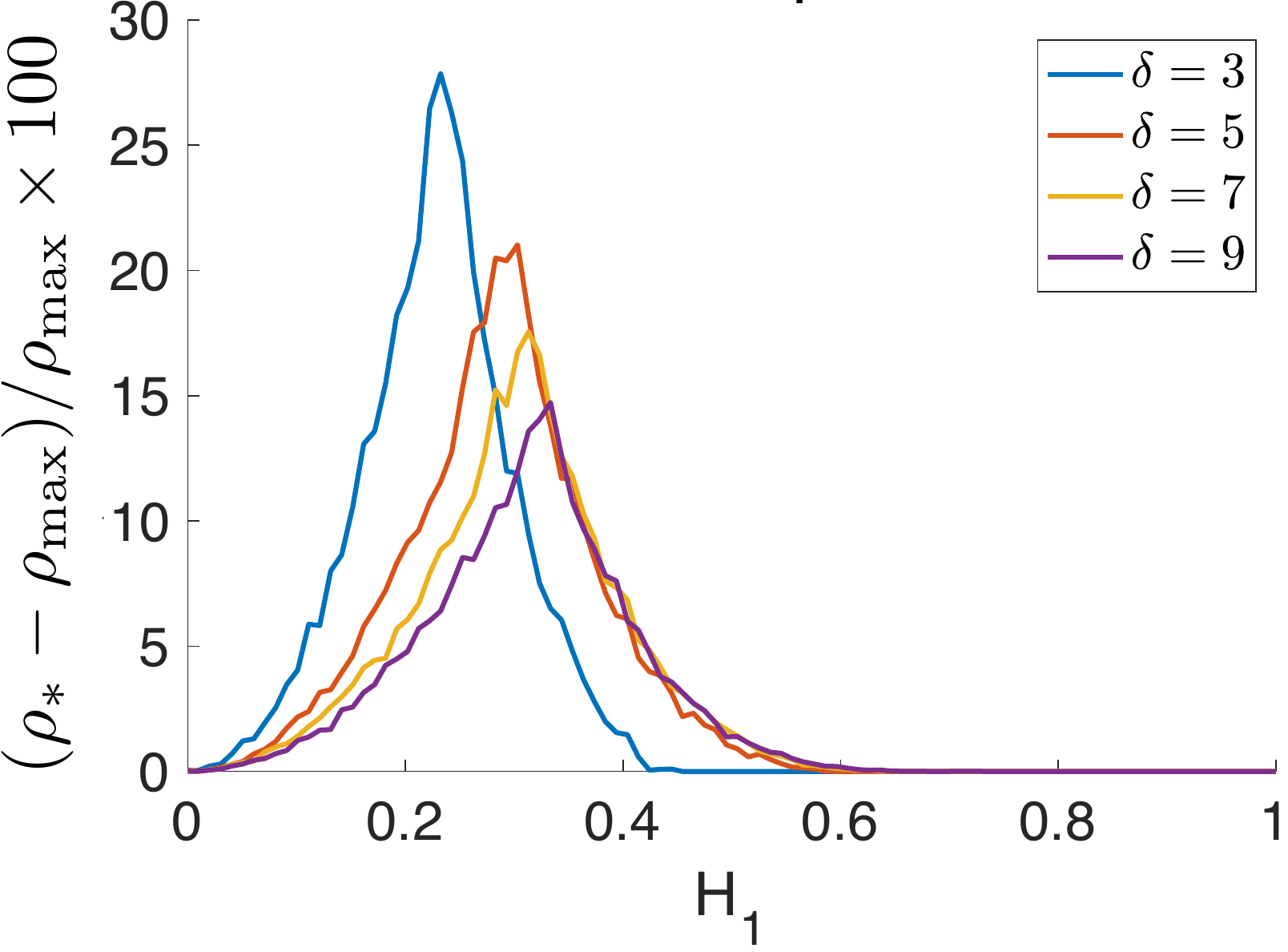}
    }
    \caption{(a) Performance comparison among the linear estimator, the spectral estimator and the proposed optimal combination for a specific output function and ranging values of the sampling ratio $\delta$. The performance is measured in terms of the normalized correlation \eqref{eq:est_corr_def}. (b) Optimal combination coefficient $\theta_*$ as a function of $\delta$ for the same output function as in (a). (c) Percentage performance gain of the combined estimator for different output functions and sampling ratios.}
    \label{fig:intro}
\end{figure}

Our proposed combination of the linear and spectral estimators can significantly boost the correlation with the ground-truth signal \eqref{eq:est_corr_def}. As an illustration, in Figure \ref{fig:intro}(a) we compare against each other the correlations $\rho_L, \rho_s$ and $\rho_*$ of the linear, spectral and optimal combined estimators, respectively, for a range of values of the sampling ratio $\delta=n/d$ and measurements of the form $y_i = 0.3\,\inp{\bx}{\ba_i} + \inp{\bx}{\ba_i}^2 + z_i$. Here, $\bx$ is uniformly distributed on the $d$-dimensional sphere of radius $\sqrt{d}$, $\ba_i \ \sim_{i.i.d.} \ \normal(\b0_d, \bI_d/d)$, $z_i\sim_{i.i.d.}\mathcal{N}(0,0.2)$, and the preprocessing functions are chosen as follows: $\cT_L(y) = y$ and $\cT_s(y) = \min\{y,3.5\}$. The solid lines correspond to analytically derived asymptotic formulae, and they are compared against numerical simulations (cf. markers of the corresponding color) computed for $d=2000$. Specifically, the red line corresponds to the optimal combined estimator $\theta_*\bxl + \bxs$ (in this example, the empirical distribution of $\bx$ is Gaussian). The optimal combination coefficient $\theta_*$ is plotted in Figure \ref{fig:intro}(b) as a function of $\delta$. Note that for values of $\delta$ for which the spectral estimator achieves strictly positive correlation with $\bx$, the combined estimator provides a significant performance improvement. The performance gain depends on: \emph{(i)} the sampling ratio $\delta$ (it can be as large as $\sim30\%$ for $\delta\approx8$), and \emph{(ii)} the output function that defines the measurement. To better visualize this dependence we plot in Figure \ref{fig:intro}(c) the percentage gain
$(\rho_*-\rho_{\max})/\rho_{\max}\times 100$
for various values of $\delta$ and for different output-function parameterizations. Specifically, the $x$-axis in Figure \ref{fig:intro}(c) represents the value of the coefficient $H_1$  of an output function of the form $y_i = 0.5+H_1\,\inp{\bx}{\ba_i} + 0.5\,\inp{\bx}{\ba_i}^2 + z_i$, with $z_i\sim_{i.i.d.}\mathcal{N}(0,0.2)$. Above, 
$\rho_*$ denotes the correlation achieved by our proposed estimator, and $\rho_{\max}$ is the maximum correlation among the linear and spectral estimators.



The rest of the paper is organized as follows. In Section \ref{sec:prel}, we describe the setting and review existing results on the linear and the spectral estimator. In Section \ref{sec:combo}, we present our contributions. The main technical result, Theorem \ref{th:W2conv2}, gives an  exact characterization of the joint empirical distribution of $(\bx, \bxl, \bxs)$. Using this, we derive the Bayes-optimal combination of the estimators $\bxl$ and $\bxs$. In the special case in which the signal prior is Gaussian, the Bayes-optimal combination is linear in $\bxl$ and $\bxs$, and we derive the optimal coefficient. In Section \ref{sec:num}, we demonstrate the effectiveness of our method via numerical simulation. In Section \ref{sec:proof_mainthm}, we describe the generalized AMP algorithm and use it to prove Theorem \ref{th:W2conv2}. 

\section{Preliminaries}\label{sec:prel}

\subsection{Notation and Definitions}

Given $n\in \mathbb N$, we use the shorthand $[n]=\{1, \ldots, n\}$. Given a vector $\bx$, we denote by $\|\bx\|_2$ its Euclidean norm. Given a matrix $\bA$, we denote by $\|\bA\|_{\rm op}$ its operator norm.

The \emph{empirical distribution} of a vector $\bx = (x_1, \ldots, x_d)^{\sT}$ is given by 
$ \frac{1}{d}\sum_{i=1}^d \delta_{x_i}$, where $\delta_{x_i}$ denotes a Dirac delta mass on $x_i$. Similarly, the empirical joint distribution of vectors $\bx, \bx'\in \reals^d$ is $\frac{1}{d} \sum_{i=1}^d \delta_{(x_i, x'_i)}$.

Given two probability measures $\mu$ (on a space $\mathcal{X}$) and $\nu$ (on a space $\mathcal{Y}$), a coupling $\rho$ of $\mu$ and $\nu$ is a probability distribution on $\mathcal{X}\times\mathcal{Y}$ whose marginals coincide with $\mu$ and $\nu$, respectively. 
For $k\ge 1$, the \emph{Wasserstein-$k$} ($W_k$) distance between two probability measures $\mu$, $\nu$ on $\reals^n$ is defined by
\begin{align}
W_k(\mu,\nu) \equiv \inf_{\rho}\E_{(\bX,\bY)\sim \rho}\big\{\|\bX-\bY\|_2^k\}^{1/k}\, ,\label{eq:WassersteinDef}
\end{align}
where the infimum is over all the couplings of $\mu$ and $\nu$.
A sequence of probability distributions $\nu_n$ on $\reals^m$ {converges in $W_k$} to 
$\nu$, written $\nu_n\towa{k} \nu$, if $W_k(\nu_n,\nu) \rightarrow 0$ as $n \rightarrow \infty$.

\subsection{Generalized Linear Model}

Let $\bx \in \reals^d$ be the signal of interest.  We assume that $\|\bx\|^2_2=d$. The signal is observed via inner products with $n$ sensing vectors $(\ba_i)_{i \in [n]}$, with each $\ba_i  \in \reals^d$  having independent Gaussian entries with mean zero and variance $1/d$.  That is,
\beq
( \ba_i) \ \sim_{i.i.d.} \ \normal(\b0_d, \bI_d/d).
\eeq 

Given $g_i = \< \bx, \ba_i \>$, the measurement vector $\by \in \reals^n$ is obtained by drawing each component independently according to a conditional distribution $p_{Y| {G}}$
\beq \label{eq:defy}
y_i \ \sim \ p_{Y | G}(y_i \mid {g}_i), \quad i \in [n].
\eeq
We stack the measurement vectors as rows to define the $n  \times d$ sensing matrix $\bA$. That is,
\beq 
\bA = [\ba_1, \ldots, \ba_n]^{\sT}.
\eeq
We write $\delta_n = \frac{n}{d}$ for the sampling ratio, and assume that $\delta_n \to \delta \in (0, \infty)$. 
Since the entries of the sensing matrix are $\sim_{i.i.d.} \normal(0, 1/d)$, each row of $\bA$  has norm close to $1$.

\subsection{Linear Estimator} \label{sec:lin_est}

Given the measurements $(y_i)_{i \in [n]}$ and a preprocessing function $\cT_L: \reals \to \reals$, define the $n\times 1$ vector
\beq
\label{eq:zdef}
\bzl = [\cT_L(y_1), \ldots, \cT_L(y_n)]^\sT.
\eeq
Consider the following linear estimator that  averages the data as follows:
\beq\label{eq:xlin_def}
\bxl := \frac{\sqrt{d}}{n} \bA^\sT\bzl  = \frac{\sqrt{d}}{n}\sum_{i=1}^n \cT_L(y_i)\ba_i \,.
\eeq

The following lemma characterizes the asymptotic performance of this simple estimator. The proof is rather straightforward, and we include it in Appendix \ref{sec:proof_lin} for completeness. 

\begin{lemma}\label{lemma:pt_lin}
Let $\bx$ be such that $\|\bx\|^2_2=d$, $\{\ba_i\}_{1\le i\le n}\sim_{i.i.d.}\normal({\b0}_d,\id_d/d)$, and $\by$ be distributed according to \eqref{eq:defy}. Let $n/d\to \delta$, $G\sim \normal(0, 1)$ and define $Z_L=\mathcal T_L(Y)$
for $Y\sim p_{Y|G}(\,\cdot\,|\,G)$ such that $\E\{|GZ_L|\}<\infty$. Let $\bxl$ be the linear estimator defined as in \eqref{eq:xlin_def}. Then, 
as $n \to \infty$,
	\begin{equation}\label{eq:lin_cor}
\| \bxl \|_2^2 \ras \left(\E\left\{G Z_L\right\}\right)^2 + \frac{\E\{Z_L^2\}}{\delta},	\quad \text{ and } \quad
\frac{\langle\bxl, \bx\rangle}{\|{\bxl}\|_2 \, \norm{\bx}_2} \ras\frac{\E\left\{G Z_L\right\}}{\sqrt{\left(\E\left\{G Z_L\right\}\right)^2 + {\E\left\{Z_L^2\right\}}\big/{\delta}}}\,.
	\end{equation}		
\end{lemma}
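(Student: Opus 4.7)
The plan is to decompose each sensing vector $\ba_i$ into components parallel and orthogonal to $\bx$, apply the law of large numbers to the parallel contribution, and use Gaussian conditioning plus $\chi^2$ concentration for the orthogonal contribution.

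First, since $\|\bx\|_2^2=d$, write $\ba_i = (g_i/d)\bx + \ba_i^{\perp}$, where $g_i:=\langle \bx,\ba_i\rangle \sim \normal(0,1)$ and $\ba_i^{\perp} := (\bI_d - \bx\bx^{\sT}/d)\ba_i$ is independent of $g_i$ and distributed as $\normal(\b0_d, (\bI_d-\bx\bx^{\sT}/d)/d)$. Because $y_i$ (and hence $Z_{L,i}:=\cT_L(y_i)$) depends on $\bA$ only through $g_i$ (via the output channel), the collection $\{\ba_i^{\perp}\}_{i\in[n]}$ is independent of $\{(g_i,Z_{L,i})\}_{i\in[n]}$. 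Substituting the decomposition into \eqref{eq:xlin_def} yields
\begin{equation*}
\bxl \;=\; \frac{\alpha_n}{\sqrt{d}}\,\bx \;+\; \bxl^{\perp},
\qquad \alpha_n := \frac{1}{n}\sum_{i=1}^n Z_{L,i}\,g_i,
\qquad \bxl^{\perp} := \frac{\sqrt{d}}{n}\sum_{i=1}^n Z_{L,i}\,\ba_i^{\perp}.
\end{equation*}

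Next I would handle the two pieces separately. The pairs $(g_i,Z_{L,i})$ are i.i.d.\ copies of $(G,Z_L)$ with $\E|G Z_L|<\infty$, so the strong law gives $\alpha_n \ras \E\{G Z_L\}$. For the orthogonal part, condition on $\cF := \sigma(\{g_i,Z_{L,i}\}_{i\in[n]})$: since $\bxl^{\perp}$ is a deterministic (given $\cF$) linear combination of independent Gaussian vectors, it is conditionally Gaussian with covariance $\frac{1}{n^2}\sum_i Z_{L,i}^2\cdot(\bI_d-\bx\bx^{\sT}/d)/1$ (times $d/d$; the factor $d$ in front of the sum cancels the $1/d$ in the covariance of $\ba_i^{\perp}$). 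Thus conditionally on $\cF$,
\begin{equation*}
\|\bxl^{\perp}\|_2^2 \;\eqd\; \frac{1}{n^2}\Big(\sum_{i=1}^n Z_{L,i}^2\Big)\,\chi^2_{d-1}\,,
\qquad \langle \bx,\bxl^{\perp}\rangle = 0.
\end{equation*}
By the SLLN, $\tfrac{1}{n}\sum_{i=1}^n Z_{L,i}^2 \ras \E\{Z_L^2\}$ (the interesting regime being $\E\{Z_L^2\}<\infty$; the infinite case follows by truncation), and by standard $\chi^2$ concentration $\chi^2_{d-1}/d \ras 1$. Combining and using $n/d\to\delta$ gives $\|\bxl^{\perp}\|_2^2 \ras \E\{Z_L^2\}/\delta$.

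Finally, the orthogonality $\langle\bx,\bxl^{\perp}\rangle=0$ and the Pythagorean identity yield
\begin{equation*}
\|\bxl\|_2^2 \;=\; \alpha_n^2 + \|\bxl^{\perp}\|_2^2 \;\ras\; (\E\{G Z_L\})^2 + \E\{Z_L^2\}/\delta,
\qquad
\frac{\langle\bxl,\bx\rangle}{\|\bxl\|_2\,\|\bx\|_2} \;=\; \frac{\alpha_n}{\|\bxl\|_2},
\end{equation*}
and dividing the limits gives the claimed expression. The only minor technical obstacle is the almost-sure control of $\|\bxl^{\perp}\|_2^2$: one must justify passing to the limit in the product of $\tfrac{1}{n}\sum_i Z_{L,i}^2$ and $\chi^2_{d-1}/d$, which is done on a probability-one event intersecting the SLLN event with an event on which $\chi^2_{d-1}/d \to 1$ (or, more cleanly, by applying the conditional Borel--Cantelli lemma after invoking a Gaussian concentration bound to upper-bound $\prob(|\chi^2_{d-1}/d-1|>\eps\mid\cF)$ exponentially in $d$).
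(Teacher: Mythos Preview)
Your proof is correct and follows essentially the same approach as the paper's: the paper first rotates so that $\bx=\sqrt{d}\,\be_1$ and then splits $\bA=[\bu\ \widetilde{\bA}]$, which is exactly your parallel/orthogonal decomposition in fixed coordinates, and then uses the same independence-plus-SLLN argument (writing $\sqrt{d}\,\widetilde{\bA}^{\sT}\bzl \eqd \|\bzl\|_2\,\bh$ with $\bh\sim\normal(0,\bI_{d-1})$ independent of $\bzl$, i.e.\ your conditional $\chi^2_{d-1}$ step). The ``minor technical obstacle'' you flag dissolves once you note that $\bh$ so defined is genuinely independent of $(\bu,\bzl)$ (its conditional law does not depend on them), so the product of the two SLLN limits converges almost surely without needing Borel--Cantelli.
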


\subsection{Spectral Estimator}

Given the measurements $(y_i)_{i \in [n]}$, consider the $n \times n$ diagonal matrix
\beq
\label{eq:Zdef}
\bZ_s = \diag(\cT_s(y_1), \ldots, \cT_s(y_n)),
\eeq
where $\cT_s: \reals \to \reals$ is a preprocessing function. Consider the $d \times d$ matrix
\beq
\label{eq:Dn_def}
\bD_n =   \bA^\sT \bZ_s \bA.
\eeq
Let $G\sim \normal(0, 1)$, $Y\sim p(\cdot \mid G)$, and $Z_s=\mathcal T_s(Y)$. We will make the following assumptions on $Z_s$. 

\textbf{(A1)} $\mathbb P(Z_s=0)<1$. \label{assump:A1}

\textbf{(A2)} $Z_s$ has bounded support and $\tau$ is the supremum of this support, i.e., 
\begin{equation}\label{eq:deftau}
\tau = \inf\{ z : \mathbb P (Z_s\le z) = 1\}.
\end{equation}

\textbf{(A3)} As $\lambda$ approaches $\tau$ from the right, we have
\begin{equation}\label{eq:hplemmaub1}
\lim_{\lambda\to \tau^+}{\mathbb E}\left\{\frac{Z_s}{(\lambda-Z_s)^2}\right\}=\lim_{\lambda\to \tau^+}{\mathbb E}\left\{\frac{Z_s\cdot G^2}{\lambda-Z_s}\right\}=\infty.
\end{equation}

Let us comment on these assumptions. First, the condition \textbf{(A1)} simply avoids the degenerate case in which the measurement vector, after passing through the preprocessing function, is $0$ with high probability. Second, the condition \textbf{(A2)} requires that the support of $Z_s$ is bounded both from above and below. This assumption appears in the papers that have recently analyzed the performance of spectral estimators \cite{lu2017phase,mondelli2017fundamental,luo2019optimal}, and it is also required for Lemma \ref{lemma:pt} below. Requiring that the support of $Z_s$ is bounded from above is rather natural, since the argument relies on the matrix $\bD_n$ having a spectral gap. It is not clear whether having the support of $Z_s$ bounded from both sides (rather than only from above) is necessary, and investigating this aspect is an interesting avenue for future research.   Let us also point out that the condition \textbf{(A2)} is purely technical and rather mild. In fact, if the desired preprocessing function is not bounded\footnote{This is the case e.g. in noiseless phase retrieval, where $y_i=\langle \bx, \ba_i\rangle^2$ and the optimal preprocessing function is $\cT_s^*(y)=1-1/y$.}, then one can construct a sequence of bounded approximations that approach its performance, as done e.g. in \cite{luo2019optimal}. Finally, the condition \textbf{(A3)} essentially
requires that $Z_s$ has sufficient probability mass near the supremum of the support $\tau$. One sufficient condition is that the law of $Z_s$ has a point mass at $\tau$. If this is not the case, the argument in (115)-(118) of \cite{mondelli2017fundamental} shows how to modify the preprocessing function $\cT_s$ so that \emph{(i)} condition \textbf{(A3)} holds, and \emph{(ii)} the spectral estimator suffers no performance loss.

For $\lambda\in (\tau, \infty)$ and $\delta\in (0, \infty)$, define
\begin{equation}\label{eq:defphi2}
\phi(\lambda) = \lambda \cdot {\mathbb E}\left\{\frac{Z_s\cdot G^2}{\lambda-Z_s}\right\},
\end{equation}
and 
\begin{equation}\label{eq:defpsi}
\psi_{\delta}(\lambda) = \lambda\left(\frac{1}{\delta}+{\mathbb E}\left\{\frac{Z_s}{\lambda-Z_s}\right\}\right).
\end{equation}
Note that $\phi(\lambda)$ is a monotone non-increasing function and that $\psi_{\delta}(\lambda)$ is a convex function. Let $\bar{\lambda}_{\delta}$ be the point at which $\psi_{\delta}$ attains its minimum, i.e.,
\begin{equation}\label{eq:minpsi}
\bar{\lambda}_\delta = \arg\min_{\lambda\ge \tau} \psi_{\delta}(\lambda).
\end{equation}
For $\lambda\in (\tau, \infty)$, define also
\begin{equation}\label{eq:defzeta}
\zeta_{\delta}(\lambda) = \psi_{\delta}(\max(\lambda, \bar{\lambda}_\delta)).
\end{equation}

The spectrum of $\bD_n$ exhibits a phase transition as $\delta$ increases. The most basic phenomenon of this kind was unveiled for low-rank perturbations of a Wigner matrix: the well-known BBAP phase transition, first discovered in the physics literature \cite{hoyle2004principal}, and named after the authors of \cite{baik2005phase}. Here, the model for the random matrix $\bD_n$ is quite different from that considered in \cite{hoyle2004principal,baik2005phase}, and the phase transition is formalized by the following result.

\begin{lemma}\label{lemma:pt}
Let $\bx$ be such that $\|\bx\|^2_2=d$, $\{\ba_i\}_{1\le i\le n}\sim_{i.i.d.}\normal({\b0}_d,\id_d/d)$, and $\by$ be distributed according to \eqref{eq:defy}. Let $n/d\to \delta$, $G\sim \normal(0, 1)$ and define $Z_s=\mathcal T_s(Y)$
for $Y\sim p_{Y|G}(\,\cdot\,|\,G)$. Assume that $Z_s$ satisfies the assumptions \textbf{(A1)}-\textbf{(A2)}-\textbf{(A3)}. Let $\bxs$ be the principal eigenvector of the matrix $\bD_n$, defined as in \eqref{eq:Dn_def}. Then, the following results hold:
\begin{enumerate}

	\item The equation
	\begin{equation}\label{eq:uniqsol}
\zeta_{\delta}(\lambda) = \phi(\lambda) 	
	\end{equation}
	admits a unique solution, call it $\lambda_{\delta}^*$, for $\lambda > \tau$. 

	\item As $n \to \infty$,
	\begin{equation}\label{eq:numpred}
\frac{|\langle\bxs, \bx\rangle|^2}{\norm{\bxs}_2^2 \, \norm{\bx}_2^2} \stackrel{\mathclap{\mbox{\footnotesize a.s.}}}{\longrightarrow} \left\{\begin{array}{ll}
\vspace{1em}
0, & \mbox{ if }\psi_{\delta}'(\lambda_{\delta}^*)\le 0,\\
\displaystyle\frac{\psi_{\delta}'(\lambda_{\delta}^*)}{\psi_{\delta}'(\lambda_{\delta}^*)-\phi'(\lambda_{\delta}^*)}, & \mbox{ if }\psi_{\delta}'(\lambda_{\delta}^*)> 0,\\
\end{array}\right.
	\end{equation}		
where $\psi_{\delta}'$ and $\phi'$ denote the derivatives of these two functions.

	\item Let $\lambda_1^{\bD_n}\ge \lambda_2^{\bD_n}$ denote the two largest eigenvalues of $\bD_n$. Then, as $n\to \infty$,
	\begin{equation}
	\begin{split}
	\lambda_1^{\bD_n} &\stackrel{\mathclap{\mbox{\footnotesize a.s.}}}{\longrightarrow}\delta \,\zeta_{\delta}(\lambda_{\delta}^*),\\ 
	\lambda_2^{\bD_n} &\stackrel{\mathclap{\mbox{\footnotesize a.s.}}}{\longrightarrow}\delta\,\zeta_{\delta}(\bar{\lambda}_{\delta}).\\ 
	\end{split}	
	\end{equation}
\end{enumerate}
\end{lemma}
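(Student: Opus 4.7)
The plan is to show that $\zeta_\delta$ is non-decreasing while $\phi$ is strictly decreasing on $(\tau,\infty)$, and then to verify opposite boundary behavior. By convexity of $\psi_\delta$ and the definition of $\bar\lambda_\delta$, $\zeta_\delta$ equals the constant $\psi_\delta(\bar\lambda_\delta)$ on $(\tau,\bar\lambda_\delta]$ and equals $\psi_\delta$ on $[\bar\lambda_\delta,\infty)$, hence is non-decreasing. Differentiating $\lambda z/(\lambda-z)$ in $\lambda$ gives $-z^2/(\lambda-z)^2\le 0$, so $\phi(\lambda)=\E\{Z_s G^2\lambda/(\lambda-Z_s)\}$ is non-increasing, and strictly so by \textbf{(A1)}. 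By \textbf{(A3)}, $\phi(\tau^+)=+\infty$, while $\phi(\lambda)\to\E\{Z_s G^2\}<\infty$ as $\lambda\to\infty$ because $Z_s$ is bounded by \textbf{(A2)}. On the other side, $\zeta_\delta(\tau^+)=\psi_\delta(\bar\lambda_\delta)<\infty$ and $\zeta_\delta(\lambda)\sim \lambda/\delta\to\infty$. The intermediate value theorem then produces a unique root $\lambda_\delta^*\in(\tau,\infty)$ of $\zeta_\delta=\phi$.

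\textbf{Parts 2 and 3 (setup).} The plan is to identify $\bD_n$ as a rank-$2$ deformation of a weighted sample covariance matrix and apply a BBP-type analysis. Let $\bu=\bx/\sqrt d$, $h_i=\ba_i^\sT\bu\sim\normal(0,1/d)$ and $\bv_i=(\bI-\bu\bu^\sT)\ba_i$, so that $\ba_i=h_i\bu+\bv_i$ with $(\bv_i)$ independent of $(h_i)$, and in particular independent of $\bZ_s$ (which is a measurable function of $(g_i)=\sqrt d(h_i)$ together with the noise that generates $y_i\mid g_i$). Expanding $\ba_i\ba_i^\sT$ yields the decomposition
\[
\bD_n=\tilde\bD_n+h_s\,\bu\bu^\sT+\bu\bs^\sT+\bs\bu^\sT,
\]
with $\tilde\bD_n=\sum_i Z_{s,i}\bv_i\bv_i^\sT$ acting on $\bu^\perp$, $h_s=\sum_i Z_{s,i}h_i^2$, and $\bs=\sum_i Z_{s,i}h_i\bv_i\in\bu^\perp$. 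Because $\bs$ and $\bu$ span a two-dimensional subspace, the perturbation $\bD_n-\tilde\bD_n$ has rank at most $2$.

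\textbf{Parts 2 and 3 (BBP analysis).} Conditionally on $\bZ_s$, the background $\tilde\bD_n$ is a weighted Wishart-type matrix with i.i.d.\ Gaussian columns, whose limiting Stieltjes transform satisfies a Marchenko--Pastur-style fixed-point equation. A short computation identifies $\psi_\delta(\lambda)=\lambda(1/\delta+\E\{Z_s/(\lambda-Z_s)\})$ with the functional inverse of this Stieltjes transform, so that the right spectral edge of $\delta^{-1}\tilde\bD_n$ equals $\psi_\delta(\bar\lambda_\delta)=\zeta_\delta(\bar\lambda_\delta)$, yielding the claimed limit of $\lambda_2^{\bD_n}$. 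The rank-$2$ perturbation produces a secular equation for potential outliers which, after plugging in the explicit forms of $h_s$ and $\bs$ and applying a law of large numbers conditional on $\bZ_s$, reduces exactly to $\psi_\delta(\lambda)=\phi(\lambda)$; its unique solution on $(\tau,\infty)$ is $\lambda_\delta^*$. An outlier is present iff $\lambda_\delta^*>\bar\lambda_\delta$, equivalently $\psi_\delta'(\lambda_\delta^*)>0$, in which case $\lambda_1^{\bD_n}\to\delta\psi_\delta(\lambda_\delta^*)=\delta\zeta_\delta(\lambda_\delta^*)$; otherwise $\lambda_1^{\bD_n}$ sticks to the bulk edge $\delta\zeta_\delta(\bar\lambda_\delta)$. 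For the overlap, the standard rank-two BBP identity expresses $|\langle\bxs,\bx\rangle|^2/(\|\bxs\|_2^2\|\bx\|_2^2)$ as the residue of a resolvent-based characteristic function at $\lambda_\delta^*$; differentiating the secular equation $\psi_\delta=\phi$ at $\lambda_\delta^*$ lets one rewrite this residue as $\psi_\delta'(\lambda_\delta^*)/(\psi_\delta'(\lambda_\delta^*)-\phi'(\lambda_\delta^*))$, while in the subcritical regime $\psi_\delta'(\lambda_\delta^*)\le 0$ the outlier is absent and the overlap vanishes.

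\textbf{Main obstacle.} The delicate point is to control the Stieltjes transform and its derivative uniformly as $\lambda\downarrow\tau$, since $\tau$ is also the supremum of the support of $Z_s$: assumption \textbf{(A3)} is precisely what prevents the bulk edge from merging with the would-be outlier and guarantees that the secular equation admits a solution in $(\tau,\infty)$ even without a point mass at $\tau$. A second subtlety is that the diagonal weights $Z_{s,i}$ in $\tilde\bD_n$ depend on $h_i$, so $\tilde\bD_n$ is not genuinely a Wishart matrix with independent weights and columns; this is resolved by conditioning on $(h_i)$, after which the $(\bv_i)$ are still i.i.d.\ Gaussian in $\bu^\perp$. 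An alternative route, which the present paper pursues in Section \ref{sec:proof_mainthm}, is to bypass random-matrix arguments altogether by designing an AMP recursion whose iterates perform a nonlinear power method converging to $\bxs$, and extracting Parts 2--3 from the associated state-evolution recursion.
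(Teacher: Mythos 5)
Your proposal is a valid high-level reconstruction of the random-matrix argument behind this lemma, but it takes the long road: the paper does not prove Lemma~\ref{lemma:pt} from scratch. It simply cites Lemma~2 of \cite{mondelli2017fundamental}, adding two small observations: (i) the assumption in that reference that $\bx$ is uniform on the sphere is never used, since by rotational invariance of the Gaussian sensing vectors one may take $\bx=\sqrt{d}\be_1$ without loss of generality; and (ii) the different normalization of $\bx$ and $\bA$ here produces the extra factor $\delta$ in the eigenvalue limits. The BBP-style decomposition you sketch---writing $\bD_n = \tilde\bD_n + h_s\bu\bu^\sT + \bu\bs^\sT + \bs\bu^\sT$, conditioning on $(h_i)$ so the columns $\bv_i$ are i.i.d.\ Gaussian, and reading off the secular equation from the functional inverse of the bulk Stieltjes transform---is exactly the route taken in the cited reference, so your proof is not a genuinely different approach but rather a summary of the one being invoked. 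Your Part~1 argument (monotonicity of $\phi$ and $\zeta_\delta$ plus the boundary behavior from \textbf{(A2)}--\textbf{(A3)}) is correct; one detail worth making explicit is that \textbf{(A3)} implies $\tau>0$ and hence $\E\{Z_s^2/(\lambda-Z_s)^2\}\to\infty$ as $\lambda\downarrow\tau$, which is what forces $\bar\lambda_\delta>\tau$ and makes $\zeta_\delta(\tau^+)$ finite.

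One inaccuracy in your closing remark: the AMP argument in Section~\ref{sec:proof_mainthm} is not an alternative proof of this lemma. The spectral-gap step (Lemma~\ref{lem:Mn_specgap}) in the proof of Lemma~\ref{lem:evec_emp_dist} itself invokes Lemma~\ref{lemma:pt}, so the state-evolution machinery builds on the random-matrix result rather than bypassing it. What the AMP route adds is the joint empirical distribution of $(\bx,\bxl,\bxs)$ (Theorem~\ref{th:W2conv2}), which the pure BBP analysis of the eigenvector overlap does not directly give.
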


The proof immediately follows from Lemma 2 of \cite{mondelli2017fundamental}. In that statement, it is assumed that $\bx$ is uniformly distributed on the  $d$-dimensional sphere, but this assumption is actually never used. In fact, since the sensing vectors $\{\ba_i\}_{1\le i\le n}$ are i.i.d. standard Gaussian, to prove the result above,  without loss of generality we can assume that $\bx= \sqrt{d}\be_1$, where $\be_1$ is the first element of the canonical basis of $\mathbb R^d$. We also note that  the signal $\bx$ and the measurement matrix $\bA$ differ from Lemma 2 in \cite{mondelli2017fundamental} for a scaling factor. This accounts for an extra term $\delta$ in the expression of the eigenvalues of $\bD_n$.

\section{Main Results}\label{sec:combo}

Throughout this section, we will make the following additional assumptions on the signal $\bx$, the output function of the GLM, and the preprocessing functions $\cT_L$ and $\cT_s$ used for the linear and spectral estimators, respectively.

\textbf{(B1)} Let $\hat{P}_{X,d}$ denote the empirical distribution of $\bx \in \reals^d$, with $\| \bx \|^2_2 =d$. As $d \to \infty$, $\hat{P}_{X,d}$ converges weakly to a distribution $P_X$ such that, for some $k \geq 2$, the following hold: \emph{(i)} $\E_{P_X}\{ \abs{X}^{2k -2} \} < \infty$, \,  and \emph{(ii)} $\lim_{d \to \infty} \E_{\hat{P}_{X,d}}\{ \abs{X}^{2k -2} \} = \E_{P_X}\{ \abs{X}^{2k -2} \}$. Furthermore, $\E\{ \abs{Y}^{2k -2} \} < \infty$, with $Y\sim p_{Y|G}(\,\cdot\,|\,G)$ and $G \sim \normal(0,1)$.
\label{Assump:B1}

\textbf{(B2)} The function $\cT_L: \reals \to \reals$ is Lipschitz and 
$|\E\{ \cT_L(Y) G  \}|>0$; the function $\cT_s: \reals \to \reals$ is bounded and Lipschitz.

The assumption \textbf{(B2)} is mainly technical and rather mild, since one can construct a sequence of approximations of the desired $\cT_L, \cT_s$ that satisfy \textbf{(B2)}.

Lemmas \ref{lemma:pt_lin} and \ref{lemma:pt} in the previous sections derive formulae for the asymptotic correlation of $\bx$ with the linear estimator $\bxl$ and the spectral estimator $\bxs$. For convenience, let us denote these as follows:
\begin{align}\label{eq:rhos_def}
\rho_L:=\frac{\E\left\{Z_L G\right\}}{\sqrt{\left(\E\left\{Z_L G\right\}\right)^2 + {\E\left\{Z_L^2\right\}}\big/{\delta}}},\quad\text{and}\quad
\rho_s:= \sqrt{\frac{\psi_{\delta}'(\lambda_{\delta}^*)}{\psi_{\delta}'(\lambda_{\delta}^*)-\phi'(\lambda_{\delta}^*)}}.
\end{align}
We also denote by $n_L$ the high-dimensional limit of $\|\bxl\|_2$, 
\begin{equation}
    n_L = \sqrt{\left(\E\left\{G Z_L\right\}\right)^2 + {\E\left\{Z_L^2\right\}}\big/{\delta}},
\end{equation}
and we define
\begin{equation}\label{eq:crosscorr}
   q := \displaystyle\sqrt{\frac{\psi_{\delta}'(\lambda_{\delta}^*)}{\psi_{\delta}'(\lambda_{\delta}^*)-\phi'(\lambda_{\delta}^*)}}\cdot\frac{\E\left\{ \frac{Z_L \cdot G}{1-\frac{1}{\lambda_{\delta}^*} Z_s} \right\}}{\sqrt{\left(\E\left\{Z_L\cdot G\right\}\right)^2 + \frac{\E\left\{Z_L^2\right\}}{\delta}}}. 
\end{equation}

\subsection{Joint Distribution of Linear and Spectral Estimators}

The key technical challenge is to compute the limiting joint empirical distribution of the signal $\bx$, the linear estimator $\bxl$, and the spectral estimator $\bxs$. This result is stated in terms of \emph{pseudo-Lipschitz} test functions.
\begin{definition}[Pseudo-Lipschitz test function]\label{def:PLk}
We say that a function $\psi: \reals^m \to \reals$ is pseudo-Lipschitz of order $k \geq 1$, denoted $\psi \in \PL(k)$, if there is a constant $C > 0$  such that 
\beq
\norm{\psi(\bx)-\psi(\by)}_2 \le C (1 + \| \bx \|_2^{k-1} + \| \by \|_2^{k-1} )\norm{\bx - \by}_2,
\label{eq:PL2_prop}
\eeq
for all $\bx,\by \in \mathbb{R}^m$.
\end{definition}
Examples of test functions in $\PL(2)$ with $m=2$ include $\psi(a,b) = (a-b)^2$, $\psi(a,b) =ab$, and $\psi(a,b) = \abs{a-b}$. We note that if $\psi \in \PL(k)$, then $\psi(\bx) \leq C'(1+ \| \bx\|_2^k)$ for some constant $C' >0$. Also note that if $\psi \in \PL(k)$ for $k \geq 2$, then $\psi \in \PL(k')$ for $1 \leq k' \leq (k-1)$.

\begin{theorem}[Joint distribution]\label{th:W2conv2}
Let $\bx$ be such that $\|\bx\|^2_2=d$, $\{\ba_i\}_{1\le i\le n}\sim_{i.i.d.}\normal({\b0}_d,\id_d/d)$, and $\by$ be distributed according to \eqref{eq:defy}. Let $n/d\to \delta$, $G\sim \normal(0, 1)$, $Z_L=\mathcal T_L(Y)$, and $Z_s=\mathcal T_s(Y)$ for $Y\sim p_{Y|G}(\,\cdot\,|\,G)$. Assume that \textbf{(A1)}-\textbf{(A2)}-\textbf{(A3)} and \textbf{(B1)}-\textbf{(B2)} hold. Assume further that $\psi_{\delta}'(\lambda_{\delta}^*)> 0$, and let $\bxs$ be the principal eigenvector of $\bD_n$, defined as in \eqref{eq:Dn_def} with preprocessing function $\cT_s$, with the sign of $\bxs$ chosen so that $\langle\bxs, \bx\rangle\ge 0$. Let $\bxl$ be the linear estimator defined as in \eqref{eq:xlin_def} with preprocessing function $\cT_L$. 

Consider the rescalings $\bx^{\rm s}=\sqrt{d}\bxs$ and $\bx^{\rm L} = \sqrt{d} \bxl/n_L$. Then, the following holds almost surely for any PL(k) function $\psi: \reals^3 \to \reals$:
\begin{align}
& \lim_{d \to \infty} \,   \frac{1}{d} \sum_{i=1}^d \psi(x_i, x^{\rm L}_i,  x^{\rm s}_i) = \E\{ \psi(X, 
\, \rho_L X + W_L
,\,  \rho_s X + W_s)) \} \label{eq:psiX1new}. 
\end{align}
Here, $X \sim P_X$, and $(W_L, W_s)$ are independent of $X$ and jointly Gaussian with zero mean and covariance given by $\E\{W_L^2\}=1-\rho_L^2$, $\E\{W_s^2\}=1-\rho_s^2$ and $\E\{W_L W_s\} = q-\rho_L\rho_s$.
\end{theorem}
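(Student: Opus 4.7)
The plan is to construct a Generalized Approximate Message Passing (GAMP) recursion whose first iterate reproduces the linear estimator $\bxl$ and whose subsequent iterates implement a (shifted) power method on $\bD_n$ that converges to the principal eigenvector $\bxs$. Once such a recursion is in place, the state evolution theorem for non-separable GAMP yields the joint empirical distribution of $(\bx, \bxl, \bv^t)$ at every fixed iteration $t$, and passing to the limit $t\to\infty$ gives \eqref{eq:psiX1new}.

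First I would set up a two-block GAMP with iterates $\bu^t\in\reals^n$, $\bv^t\in\reals^d$ of the form
\begin{equation*}
\bu^t = \bA\, f_t(\bv^{t-1}) - \sb_t\, g_{t-1}(\bu^{t-1},\by), \qquad \bv^t = \bA^\sT g_t(\bu^t,\by) - \sd_t\, f_t(\bv^{t-1}),
\end{equation*}
with Onsager coefficients $\sb_t,\sd_t$ chosen canonically. Initialize so that the very first iterate (up to a known deterministic scaling) is $\bxl$; this amounts to taking $g_0(\bu^0,\by)=\cT_L(\by)$, so that $\bv^0 = \bA^\sT\cT_L(\by)$. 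For $t\ge 1$, take $g_t(u,y)=\cT_s(y)\,u$, which gives $\bA^\sT g_t(\bA f_t(\bv^{t-1}),\by) = \bD_n f_t(\bv^{t-1})$; thus, up to Onsager corrections, the $\bv$-iteration is power iteration on $\bD_n$. Choosing $f_t$ affine (plus an Onsager shift) implements power iteration on a shifted matrix $\bD_n - \alpha\bI$ for a suitable scalar $\alpha$, which is the standard device used in \cite{mondelli2017fundamental,lu2017phase} to cast the spectral method as an AMP.

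Next I would invoke the state evolution theorem for this GAMP: for every fixed $t$ and any $\PL(k)$ function, the joint empirical distribution of $(\bx,\bxl,\bv^t)$ converges almost surely to the law of $(X,\rho_L X + W_L, \mu_t X + \sigma_t W_t)$, where $(W_L, W_t)$ are centered jointly Gaussian random variables independent of $X$ whose variances and cross-covariance are governed by a deterministic scalar recursion in $(\mu_t,\sigma_t^2,\E\{W_L W_t\})$. A direct comparison of this recursion with the definitions \eqref{eq:defphi2}--\eqref{eq:defpsi} identifies its fixed point: $\mu_t\to\rho_s$, $\sigma_t^2\to 1-\rho_s^2$ and $\E\{W_L W_t\}\to q-\rho_L \rho_s$ as $t\to\infty$, exactly matching the Gaussian covariance structure claimed in the theorem.

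The main obstacle is closing the gap between the (normalized) AMP iterate $\bv^t$ and the spectral estimator $\bxs$ in the sense required by $\PL(k)$ test functions. Here I would use Lemma~\ref{lemma:pt}: since $\psi_\delta'(\lambda_\delta^*)>0$, there is an asymptotic spectral gap $\lambda_1^{\bD_n}-\lambda_2^{\bD_n}\to \delta(\zeta_\delta(\lambda_\delta^*)-\zeta_\delta(\bar{\lambda}_\delta))>0$, so the shifted power method converges to $\bxs$ at a geometric rate in $t$ from any initialization with non-vanishing overlap with $\bxs$, which $\bxl$ is shown to provide via a separate argument using $\rho_L>0$, $\rho_s>0$, and the randomness of $\bA$. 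The delicate technical point is exchanging the limits $d\to\infty$ (state evolution) and $t\to\infty$ (power method), which requires uniform-in-$t$ control of the $W_k$-distance between the empirical law of the AMP iterates and its state-evolution limit; the moment bounds \textbf{(B1)} and the boundedness in \textbf{(B2)} are precisely what allow this exchange via standard Lipschitz-stability arguments for $\PL(k)$-test expectations.
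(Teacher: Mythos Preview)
Your overall architecture is right and matches the paper: build a GAMP whose first iterate is $\bxl$ and whose later iterates implement a power method converging to $\bxs$, then read off the joint limit from state evolution. The spectral-gap argument to close the $t\to\infty$ step is also what the paper does. However, there is a genuine gap in the implementation.

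The problem is your choice $g_t(u,y)=\cT_s(y)\,u$. You write that this gives $\bA^\sT g_t(\bA f_t(\bv^{t-1}),\by)=\bD_n f_t(\bv^{t-1})$ ``up to Onsager corrections,'' and that these corrections can be absorbed as a scalar shift $\bD_n-\alpha\bI$. They cannot. With $g_t(u,y)=\cT_s(y)u$ and $f_t$ linear, the $\bu$-update carries an Onsager term proportional to $\bZ_s\bu^{t-1}$; at the fixed point this yields $\bu^\infty=(\bZ_s+c\,\bI_n)^{-1}\bA\bv^\infty$ for a state-evolution constant $c$, and substituting into the $\bv$-update gives that $\bv^\infty$ is an eigenvector of $\bA^\sT\bZ_s(\bZ_s+c\,\bI_n)^{-1}\bA$, \emph{not} of $\bD_n=\bA^\sT\bZ_s\bA$. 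The Onsager correction produces a resolvent-type distortion of the preprocessing function, not a scalar shift of the matrix, and the principal eigenvector of the distorted matrix is in general a different vector from $\bxs$. The paper's key device is precisely to pre-invert this distortion: it runs GAMP with $g_t(u,y)=u\,\cT(y)$ where $\cT(y)=\cT_s(y)/(\lambda_\delta^*-\cT_s(y))$, so that after the Onsager-induced resolvent one recovers $\bZ(\bZ+\bI)^{-1}=\bZ_s/\lambda_\delta^*$ and the effective matrix is $\bD_n/\lambda_\delta^*$, which has the same eigenvectors as $\bD_n$. Without this transformation your AMP converges to the wrong spectral estimator and the theorem as stated does not follow.

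A second, smaller issue: you propose to exchange $d\to\infty$ and $t\to\infty$ via ``uniform-in-$t$ control of the $W_k$-distance.'' State evolution for AMP is not uniform in $t$, and the paper does not attempt this. Instead it (i) shows via state evolution that $\tfrac{1}{d}\|\bv^{t+1}-\bv^t\|^2\to 0$ and $\tfrac{1}{n}\|\bu^{t}-\bu^{t-1}\|^2\to 0$ in the iterated limit, (ii) combines this with the spectral gap of the effective matrix to deduce $\tfrac{1}{d}\|\sqrt{\delta}\,\bv^t-\sqrt{d}\,\bxs\|^2\to 0$, and (iii) uses a $\PL(k)$ triangle-inequality estimate together with separately established $(2k{-}2)$-moment bounds on $\bx,\bv^t,\bxs$ (the last via a rotational-invariance argument) to transfer the state-evolution conclusion from $\bv^t$ to $\bxs$. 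All limits are taken in the order $d\to\infty$ then $t\to\infty$; no uniformity is needed.
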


Note that the order $k$ of the pseudo-Lipschitz test function appearing in \eqref{eq:psiX1new} is the same as the integer $k$ appearing in assumption \textbf{(B1)}. In particular, the order of pseudo-Lipschitz functions for which  \eqref{eq:psiX1new} holds is only constrained by the   fact that the random variables $X$ and $Y$ should have finite moments of order $2k-2$.  The proof of the theorem is given in Section \ref{sec:proof_mainthm}.

\begin{remark}[What happens if either linear or spectral are ineffective?]\label{rmk:special} From Lemma \ref{lemma:pt_lin}, the assumption $|\E\{ \cT_L(Y) G  \}|>0$ contained in \textbf{(B2)} implies that $|\rho_L|>0$. Similarly, from Lemma \ref{lemma:pt}, the assumption $\psi_{\delta}'(\lambda_{\delta}^*)> 0$ implies that $\rho_s>0$. Thus, Theorem \ref{th:W2conv2} assumes that both the linear and the spectral estimators are effective. We note that a similar result also holds  when only one of the two estimators achieves strictly positive correlation. In that case, $\psi: \reals^2 \to \reals$ takes as input the components of the signal $\bx$ and of the estimator that is effective (as well as the corresponding random variables), and a formula analogous to \eqref{eq:psiX1new} holds. The proof of this claim is easily deduced from the argument of Theorem \ref{th:W2conv2}. A simpler proof using a rotational invariance argument along the lines of \eqref{eq:rot_inv_app}-\eqref{eq:rotinvfin} also leads to the same result.
\end{remark}

\begin{remark}[Convergence in $W_k$]\label{rmk:equiv}
The result in Eq. \eqref{eq:psiX1new} is equivalent to the statement that the empirical joint distribution of  $(\bx, \bx^{\rm L}, \bx^{\rm s})$ converges almost surely in $W_k$ distance to the joint law of
\begin{equation}
 (X, 
\, \rho_L X + W_L
,\,  \rho_s X + W_s).   
\end{equation}
This follows from the fact that a sequence of distributions $P_n$ with finite $k$-th moment  converges in $W_k$  to $P$ if and only if   $P_n$ converges weakly to $P$ and
 $\int \| a \|^k \, \de P_n(a)  \to \int \| a \|^k \, \de P(a)$,  see
 \cite[Definition 6.7, Theorem 6.8]{villani2008optimal}. 
\end{remark}

\subsection{Optimal Combination}

Equipped with the result of Theorem \ref{th:W2conv2}, we now reduce the vector problem of estimating $\bx$ given $(\bxl, \bxs)$ to an estimation problem over scalar random variables, i.e., how to optimally estimate $X$ from observations $X_L:=\rho_L X + W_L$ and $X_s:=\rho_s X + W_s$, where $W_L$ and $W_s$ are jointly Gaussian. The Bayes-optimal estimator for this scalar problem is given by
\begin{equation}\label{eq:Bayesopt}
    F_*(x_L, x_s) := \mathbb E\{X \mid X_L=x_L, X_s=x_s\}.
\end{equation}
This is formalized in the following result. 

\begin{lemma}\label{lemma:optf}
Let  $(X,X_L,X_s)$ be jointly distributed random variables such that
\begin{align}\label{eq:meas}
    X_L = \rho_L X + W_L\qquad\text{and}\qquad X_s = \rho_s X + W_s\,,
\end{align}
 where $(W_L,W_s)$ are jointly Gaussian independent of $X$ with zero mean and covariance given by  $\E\{W_L^2\}=1-\rho_L^2$, $\E\{W_s^2\}=1-\rho_s^2$ and $\E\{W_L W_s\} = q-\rho_L\rho_s$. Assume that $\rho_L\neq 0$ or $\rho_s\neq 0$. Let 
\begin{equation}\label{eq:calV}
    \mathcal{V}:=\left\{ f(X_L,X_s)~:~0<\E\{f^2(X_L,X_s)\}<\infty\right\},
\end{equation}
and consider the following optimal estimation problem of $X$ given $X_L$ and $X_s$ over all measurable estimators $f:\R^2\rightarrow\R$ in $\mathcal{V}$:
\begin{align}
\max_{f\in\mathcal{V}} \, \frac{|\E\left\{X\cdot f(X_L,X_s)\right\}|}{\sqrt{\E\{X^2\} \cdot \E\left\{f^2(X_L,X_s)\right\}}}.\label{eq:opt_est}
\end{align}
Then, for any $c\neq 0$, $\hat X = cF_*(X_L, X_s)$ attains the maximum in \eqref{eq:opt_est}, where $F_*$ is defined in \eqref{eq:Bayesopt}.
\end{lemma}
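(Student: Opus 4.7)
The plan is to combine two standard ingredients: the tower property of conditional expectation (to project $X$ onto the $\sigma$-algebra generated by $(X_L,X_s)$) and the Cauchy--Schwarz inequality (to identify the maximizer of the resulting inner product). Because the objective in \eqref{eq:opt_est} is invariant under the scaling $f \mapsto cf$ for any $c \neq 0$, it will be enough to exhibit one optimizer of the form $cF_*$.

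First I would write, for any $f\in\cV$,
\begin{equation}
\E\{X\, f(X_L,X_s)\} \;=\; \E\{\,\E\{X\mid X_L,X_s\}\,f(X_L,X_s)\,\} \;=\; \E\{F_*(X_L,X_s)\,f(X_L,X_s)\},
\end{equation}
where the first equality is the tower property and the second uses the definition \eqref{eq:Bayesopt}. Cauchy--Schwarz in $L^2$ of the joint law of $(X_L,X_s)$ then gives
\begin{equation}
\bigl|\E\{X\, f(X_L,X_s)\}\bigr| \;\le\; \sqrt{\E\{F_*^2(X_L,X_s)\}}\;\sqrt{\E\{f^2(X_L,X_s)\}},
\end{equation}
with equality iff $f = cF_*$ almost surely for some constant $c$. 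Dividing by $\sqrt{\E\{X^2\}\,\E\{f^2\}}$ produces the uniform upper bound $\sqrt{\E\{F_*^2\}/\E\{X^2\}}$ on the objective. To check that this bound is attained by $\hat X = cF_*$ for any $c\neq 0$, one applies the tower property once more to get $\E\{X\cdot cF_*\} = c\,\E\{F_*^2\}$, while $\E\{(cF_*)^2\}=c^2\E\{F_*^2\}$, so the ratio in \eqref{eq:opt_est} equals exactly $\sqrt{\E\{F_*^2\}/\E\{X^2\}}$.

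The only subtlety, and the step I would flag as the main hurdle, is to verify that $F_*$ itself lies in $\cV$, namely that $0<\E\{F_*^2\}<\infty$, so that the Cauchy--Schwarz equality case is actually a legitimate admissible estimator (rather than the trivial $f\equiv 0$). Finiteness is immediate from Jensen's inequality applied conditionally, $\E\{F_*^2\}\le \E\{X^2\}<\infty$, where the finiteness of $\E\{X^2\}$ is inherited from assumption \textbf{(B1)} via the limiting law $P_X$ of $X$ in Theorem \ref{th:W2conv2}. Strict positivity is exactly where the hypothesis ``$\rho_L\neq 0$ or $\rho_s\neq 0$'' enters: if $\rho_L\neq 0$, then $\E\{X X_L\}=\rho_L\E\{X^2\}\neq 0$ because $W_L$ is mean-zero and independent of $X$; combining this with the identity $\E\{X X_L\}=\E\{F_*(X_L,X_s)\,X_L\}$ (again by the tower property) forces $F_*$ to be non-degenerate, and hence $\E\{F_*^2\}>0$. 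The case $\rho_s\neq 0$ is handled identically with $X_s$ in place of $X_L$. Once these admissibility checks are in place, the chain of inequalities above delivers the claim.
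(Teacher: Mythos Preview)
Your proof is correct and follows essentially the same approach as the paper's: tower property followed by Cauchy--Schwarz, with equality in the latter identifying the optimizer. You are in fact more careful than the paper, which omits the verification that $F_*\in\cV$ and does not comment on where the hypothesis ``$\rho_L\neq 0$ or $\rho_s\neq 0$'' enters.
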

\begin{proof}
By the tower property of conditional expectation, for any $f \in \mathcal{V}$ we have
\beq
\begin{split}
\frac{\abs{\E\left\{X\cdot f(X_L,X_s)\right\}}}{\sqrt{ \E\{ f(X_L,X_s)^2 \}}}
= \frac{\abs{\E\left\{ \E\{ X \mid X_L, X_s \} \cdot f(X_L,X_s)\right\}}}{\sqrt{ \E\{ f(X_L,X_s)^2 \}}}
& \leq \sqrt{\E\left\{ \E\{ X \, | \,  X_L, X_s \}^2 \right\}} ,
\end{split}
\label{eq:towerCS}
\eeq
where we have used the Cauchy-Schwarz inequality. Moreover, the inequality in \eqref{eq:towerCS} becomes an equality if and only if 
$f(X_L,X_s) = c \, \E\{ X \, | \, X_L, X_s \}$, for some $c \neq 0$, which proves the result. 
\end{proof}


At this point, we are ready to show how to optimally combine the linear estimator $\bxl$ and the spectral estimator $\bxs$.

\begin{theorem}[Optimal combination]\label{th:optimality}
Consider the setting of Theorem \ref{th:W2conv2}.
Let $F_*$ be defined in \eqref{eq:Bayesopt} and assume that $F_*\in PL(\lfloor k/2\rfloor)$. Then, as $n\to\infty$,
\begin{equation}\label{eq:optres0}
\frac{|\langle F_*(\bx^{\rm L}, \bx^{\rm s}), \bx\rangle|}{\|F_*(\bx^{\rm L}, \bx^{\rm s})\|_2 \|\bx\|_2}\ras \rho_*:=\frac{|\E\left\{X\cdot F_*(X_L,X_s)\right\}|}{\sqrt{\E\left\{F_*^2(X_L,X_s)\right\}}},
\end{equation}
where $F_*$ acts component-wise on $\bx^{\rm L}$ and $\bx^{\rm s}$, i.e., $F_*(\bx^{\rm L}, \bx^{\rm s})=(F_*(x^{\rm L}_1, x^{\rm s}_1), \ldots, F_*(x^{\rm L}_d, x^{\rm s}_d))$. Furthermore, for any $f\in PL(\lfloor k/2\rfloor)$ acting component-wise on $\bx^{\rm L}$ and $\bx^{\rm s}$, almost surely,
\begin{equation}\label{eq:optres}
    \lim_{n\to\infty}\frac{|\langle f(\bx^{\rm L}, \bx^{\rm s}), \bx\rangle|}{\|f(\bx^{\rm L}, \bx^{\rm s})\|_2 \|\bx\|_2}\le\rho_*.
\end{equation}
\end{theorem}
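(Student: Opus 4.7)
The plan is to apply Theorem~\ref{th:W2conv2} twice, with appropriate pseudo-Lipschitz test functions built from $f$, then combine the resulting scalar limits with the optimality statement of Lemma~\ref{lemma:optf}. The scale invariance of the correlation in $f$ and the normalization $\|\bx\|_2^2=d$ reduce the problem to comparing $|\E\{X f(X_L,X_s)\}|/\sqrt{\E\{f^2(X_L,X_s)\}}$ across $f$, which is exactly the scalar problem already solved in Lemma~\ref{lemma:optf}.

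First I would verify the pseudo-Lipschitz regularity needed to feed $f$ into Theorem~\ref{th:W2conv2}. Define $\psi_1(x,u,v):=x\,f(u,v)$ and $\psi_2(x,u,v):=f(u,v)^2$ on $\reals^3$. Since $f\in\PL(\lfloor k/2\rfloor)$ implies the pointwise growth bound $|f(u,v)|\le C(1+\|(u,v)\|_2^{\lfloor k/2\rfloor})$, the splittings
\[
xf(u,v)-x'f(u',v')=(x-x')f(u,v)+x'\bigl(f(u,v)-f(u',v')\bigr),
\]
\[
f(u,v)^2-f(u',v')^2=\bigl(f(u,v)-f(u',v')\bigr)\bigl(f(u,v)+f(u',v')\bigr),
\]
combined with $2\lfloor k/2\rfloor\le k$, yield $\psi_1,\psi_2\in\PL(k)$ by a short but careful bookkeeping of exponents.

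Next, Theorem~\ref{th:W2conv2} gives, almost surely,
\[
\frac{\langle f(\bx^{\rm L},\bx^{\rm s}),\bx\rangle}{d}\;\longrightarrow\;\E\{X\cdot f(X_L,X_s)\},\qquad
\frac{\|f(\bx^{\rm L},\bx^{\rm s})\|_2^2}{d}\;\longrightarrow\;\E\{f^2(X_L,X_s)\},
\]
where $(X,X_L,X_s)=(X,\rho_L X+W_L,\rho_s X+W_s)$ is as in the theorem. Using $\|\bx\|_2^2=d$ and dividing (the case $\E\{f^2(X_L,X_s)\}=0$ makes \eqref{eq:optres} trivial), one obtains, almost surely,
\[
\frac{|\langle f(\bx^{\rm L},\bx^{\rm s}),\bx\rangle|}{\|f(\bx^{\rm L},\bx^{\rm s})\|_2\,\|\bx\|_2}\;\longrightarrow\;\frac{|\E\{X\cdot f(X_L,X_s)\}|}{\sqrt{\E\{f^2(X_L,X_s)\}}}.
\]
Taking $f=F_*$ and observing that $\E\{X^2\}=1$ (by $\hat P_{X,d}\Rightarrow P_X$ together with $\|\bx\|_2^2=d$) yields \eqref{eq:optres0}.

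For the upper bound \eqref{eq:optres}, I would apply Lemma~\ref{lemma:optf} directly to the scalar triple $(X,X_L,X_s)$: the tower-plus-Cauchy--Schwarz argument there shows that $|\E\{X\,g(X_L,X_s)\}|/\sqrt{\E\{X^2\}\E\{g^2(X_L,X_s)\}}$ is maximized exactly by nonzero multiples of $F_*$, with maximum value $\rho_*$ (using $\E\{X^2\}=1$). Comparing this with the almost-sure limit computed above gives \eqref{eq:optres}. The main obstacle is the first step—the pseudo-Lipschitz bookkeeping that propagates the assumption $f\in\PL(\lfloor k/2\rfloor)$ into $\psi_1,\psi_2\in\PL(k)$ so that Theorem~\ref{th:W2conv2} applies; everything else is an immediate consequence of Theorem~\ref{th:W2conv2} and Lemma~\ref{lemma:optf}.
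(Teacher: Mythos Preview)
Your proposal is correct and follows essentially the same approach as the paper: verify that $(a,b,c)\mapsto a\,f(b,c)$ and $(a,b,c)\mapsto f(b,c)^2$ are $\PL(k)$ when $f\in\PL(\lfloor k/2\rfloor)$, apply Theorem~\ref{th:W2conv2} to pass to the scalar limit, specialize to $f=F_*$ for \eqref{eq:optres0}, and invoke Lemma~\ref{lemma:optf} for \eqref{eq:optres}. Your write-up is in fact slightly more detailed than the paper's, which merely asserts that the $\PL(k)$ property of $\psi_1,\psi_2$ ``can immediately be verified'' and does not spell out the splitting argument or the $\E\{X^2\}=1$ justification.
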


\begin{proof}[Proof of Theorem \ref{th:optimality}]
If $f(b, c)\in$ PL($\lfloor k/2\rfloor$), then one can immediately verify that \emph{(i)} $\psi(a, b, c)=af(b, c)\in$ PL($k$), and \emph{(ii)} $\psi(a, b, c)=(f(b, c))^2\in$ PL($k$). Thus, by applying Theorem \ref{th:W2conv2}, we have that, for any $f(b, c)\in$ PL($\lfloor k/2\rfloor$), as $n\to\infty$,
\begin{equation}
\begin{split}
    \frac{|\langle f(\bx^{\rm L}, \bx^{\rm s}), \bx\rangle|}{\|f(\bx^{\rm L}, \bx^{\rm s})\|_2 \|\bx\|_2} &\ras \frac{\mathbb E\{Xf(\rho_LX+W_L, \rho_sX+W_s)\}}{\sqrt{\mathbb E\{f^2(\rho_LX+W_L, \rho_sX+W_s)\}}}.
\end{split}
\end{equation}
By taking $f=F_*$, the result \eqref{eq:optres0} immediately follows. By applying Lemma \ref{lemma:optf}, \eqref{eq:optres} also follows and the proof is complete. 
\end{proof}

The integer $k$ appearing in assumption \textbf{(B1)} is the same one defining the order $\lfloor k/2\rfloor$ of the pseudo-Lipschitz functions $F_*$ and $f$ in \eqref{eq:optres0}-\eqref{eq:optres}. 

\begin{remark}[What happens if either linear or spectral are ineffective?]\label{rmk:special2} Theorem \ref{th:optimality} considers the same setting of Theorem \ref{th:W2conv2}, and therefore it assumes that $\rho_L\neq 0$ and $\rho_s\neq 0$. The results in \eqref{eq:optres0}-\eqref{eq:optres} still hold if either $\rho_L= 0$ or $\rho_s= 0$ (and even in the case $\rho_L= \rho_s= 0$). For the sake of simplicity, suppose that $\rho_L= 0$ (the argument for $\rho_s= 0$ is analogous). Then, $X_L=W_L$ is independent of $X$ and therefore the conditional expectation in \eqref{eq:Bayesopt} does not depend on $x_L$. Recall from Remark \ref{rmk:special} that if $\rho_L=0$, then a formula analogous to \eqref{eq:psiX1new} holds where $\psi: \reals^2 \to \reals$ takes as input the components of $\bx$ and $\bx^{\rm s}$ on the LHS, and the corresponding random variables on the RHS. Hence, \eqref{eq:optres0}-\eqref{eq:optres} are obtained by following the same argument in the proof of Theorem \ref{th:optimality}.

We highlight that, even if one of the two estimators is ineffective, the proposed optimal combination can still improve on the performance of the other one. This is due to the fact that the function $F_*$ takes advantage of the knowledge of the signal prior. We showcase an example of this behavior for a binary prior in Figure \ref{fig:Bayes} discussed in Section \ref{sec:num_bayes}. We also note that if the signal prior is Gaussian, then no improvement is possible when one of the two estimators has vanishing correlation with the signal, see Figures \ref{fig:relu}-\ref{fig:Hermite} in Section \ref{subsec:numoptlin}. In fact, as detailed in Section \ref{sec:opt_lin}, in the Gaussian case $F_*(\bx^{\rm L}, \bx^{\rm s})$ is a linear combination of $\bx^{\rm L}$ and $\bx^{\rm s}$. Thus, if $\rho_L=0$ (resp. $\rho_s=0$), then $F_*(\bx^{\rm L}, \bx^{\rm s})$ is aligned with $\bx^{\rm s}$ (resp. $\bx^{\rm L}$).
\end{remark}

\begin{remark}[Sign of $\bxs$] The spectral estimator $\bxs$ is defined up to a change of sign, since it is the principal eigenvector of a suitable matrix. In Theorem \ref{th:W2conv2} and \ref{th:optimality}, we pick the sign of $\bxs$ such that $\langle \bxs, \bx\rangle\ge 0$. In practice, there is a simple way to resolve the sign ambiguity: one can match the sign of $q$ as defined in \eqref{eq:crosscorr} with the sign of the scalar product $\langle \bxl, \bxs\rangle$ (see also \eqref{eq:ascorrlim}). 
\end{remark}

\begin{remark}[Sufficient condition for pseudo-Lipschitz $F_*$]
The assumption that the Bayes-optimal estimator $F_*$ in \eqref{eq:Bayesopt} is pseudo-Lipschitz is fairly mild. In fact, $F_*$ is Lipschitz if either of the following two conditions on $X$ hold \cite[Lemma 3.8]{feng2021unifying}: \emph{(i)} $X$ has a log-concave distribution, or   \emph{(ii)} there exist independent random variables $U,V$ such that $U$ is Gaussian, $V$ is compactly supported and $X \stackrel{\rm d}{=} U+V$.
\end{remark}

\begin{remark}[Non-separable combinations]
The optimality of $F_*$ in Theorem \ref{th:optimality} can be extended to a class of  combined estimators of the form $f_d(\bx^{\rm L}, \bx^{\rm s})$, where $f_d: \reals^d \times \reals^d \to \reals^d$ may not act component-wise  on $(\bx^{\rm L}, \bx^{\rm s})$. Given $f_d$, we define the function $S_{f_d}(\bx^{\rm L}, \bx^{\rm s})= \frac{1}{d}\|f_d(\bx^{\rm L}, \bx^{\rm s}) \|^2$. Let $f_d: \reals^d \to \reals^d$ be any  sequence of functions (indexed by $d$) such that $S_{f_d}: \reals^d \times \reals^d \to \reals$ is \emph{uniformly} pseudo-Lipschitz of order $k$. That is, for each $d$, the property \eqref{eq:PL2_prop} holds for $S_{f_d}$ with a pseudo-Lipschitz constant $C$ that does not depend on $d$. Then, the state evolution result in \cite[Theorem 1]{berthier2020state} for non-separable test functions implies that 
\beq
\label{eq:result_nonsep}
\lim_{d \to \infty} \prob\left( \frac{|\langle f_d(\bx^{\rm L}, \bx^{\rm s}), \bx\rangle|}{\|f_d(\bx^{\rm L}, \bx^{\rm s})\|_2 \|\bx\|_2} \leq \rho_*\right) =1.
\eeq
The result above is in terms of convergence in probability, while the limiting statement in \eqref{eq:optres} holds almost surely.  This is because the state evolution result  for AMP with non-separable functions \cite[Theorem 1]{berthier2020state}  is obtained in terms of convergence in probability.
\end{remark}

\subsection{A Special Case: Optimal Linear Combination}\label{sec:opt_lin}

Theorem \ref{th:optimality} shows that the optimal way to combine $\bxl$ and $\bxs$ is via the Bayes-optimal estimator $F_*$ for the corresponding scalar problem. If the signal prior $X$ is standard Gaussian, then $F_*(\bx^{\rm L}, \bx^{\rm s})$ is a linear combination of $\bx^{\rm L}$ and $\bx^{\rm s}$. In this section, we provide closed-form expressions for the performance of such optimal \emph{linear} combination. 

For convenience, let us denote the normalized linear estimator as $\bxln$, i.e., $\bxln=\bxl/\|\bxl\|_2$. (Recall that the spectral estimator $\bxs$ is already normalized, i.e., $\|\bxs\|_2=1$.) We consider an estimator $\bxc(\theta)$ of $\bx$, parameterized by $\theta\in\R\cup\{\pm\infty\}$, defined as follows:
\begin{align}\label{eq:combo_def}
\bxc(\theta):= \theta \bxln + \bxs,\quad \theta\in\R\cup\{\pm\infty\},
\end{align}
where we use the convention, $\bxc(\theta)=\pm\bxln$ for $\theta=\pm\infty$.

Let us now compute the asymptotic performance of the proposed estimator $\bxc(\theta)$ in \eqref{eq:combo_def}. 
Specifically, using Lemmas \ref{lemma:pt_lin} and \ref{lemma:pt}, it follows immediately that 
\beq
\left\< \bxc(\theta), \, \frac{\bx}{\norm{\bx}_2} \right\> \ras \theta \cdot\rho_L + \rho_s \,.
\label{eq:xcs_corr}
\eeq
In order to conclude with the limit of the normalized correlation $\frac{\inp{\bxc(\theta)}{\x}}{\norm{\bxc(\theta)}_2\norm{\bx}_2}$, it still remains to compute the magnitude of the new estimator:
\beq
\norm{\bxc(\theta)}_2^2 = \theta^2 \|\bxln\|_2^2 + \norm{\bxs}_2^2 + 2 \theta \inp{\bxln}{\bxs} = \theta^2+1+ 2 \theta \inp{\bxln}{\bxs}.
\label{eq:xc_mag}
\eeq

This is possible thanks to the following result, which gives the correlation between the linear and the spectral estimator as well as the asymptotic performance of the linear combination $\bxc(\theta)$.
\begin{corollary}[Performance of linear combination]\label{cor:crosscor}
Consider the setting of Theorem \ref{th:W2conv2}.
Then, as $n\rightarrow\infty$, 
	\begin{equation}\label{eq:ascorrlim}
\frac{\langle\bxl, \bxs\rangle}{\|{\bxl}\|_2 \, \norm{\bxs}_2} \ras q,
\end{equation}
where $q$ is given by \eqref{eq:crosscorr}.
 Furthermore, let $\bxc(\theta)$ be the estimator defined in \eqref{eq:combo_def} parameterized by $\theta\in\R$.  Then, as $n\rightarrow\infty$,
\begin{align}
\frac{\inp{\bxc(\theta)}{\bx}}{\norm{\bxc(\theta)}_2\norm{\bx}_2} \ras \frac{\theta\rho_L + {\rho_s}}{\sqrt{1+\theta^2+2\theta q}}=:F(\theta).
\label{eq:Ftheta_def}
\end{align}
\end{corollary}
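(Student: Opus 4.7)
The plan is to reduce both claims to applications of Theorem \ref{th:W2conv2} with simple PL($2$) test functions, and then combine with the norm convergence statements from Lemmas \ref{lemma:pt_lin} and \ref{lemma:pt}. The key observation is that under the rescalings $\bx^{\rm s}=\sqrt{d}\,\bxs$ and $\bx^{\rm L}=\sqrt{d}\,\bxl/n_L$, inner products between the estimators become normalized empirical sums of the form $\frac{1}{d}\sum_i \psi(x_i, x_i^{\rm L}, x_i^{\rm s})$ for $\psi$ a degree-$2$ polynomial, and Theorem \ref{th:W2conv2} immediately delivers their almost-sure limits.

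First, for the cross-correlation \eqref{eq:ascorrlim}, I would write
\begin{equation*}
\frac{\langle\bxl, \bxs\rangle}{\|\bxl\|_2\,\|\bxs\|_2} = \frac{n_L}{d\,\|\bxl\|_2}\cdot\frac{1}{d}\sum_{i=1}^d x_i^{\rm L}\,x_i^{\rm s},
\end{equation*}
since $\|\bxs\|_2=1$. Applying Theorem \ref{th:W2conv2} to the PL($2$) test function $\psi(a,b,c)=bc$ yields
\begin{equation*}
\frac{1}{d}\sum_{i=1}^d x_i^{\rm L}\,x_i^{\rm s} \;\ras\; \E\{(\rho_L X+W_L)(\rho_s X+W_s)\} = \rho_L\rho_s\,\E\{X^2\}+\E\{W_L W_s\} = q,
\end{equation*}
using $\E\{X^2\}=1$ (from $\|\bx\|_2^2=d$ together with (B1)) and the stated covariance $\E\{W_L W_s\}=q-\rho_L\rho_s$. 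Combined with $\|\bxl\|_2\ras n_L$ from Lemma \ref{lemma:pt_lin} and the continuous mapping theorem (the denominator limit $n_L$ is strictly positive by $|\E\{Z_L G\}|>0$ in (B2)), this gives \eqref{eq:ascorrlim}.

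Second, for the combined estimator, I would simply substitute the pieces already assembled in \eqref{eq:xcs_corr}-\eqref{eq:xc_mag}. The numerator $\langle\bxc(\theta),\bx\rangle/\|\bx\|_2$ splits linearly, and each summand is controlled by Lemmas \ref{lemma:pt_lin}-\ref{lemma:pt}, giving the limit $\theta\rho_L+\rho_s$. For the denominator, \eqref{eq:xc_mag} together with the cross-correlation limit $\langle\bxln,\bxs\rangle\ras q$ just established yields $\|\bxc(\theta)\|_2^2\ras 1+\theta^2+2\theta q$. Taking the ratio and using continuous mapping gives $F(\theta)$ as claimed; the denominator is positive for all $\theta\in\R$ because $|q|\le 1$ by Cauchy-Schwarz applied to $\langle\bxln,\bxs\rangle$, so $1+\theta^2+2\theta q\ge (1-|\theta|)^2\ge 0$, with equality only at isolated points of measure zero that can be excluded via the strict inequality $|q|<1$ (which follows from the joint Gaussian structure with non-degenerate conditional variance).

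There is no real obstacle here: the content of the result is already in Theorem \ref{th:W2conv2}, and the only task is bookkeeping the rescalings and verifying that $\psi(a,b,c)=bc$ belongs to the pseudo-Lipschitz class allowed by the assumption $k\ge 2$ in (B1). The mildest care is needed to confirm that $\|\bx^{\rm L}\|_2^2/d$ and $\|\bx^{\rm s}\|_2^2/d$ themselves converge (by applying Theorem \ref{th:W2conv2} with $\psi(a,b,c)=b^2$ and $\psi(a,b,c)=c^2$), ensuring consistency of the rescalings with the known norm limits.
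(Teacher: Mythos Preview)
Your proposal is correct and follows essentially the same route as the paper: apply Theorem \ref{th:W2conv2} with the PL(2) test function $\psi(a,b,c)=bc$ together with $\|\bxl\|_2\ras n_L$ to obtain \eqref{eq:ascorrlim}, and then plug into \eqref{eq:xcs_corr}-\eqref{eq:xc_mag} to get \eqref{eq:Ftheta_def}. One bookkeeping slip: your first display should read $\frac{n_L}{\|\bxl\|_2}\cdot\frac{1}{d}\sum_i x_i^{\rm L}x_i^{\rm s}$ rather than $\frac{n_L}{d\,\|\bxl\|_2}\cdot\frac{1}{d}\sum_i x_i^{\rm L}x_i^{\rm s}$ (the extra $1/d$ would send the limit to zero).
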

\begin{proof}
The limit of the correlation \eqref{eq:ascorrlim} follows by applying Theorem \ref{th:W2conv2} with the PL(2) function $\psi(a, b, c)=bc$ and using that $\|\bxl\|_2 \ras n_L$. The result \eqref{eq:Ftheta_def} then follows from \eqref{eq:xc_mag} and \eqref{eq:ascorrlim}, recalling that 
$\inp{\bxln}{\bxs} = \frac{\langle\bxl, \bxs\rangle}{\|{\bxl}\|_2 \, \norm{\bxs}_2}$ and 
$\| \bxs \|_2=1$.
\end{proof}

Using \eqref{eq:rhos_def}, the parameter $q$ can be alternatively expressed in terms of $\rho_L$ and $\rho_s$ in the following compact form:
\begin{align}\label{eq:q_def_compact}
q = \rho_L \cdot \rho_s \cdot {\E\left\{ \frac{Z_L  G}{1-\frac{1}{\lambda_\delta^*} Z_s} \right\}}\Big/{\E\left\{Z_L G\right\}}\,.
\end{align}
Observe also that  $F(0)=\rho_s$ and $F(\infty):=\lim_{\theta\rightarrow\pm\infty}F(\theta)=\pm\rho_L$.

Using the characterization of Corollary \ref{cor:crosscor}, we can compute the combination coefficient $\theta_*$ that leads to the asymptotically optimal linear combination of the form \eqref{eq:combo_def}. 

\begin{corollary}[Optimal linear combination]\label{lem:fopt}
Recall the notation in Corollary \ref{cor:crosscor}
and define 
\begin{align}\label{eq:theta_star_def}
\theta_*=\frac{\rho_L-\rho_s q}{\rho_s-\rho_L q}\in\R\cup\{\pm\infty\}.
\end{align}
Assume $|q|<1$. Then, for all $\theta\in\R\cup\{\pm\infty\}$, it holds that
\begin{align}\label{eq:rho_star_def}
|F(\theta)| \leq F(\theta_*)= \sqrt{\frac{\rho_s^2+\rho_L^2-2q\rho_L\rho_s}{1-q^2}}\,.
\end{align}
\end{corollary}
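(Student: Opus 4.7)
The plan is to treat Corollary \ref{lem:fopt} as a single–variable calculus optimization of the explicit function $F(\theta)=(\theta\rho_L+\rho_s)/\sqrt{1+\theta^2+2\theta q}$ supplied by Corollary \ref{cor:crosscor}. Since $|F(\theta)|$ involves a square root in the denominator, I would first reduce the task to maximizing $g(\theta):=F(\theta)^2=(\theta\rho_L+\rho_s)^2/(1+\theta^2+2\theta q)$ over $\theta\in\R$, which sidesteps sign bookkeeping. The assumption $|q|<1$ together with the identity $1+\theta^2+2\theta q=(\theta+q)^2+(1-q^2)$ ensures the denominator stays strictly positive for every real $\theta$, so $g$ is smooth on $\R$ and the supremum is attained either at an interior critical point or in the limits $\theta\to\pm\infty$.

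Next, I would differentiate $g$ via the quotient rule. A short algebraic simplification of the numerator of $g'(\theta)$ factors as a product of two linear expressions in $\theta$, namely $2(\theta\rho_L+\rho_s)\bigl[(\rho_L-\rho_s q)-\theta(\rho_s-\rho_L q)\bigr]$. Setting $g'=0$ yields exactly two critical points: $\theta=-\rho_s/\rho_L$, which trivially gives $g=0$ and is a minimum, and the candidate $\theta=\theta_*$ defined in \eqref{eq:theta_star_def}.

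Then, I would substitute $\theta_*$ back into $F$ to establish the closed-form value in \eqref{eq:rho_star_def}. The clean identity $\theta_*+q=\rho_L(1-q^2)/(\rho_s-\rho_L q)$ makes $1+\theta_*^2+2\theta_* q=(\theta_*+q)^2+(1-q^2)$ simplify to $(1-q^2)(\rho_L^2+\rho_s^2-2\rho_L\rho_s q)/(\rho_s-\rho_L q)^2$, while a parallel manipulation shows that the numerator $\theta_*\rho_L+\rho_s$ equals $(\rho_L^2+\rho_s^2-2\rho_L\rho_s q)/(\rho_s-\rho_L q)$. Taking the ratio and the square root produces exactly the claimed expression for $F(\theta_*)$.

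Finally, to upgrade ``critical point of $g$'' to ``global maximizer'', I would compare with the boundary behavior $\lim_{\theta\to\pm\infty}g(\theta)=\rho_L^2$ (consistent with $F(\pm\infty)=\pm\rho_L$ noted after Corollary \ref{cor:crosscor}). The required inequality $g(\theta_*)\ge\rho_L^2$ reduces, after clearing $1-q^2$, to the identity $(\rho_s-\rho_L q)^2\ge 0$, which holds unconditionally. Combined with the fact that the only other interior critical point gives $g=0<g(\theta_*)$, this proves $|F(\theta)|\le F(\theta_*)$ for all $\theta\in\R\cup\{\pm\infty\}$, with equality at $\theta_*$. I do not anticipate any conceptual obstacle; the proof is purely computational, and the only step requiring care is the algebraic simplification at $\theta_*$, for which the intermediate identity $\theta_*+q=\rho_L(1-q^2)/(\rho_s-\rho_L q)$ is the key to keeping the manipulation compact.
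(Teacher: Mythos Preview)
Your proposal is correct and follows essentially the same single-variable calculus approach as the paper (Appendix~\ref{app:fopt}): the paper differentiates $F$ directly and splits into three cases according to the sign of $\rho_s-\rho_L q$, while you work with $g=F^2$ to sidestep that sign bookkeeping. The only minor omission is the degenerate subcase $\rho_s=\rho_L q$ (i.e., $\theta_*=\pm\infty$), which the paper handles separately as its Case~1 but which your boundary comparison $g(\pm\infty)=\rho_L^2$ already covers once noted.
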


The proof of Corollary \ref{cor:crosscor} is deferred to Appendix \ref{app:fopt}. Let us now comment on the assumption $|q|<1$. If $\bxl$ and $\bxs$ are perfectly correlated (i.e., $|q|=1$), then it is clear that the combined estimator $\bxc(\theta)$ cannot improve the performance for any value of $\theta$. On the contrary, when $|q|<1$,  Corollary \ref{lem:fopt} characterizes when the linear combination $\bxc$ strictly improves upon the performance of the individual estimators $\bxl$ and $\bxs$. Specifically, by denoting
\begin{align}\label{eq:rhomax_def}
\rho_{\max} := \max\{|\rho_L|,\rho_s\}\qquad\text{and}\qquad
p := \begin{cases}
\rho_s/\rho_L, &~\text{if}~ |\rho_L|\geq \rho_s, \\
\rho_L/\rho_s, &~\text{else},
\end{cases} 
\end{align}
such that the right-hand side of \eqref{eq:rho_star_def} becomes
\begin{align}\label{eq:rho_*}
F(\theta_*) = \rho_{\max}\sqrt{1+\frac{(p-q)^2}{1-q^2}}\,,
\end{align}
it can be readily checked that $F(\theta_*)>\rho_{\max}$ provided that $|q|<1$ and $q\neq p$.

\begin{remark}[Optimization of preprocessing functions]
The linear estimator $\bxl$ and the spectral estimator $\bxs$ use the preprocessing functions $\Tc_L$ (cf. \eqref{eq:deflin}) and $\Tc_s$ (cf. \eqref{eq:defspect}), respectively. The choice of these functions naturally affects the performance of the two estimators, as well as, that of the combined estimator $F_*(\bx^{\rm L}, \bx^{\rm s})$. Lemmas \ref{lemma:pt_lin}-\ref{lemma:pt}, Theorem \ref{th:optimality} and Corollary \ref{lem:fopt} derive sharp asymptotics on the estimation performance that hold for \emph{any} choice of the preprocessing functions $\Tc_L$ and $\Tc_s$ satisfying our technical assumptions \textbf{(A1)}, \textbf{(A2)}, \textbf{(A3)}, \textbf{(B2)}. In Appendix \ref{app:optpre}, we briefly discuss how these results can be used to yield optimal such choices. Specifically, the optimal preprocessing function that maximizes the normalized correlation of the spectral estimator is derived in \cite{luo2019optimal}, see Appendix \ref{sec:spec_opt}. The optimal choice for the linear estimator is much easier to obtain and we outline the process in Appendix \ref{sec:lin_opt}. In Appendix \ref{sec:lin_vs_spec}, we combine these two results to derive a precise characterization of the sampling regimes in which the linear estimator surpasses the spectral estimator, and vice-versa. Finally, in Appendix \ref{sec:comb_opt}, we use Corollary \ref{lem:fopt} to cast the problem of optimally choosing $\Tc_L$ and $\Tc_s$ to maximize the correlation of the combined estimator $\bxc(\theta_*)$ as a function optimization problem. Solving the latter is beyond the scope of this paper, and it represents an intriguing future research direction.
\end{remark}
\section{Numerical Simulations}\label{sec:num}

This section validates our theory via numerical experiments and provides further insights on the benefits of the proposed combined estimator. 

First, we consider a setting in which the signal $\bx$ is uniformly distributed on the $d$-dimensional sphere. In this case, the limiting empirical distribution $P_X$ is Gaussian. Thus, the Bayes-optimal estimator $F_*(\bx^{\rm L},\bx^{\rm s})$ in \eqref{eq:Bayesopt} is linear and is given by $\hat\bx^{\rm c}(\theta_*)$, where $\theta_*$ is determined in Corollary \ref{lem:fopt}. For this scenario, we study in Figures \ref{fig:relu}, \ref{fig:eg2} and \ref{fig:Hermite}  the performance gain of $\hat\bx^{\rm c}(\theta_*)$ for three different measurement models and for various noise levels. 

Second, in Figure \ref{fig:Bayes} we consider a setting in which the the entries of $\bx$ are binary, drawn i.i.d. from the set $\{1, -1 \}$, such that the  empirical distribution  is of the form $P_X(1) = 1 - P_X(-1) = \sfp$, for some $\sfp \in (0,1)$. For this case, we compute the Bayes-optimal estimator $F_*(\bx_L,\bx_s)$ and compare it with the optimal linear combination $\hat\bx^c(\theta_*)$ for various choices of output functions.



\subsection{Optimal Linear Combination}\label{subsec:numoptlin}

In Figure \ref{fig:relu}, we fix the input-output relation as $y_i = f(\inp{\ba_i}{\bx}) + \sigma z_i$, with $z_i\sim_{i.i.d.} \mathcal{N}(0,1)$ and $f(x)=\max\{x,-0.4x\}$ (cf. Figure \ref{fig:relu}(a)), and we investigate the performance gain of the proposed combined estimator for different values of the noise variance $\sigma^2$. Here, $\bx$ is generated via a standard Gaussian vector which is normalized such that $\|\bx\|_2=\sqrt{d}$. 
Also, $\ba_i \ \sim_{i.i.d.} \ \normal(\b0_d, \bI_d/d)$ and the pre-processing functions are chosen: $\cT_L(y) = y$ and $\cT_s(y) = \min\{y,3.5\}$.  Note that the empirical distribution of $\bx$ tends to a standard Gaussian distribution in the high-dimensional limit. Thus, following Section \ref{sec:opt_lin}, the optimal combined estimator is (asymptotically) linear and is given by \eqref{eq:combo_def} for $\theta=\theta_*$ chosen as in \eqref{eq:theta_star_def}. 
In Figure \ref{fig:relu}(b), we plot the percentage improvement $\frac{\rho_*-\rho_{\max}}{\rho_{\max}}\times 100$ as a function of the sampling ratio $\delta$, for three values of the noise variance $\sigma^2=0,0.4$ and $0.8$. Here, $\rho_*=F(\theta_*)$ defined in \eqref{eq:rho_star_def} and $\rho_{\max}=\max\{|\rho_L|,\rho_s\}.$ We observe that, as $\sigma^2$ increases, larger values of the sampling ratio $\delta$ are needed for the combined estimator to improve upon the linear and spectral estimators. However, for sufficiently large $\delta$, the benefit of the combined estimator is more pronounced for larger values of the noise variance. For instance, for $\sigma^2=0.8$ and large values of $\delta$, the percentage gain is larger than $10\%$. In Figure \ref{fig:relu}(c), we fix $\sigma^2=0.4$ and plot the correlations $\rho_L, \rho_s$ and $\rho_*$. The solid lines correspond to the theoretical predictions obtained by Lemma \ref{lemma:pt_lin}, Lemma \ref{lemma:pt} and Corollary \ref{lem:fopt}, respectively. The theoretical predictions are compared against the results of Monte Carlo simulations. For the simulations, we used $d=1000$ and averaged over $15$ independent problem realizations. In Figure \ref{fig:relu}(c)(Middle), we also plot the limit $q$ (cf. \eqref{eq:crosscorr}) of the cross-correlation $\frac{\inp{\bx^{\rm L}}{\bx^{\rm s}}}{\| \bx^{\rm L} \| \| \bx^{\rm s} \|}$  and the ratio $p$ in \eqref{eq:rhomax_def}. The corresponding values of the optimal combination coefficient $\theta_*$ are plotted in Figure \ref{fig:relu}(c)(Right). For values of $\delta$ smaller than the threshold for weak-recovery of the spectral method (where $\rho_s=0$), we observe that $\rho_*=\rho_L$ and $\theta_*=\infty$. However, for larger values of $\delta$, the value of the optimal coefficient $\theta_*$ is non trivial. Finally, Figure \ref{fig:relu}(d) shows the same plots as in Figure \ref{fig:relu}(c), but for $\sigma^2=0.8$.

\begin{figure}[p]
    \hspace{30pt}\subfigure[
    ]{
    \centering
    \includegraphics[scale=.33]{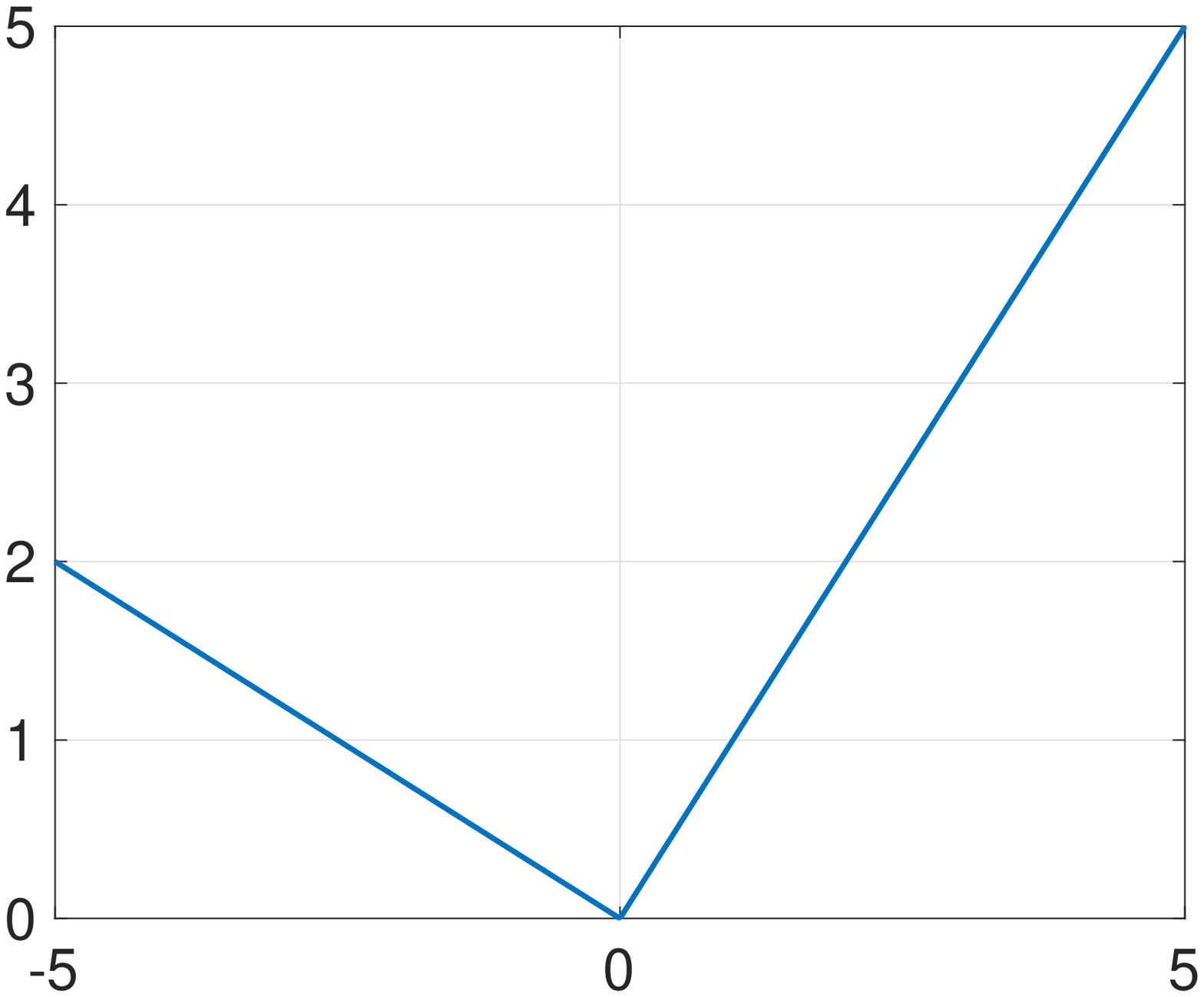}
    }
    \subfigure[]{
    \centering
    \includegraphics[scale=.37]{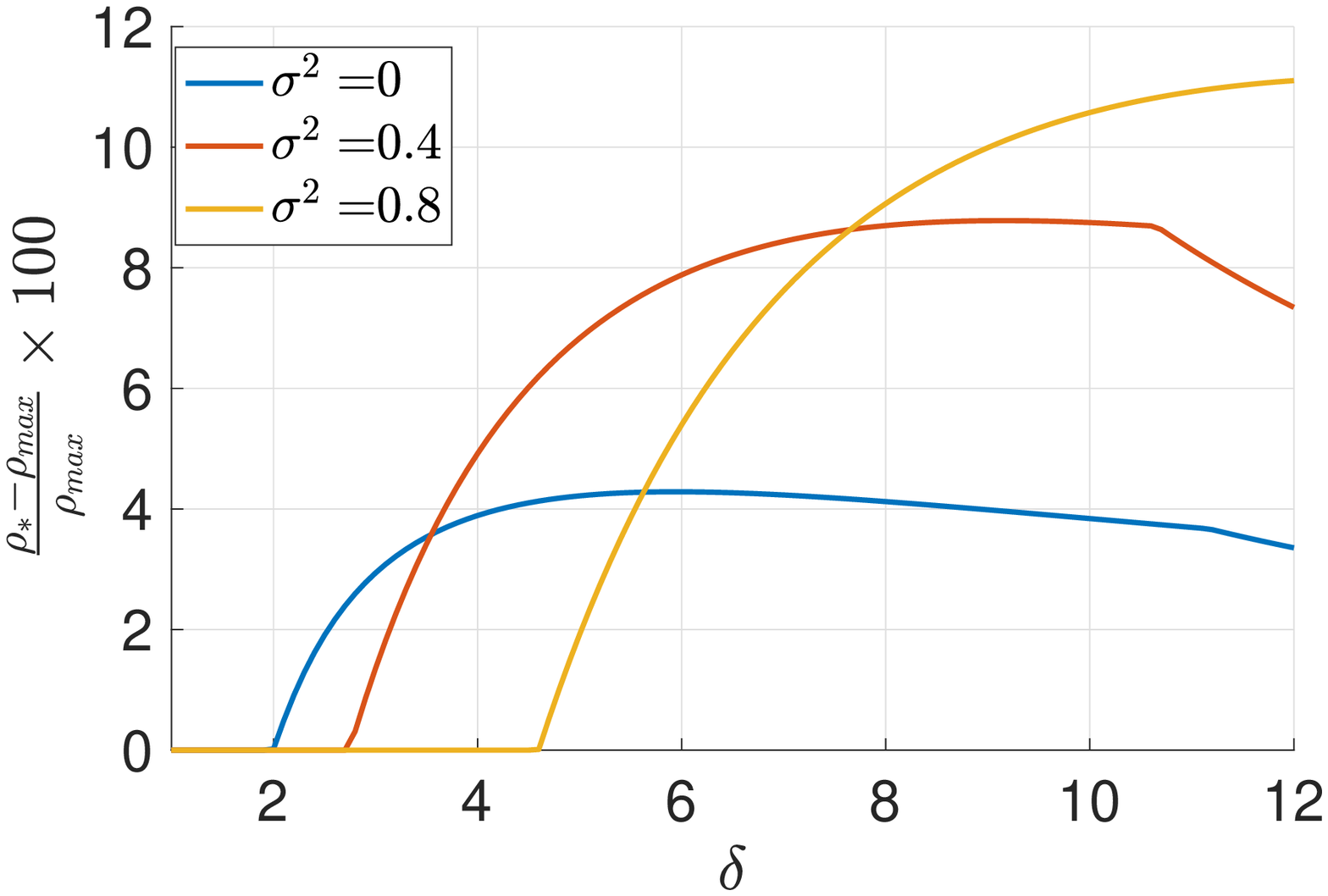}
    }
    \\
 \subfigure[$\sigma^2=0.4$]{
  \hspace{-18pt}  \includegraphics[scale=.42]{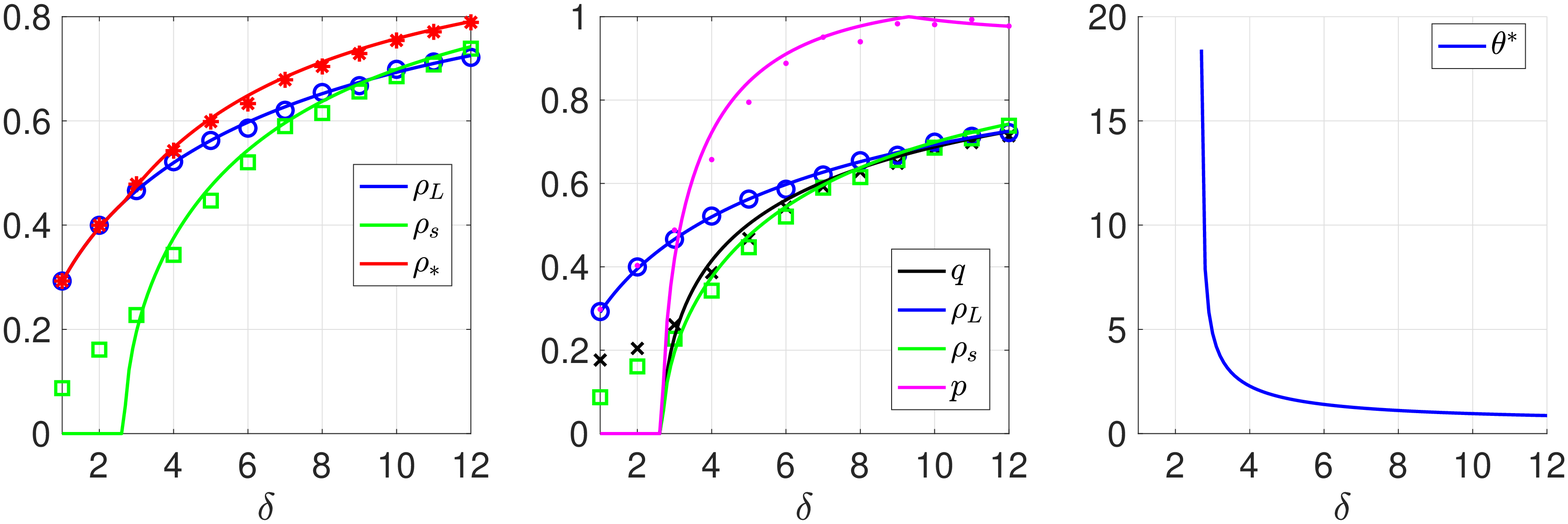}
    }
    \\
    \subfigure[$\sigma^2=0.8$]{
  \hspace{-18pt}  \includegraphics[scale=.42]{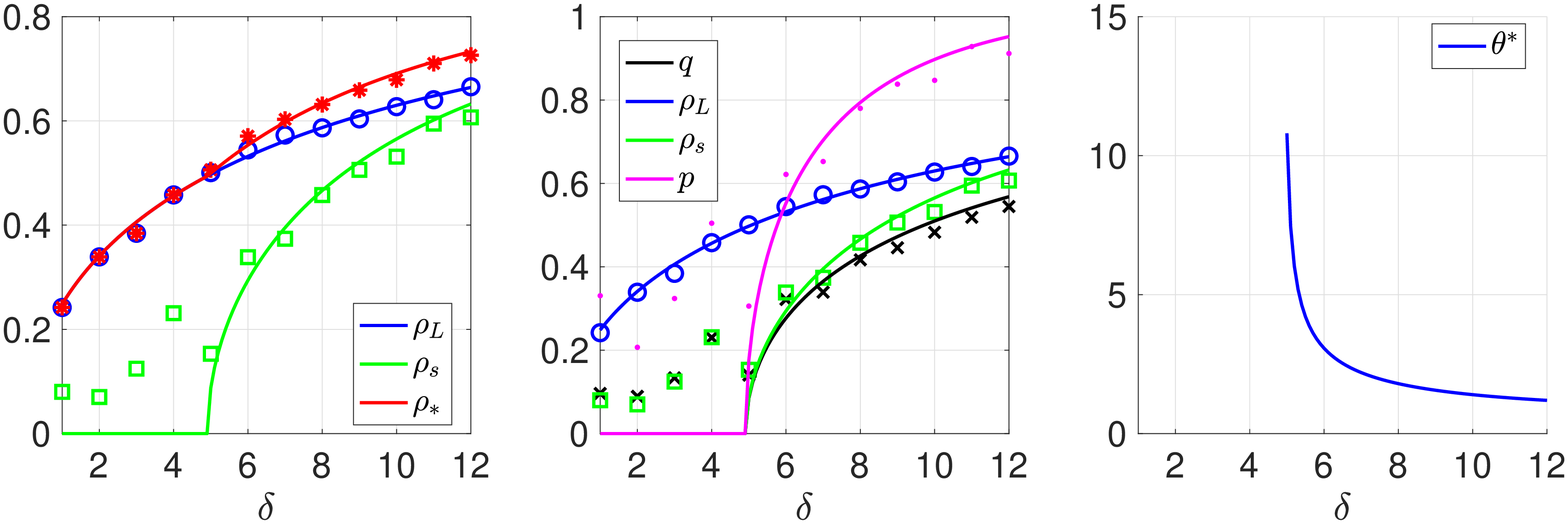}
    }
    \caption{Monte-Carlo simulations and theoretical predictions for an i.i.d. Gaussian signal and  measurements model $y_i = f(\inp{\ba_i}{\bx}) + \sigma z_i, z_i\sim_{i.i.d.} \mathcal{N}(0,1)$. Here, $f(x)=\max\{x,-0.4x\}$ (cf. Fig. \ref{fig:relu}(a)).}
    \label{fig:relu}
\end{figure}

The setting of Figure \ref{fig:eg2} is the same as in  Figure \ref{fig:relu}, only now the input-output function is chosen as $f(x)=|x|\cdot\mathbf{1}_{\{|x|\geq 1.5\}} + x\cdot\mathbf{1}_{\{|x|<1.5\}}$. Comparing Figure \ref{fig:eg2}(b) to Figure \ref{fig:relu}(b), note that the benefit of the combined estimator is more significant for the link function studied here. Moreover, in contrast to  Figure \ref{fig:relu}(b), here, the percentage gain of the combined estimator takes its maximum value in the noiseless case: $\sigma^2=0$.

\begin{figure}[p]
    \hspace{30pt}
    \subfigure[
    ]{
    \centering
    \includegraphics[scale=.35]{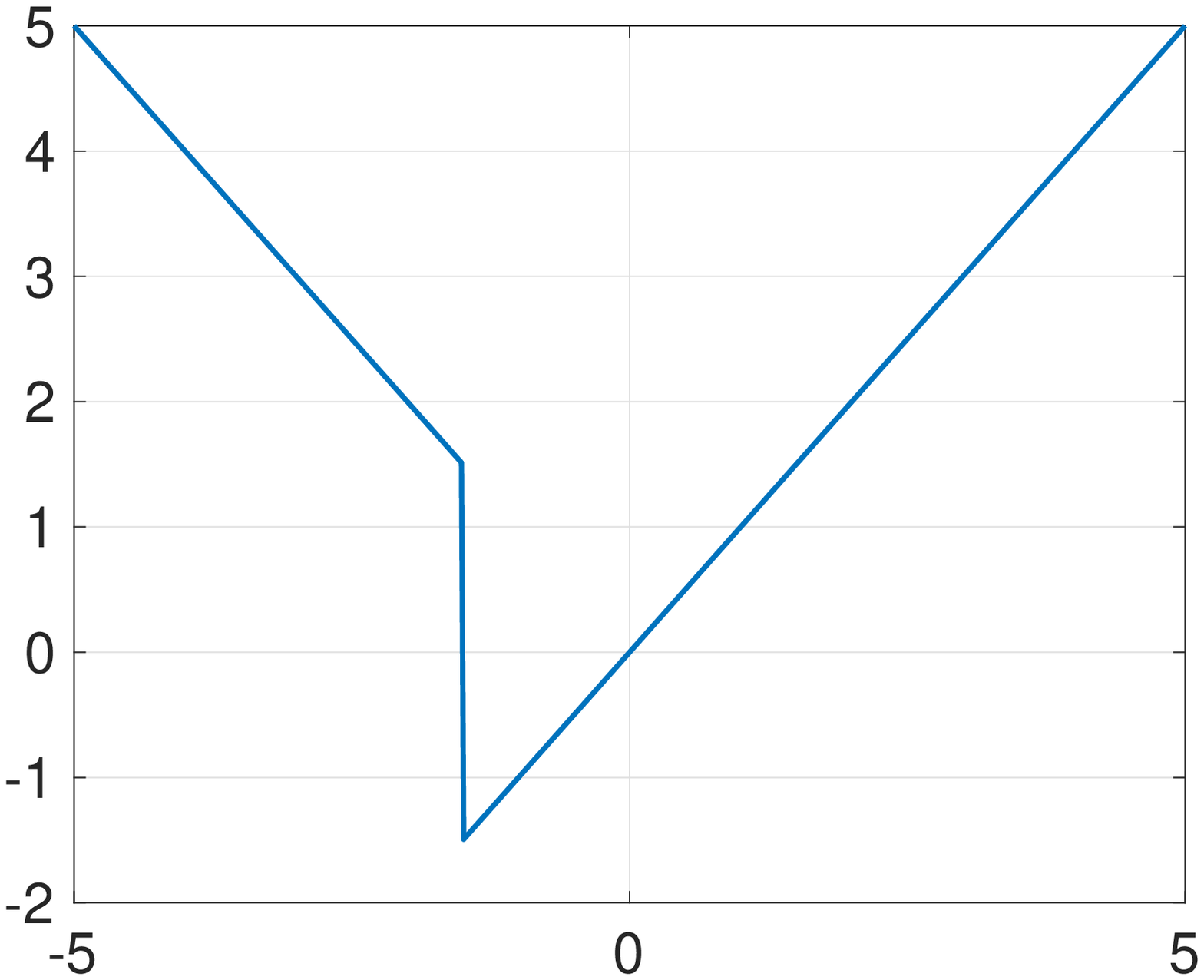}
    }
    \subfigure[]{
    \centering
    \includegraphics[scale=.35]{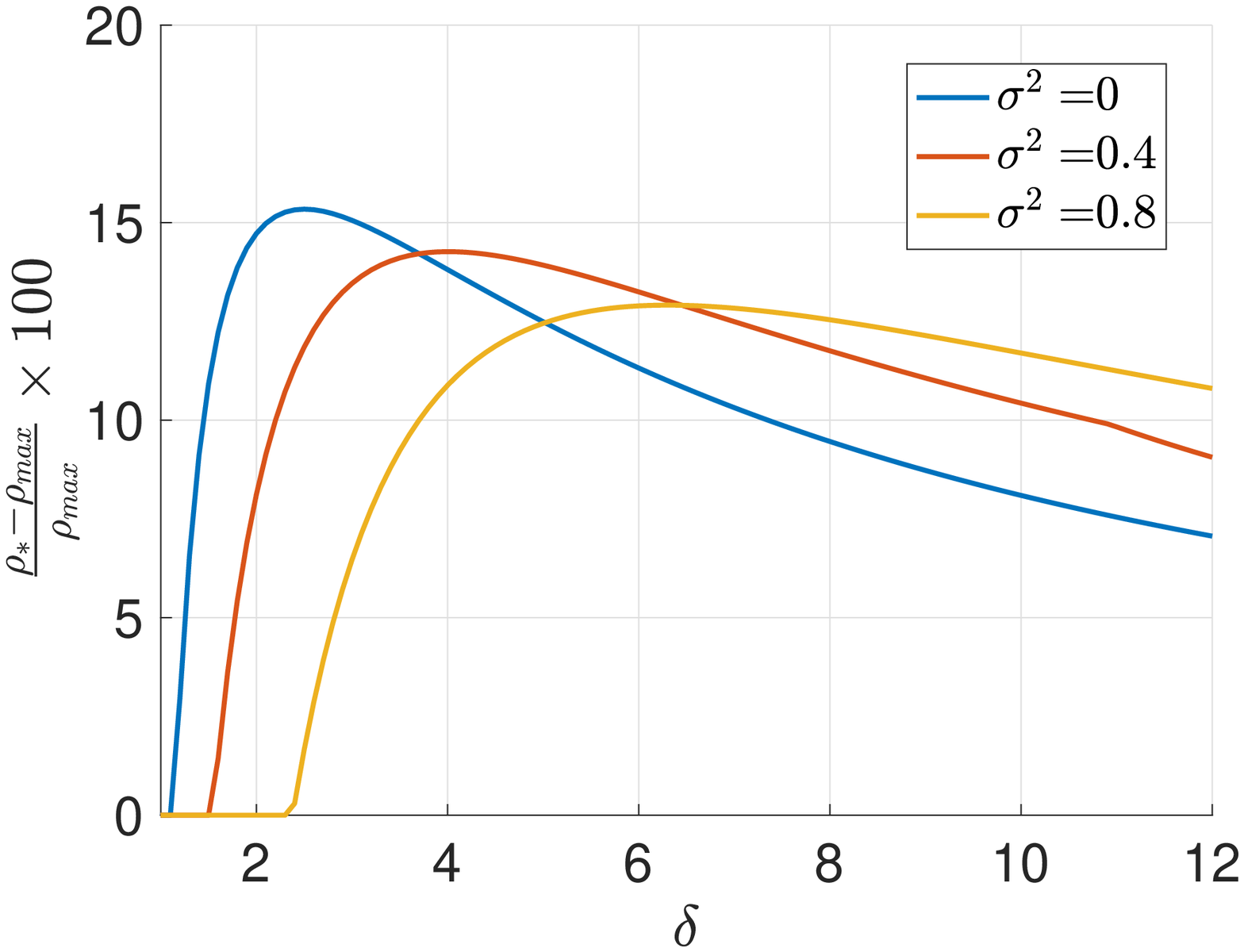}
    }
    \\
    \subfigure[$\sigma^2=0.4$]{
  \hspace{-18pt}  \includegraphics[scale=.42]{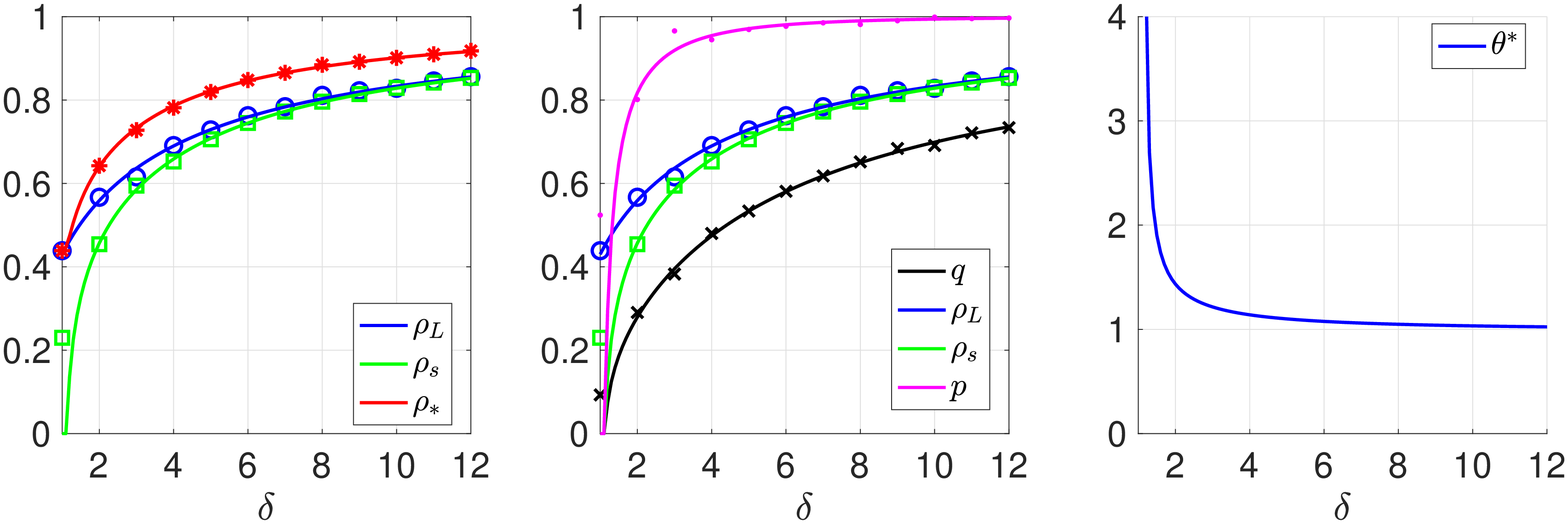}
    }
    \\
    \subfigure[$\sigma^2=0.8$]{
  \hspace{-18pt}  \includegraphics[scale=.42]{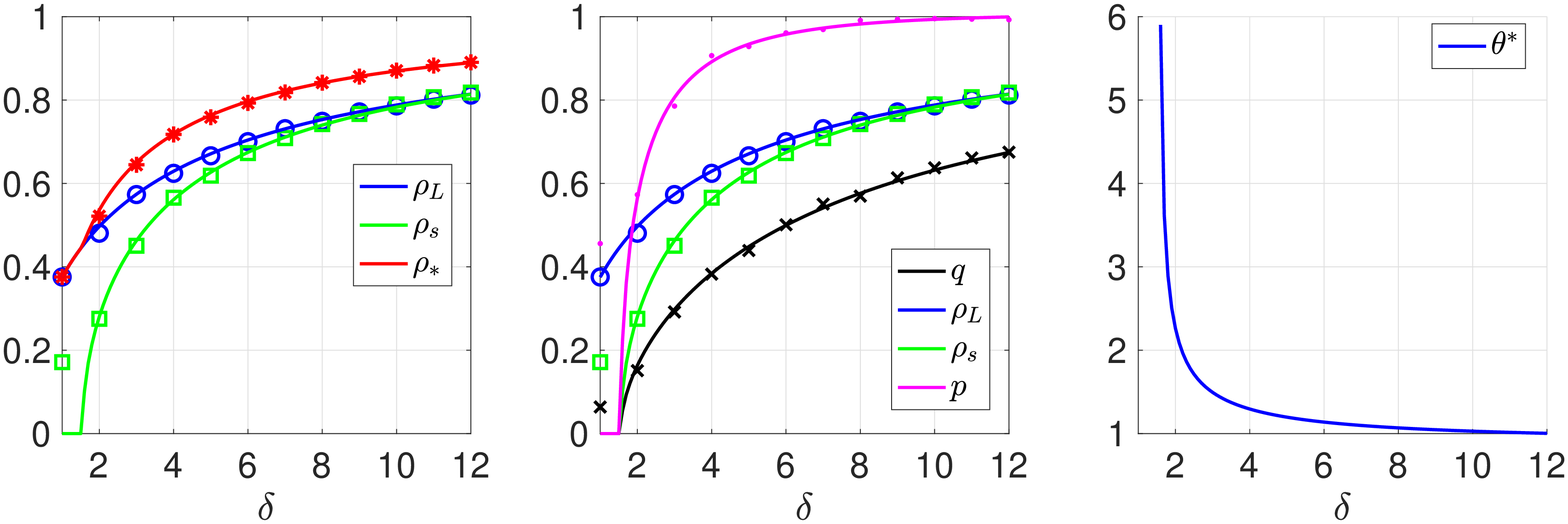}
    }
    \caption{Monte-Carlo simulations and theoretical predictions for an i.i.d. Gaussian signal and measurements model $y_i = f(\inp{\ba_i}{\bx}) + \sigma z_i, z_i\sim_{i.i.d.} \mathcal{N}(0,1)$.  Here, $f(x)=|x|\cdot\mathbf{1}_{\{|x|\ge 1.5\}} + x\cdot\mathbf{1}_{\{|x|<1.5\}}$ (cf. Fig. \ref{fig:eg2}(a)).}
    \label{fig:eg2}
\end{figure}


\begin{figure}[p]
    \hspace{30pt}\subfigure[]{
    \centering
    \includegraphics[scale=.35]{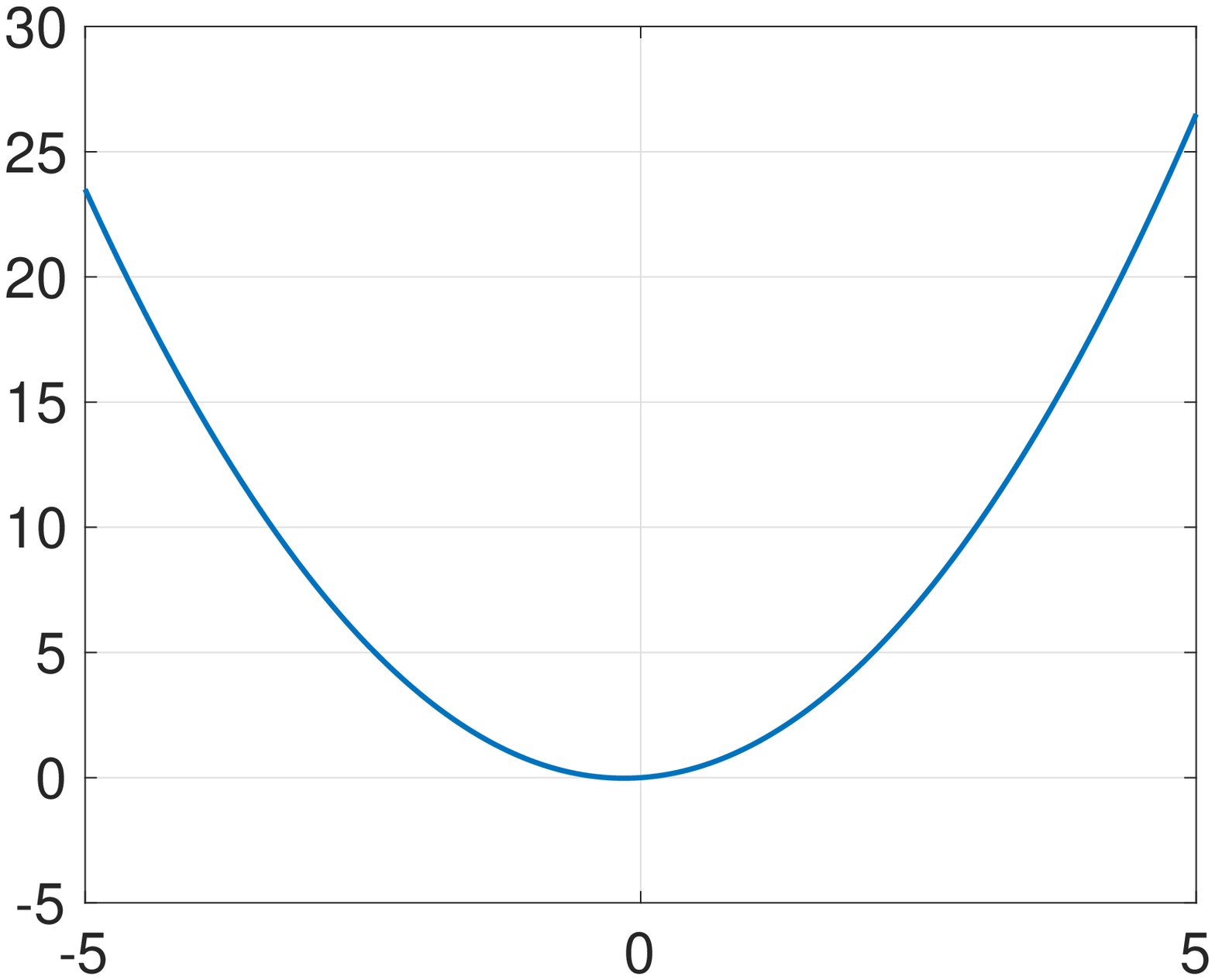}
    }
    \subfigure[]{
    \centering
    \includegraphics[scale=.35]{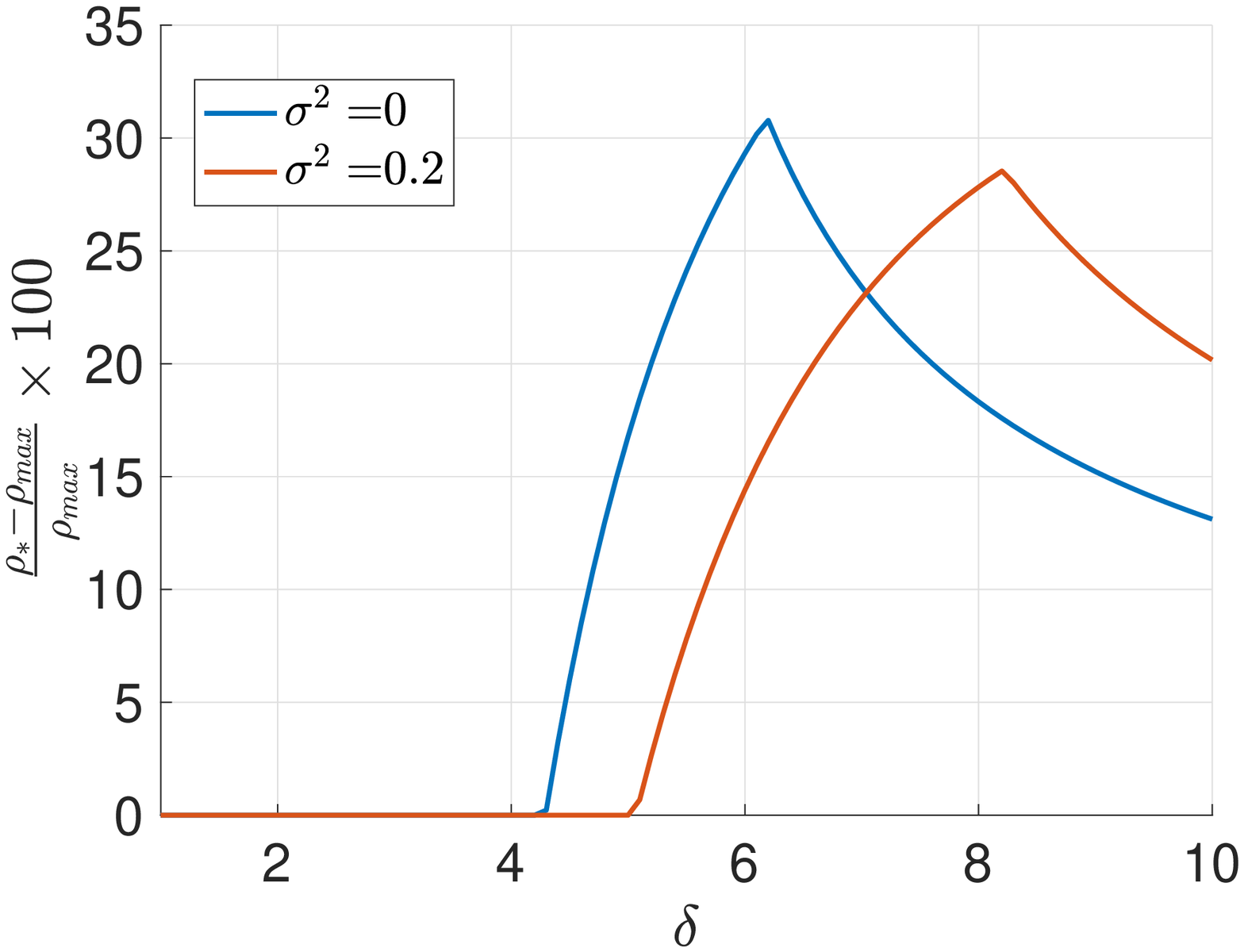}
    }
    \\
    \subfigure[$\sigma^2=0$]{
 \hspace{-18pt}   \includegraphics[scale=.42]{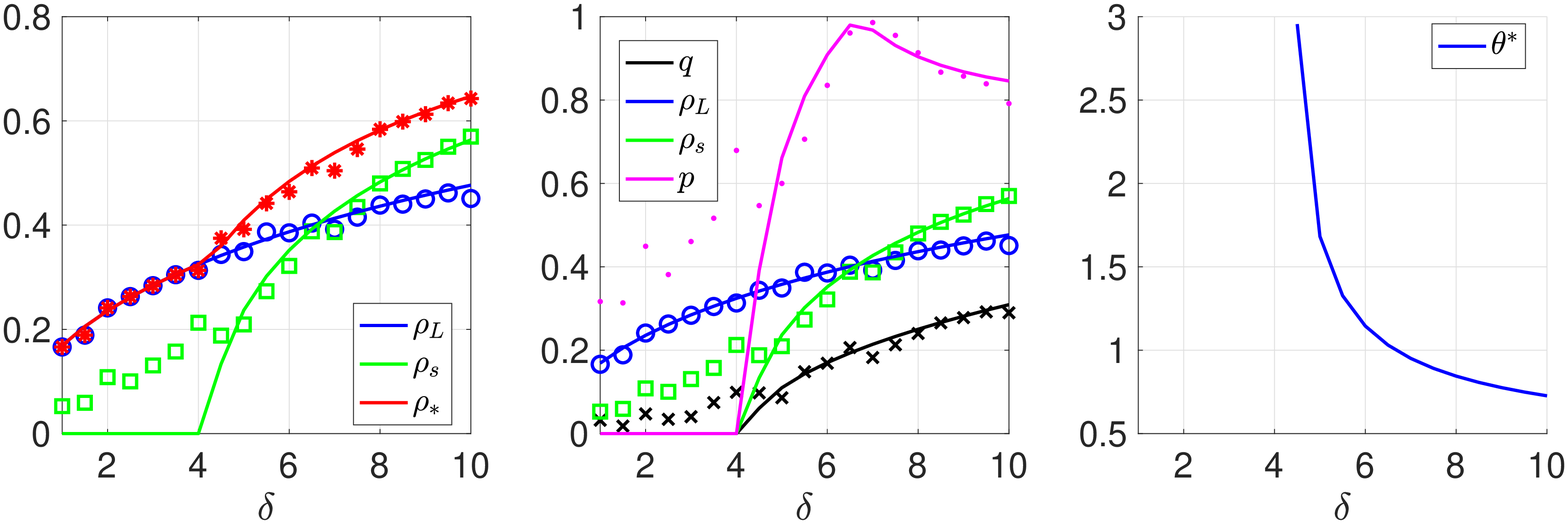}
    }
    \subfigure[$\sigma^2=0.2$]{
 \hspace{-18pt}   \includegraphics[scale=.42]{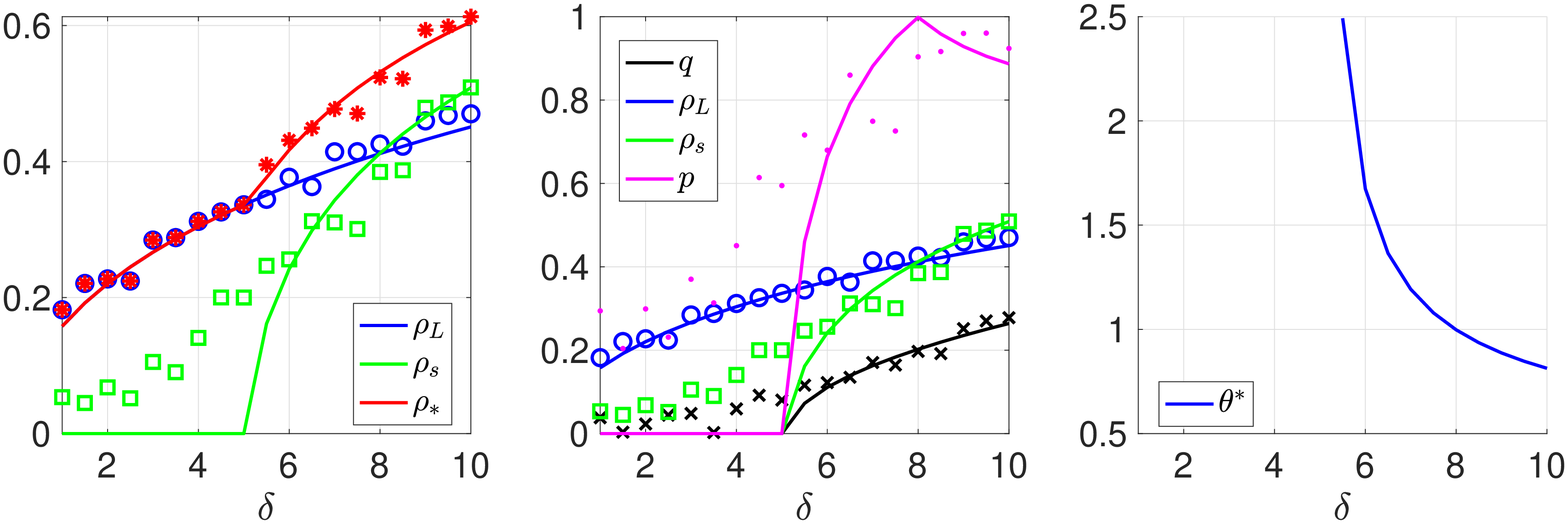}
    }
    \caption{{Monte-Carlo simulations and theoretical predictions for an i.i.d. Gaussian signal and   measurements model $y_i = f(\inp{\ba_i}{\bx}) + \sigma z_i, z_i\sim_{i.i.d.} \mathcal{N}(0,1)$. Here, $f(x)=0.3x+x^2$ (cf. Fig. \ref{fig:Hermite}(a)).}}
    \label{fig:Hermite}
\end{figure}

In Figure \ref{fig:Hermite}, we repeat the experiments of Figures \ref{fig:relu} and \ref{fig:eg2}, but for $f(x)= 1+ 0.3x + (x^2-1)$. Compared to the two functions studied in Figures \ref{fig:relu} and \ref{fig:eg2}, in Figure \ref{fig:Hermite} we observe that the performance gain is significantly larger and reaches values up to $~30\%$. This can be argued by considering the expansion of the input-output functions on the basis of the Hermite polynomials, i.e., $f(x) = \sum_{i=0}^\infty H_i h_i(x)$, where $h_i(x)$ is the $i$th-order Hermite polynomial with leading coefficient $1$ and $H_i = \frac{1}{i!}\E_{G\sim\mathcal{N}(0,1)}\{f(G)h_i(G)\}$. Specifically, recall that the first three Hermite polynomials are as follows: $h_0(x)=1$, $h_1(x)=x$ and $h_2(x)=x^2-1$. Thus, for 
$f(x)= 1+ 0.3x + (x^2-1)$, only the first three  coefficients $\{H_i\}, i=0,1,2$, are  non-zero. To see the relevance of these coefficients to the linear and spectral estimators note that for identity pre-processing functions it holds that
$
\E\{GZ_L\} = \E\{GY\} = \E\{Gf(G)\} = H_1
$ 
and 
$
\E\{(G^2-1)Z_s\} =\E\{(G^2-1)f(G)\} = 2H_2. 
$
Thus, it follows directly from Lemma \ref{lemma:pt_lin} that $\rho_L=0$ if the first Hermite coefficient $H_1$ is zero. Similarly, it can be shown using Lemma \ref{lemma:pt} that $\rho_s=0$ if the second Hermite coefficient $H_2$ is zero; in fact, the threshold of weak recovery of the spectral method is infinity in this case (see \eqref{eq:delta_star} in Appendix \ref{app:optpre}). Intuitively, the linear and spectral estimators exploit the energy of the output function corresponding to the Hermite polynomials of first- and second-order, respectively; see also \cite{dudeja2018learning,thrampoulidis2019lifting}. In this example, we have chosen $f(x)$ such that all of its energy is concentrated on the Hermite polynomials of order up to two. 

As a final note, from the numerical results in Figures \ref{fig:relu}, \ref{fig:eg2} and \ref{fig:Hermite}, we observe that the proposed optimal combination leads to a performance improvement only if both the linear and the spectral estimators are asymptotically correlated with the signal. This is because the signal prior is Gaussian (see Remark \ref{rmk:special2}). In contrast, as we will see in the next section, when the signal prior is binary, the combined estimator provides an improvement even when only the linear estimator is effective.

\subsection{Bayes-optimal Combination}\label{sec:num_bayes}
In Figures \ref{fig:Bayes}(a,b) and \ref{fig:Bayes}(c,d) we consider the same setting as in Figures \ref{fig:eg2} and \ref{fig:Hermite}, respectively. However, here each entry of $\bx$ takes value either $+1$ or $-1$. Each entry is  chosen independently according to the distribution 
$P_X(1) = 1 -P_X(-1) =\sfp$, for $\sfp \in (0,1)$. Thus, the Bayes optimal combination $\hat\bx^{\rm mmse}=F_*(\bx^{\rm L},\bx^{\rm s})$ is not necessarily linear as in the Gaussian case. In Appendix \ref{sec:Bayes_Bern}, we compute the Bayes-optimal estimator $F_*(x_L,x_s)$ (cf.  \eqref{eq:Bayesopt}) for the setting considered here. Then, we use the prediction of Theorem \ref{th:optimality}  to plot in solid black lines the normalized correlation of $\hat\bx^{\rm mmse}$ with $\bx$ (i.e., $\rho_*^{\rm mmse} = \rho_*$ in \eqref{eq:optres0}). The theoretical predictions (solid lines) are compared against the results of Monte Carlo simulations (markers). Moreover, we compare the optimal performance against those of the linear estimator (cf. $\rho_L$), the spectral estimator  (cf. $\rho_s$) and the optimal linear combination $\hat\bx^{\rm c}(\theta_*)$ (cf. $\rho_*^{\rm linear})$. We have chosen $\sfp=0.3$ in Figures \ref{fig:Bayes}(a,c) and $\sfp=0.5$ in Figures \ref{fig:Bayes}(b,d). Note that the optimal \emph{linear} combination provides a performance improvement only for the values of $\delta$ s.t. $\rho_L>0$ and $\rho_s>0$. On the contrary, $\rho_*^{\rm mmse}$ is strictly larger than $\rho_L$ even when $\rho_s=0$.

\begin{figure}[p]
    \subfigure[$\sfp=0.3$]{
    \centering
    \includegraphics[scale=.4]{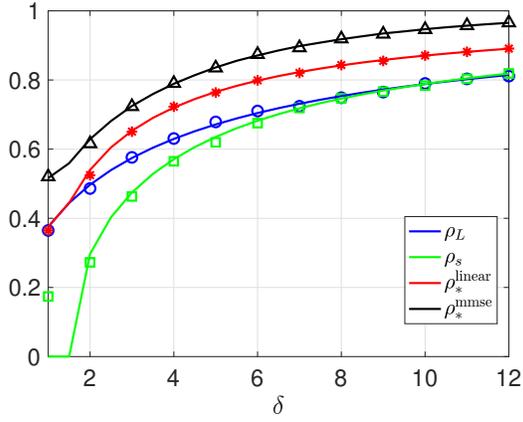}
    }
    \subfigure[$\sfp=0.5$]{
    \centering
    \includegraphics[scale=.4]{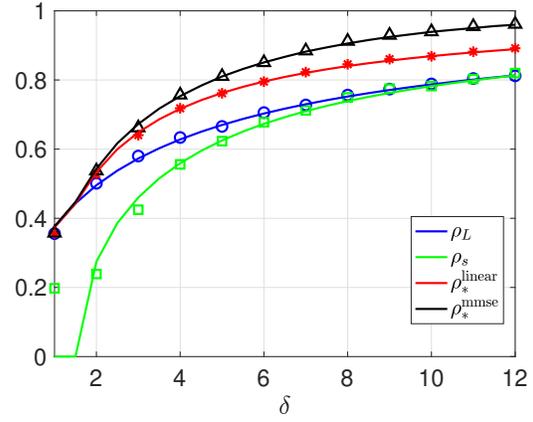}
    }
    \\
    \subfigure[$\sfp=0.3$]{
    \includegraphics[scale=.4]{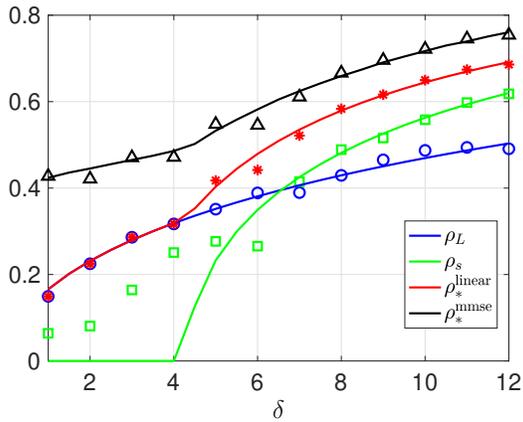}
    }
    \subfigure[$\sfp=0.5$]{
    \includegraphics[scale=.4]{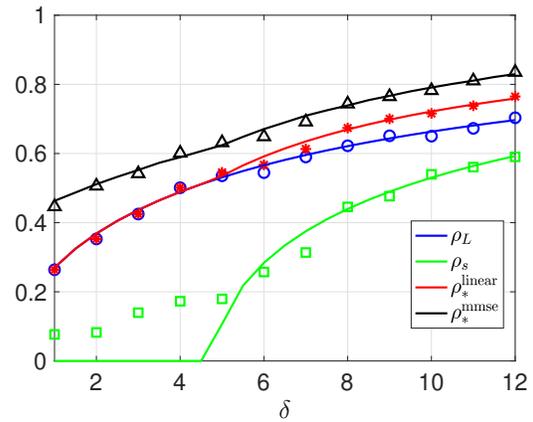}
    }
    \caption{Bayes-optimal combination vs optimal linear combination for a binary prior $P_X(1)= 1- P_X(-1)=\sfp$. In (a,b) (resp. (c,d)) the setting is otherwise the same as in Figure \ref{fig:eg2} (resp. Figure \ref{fig:Hermite}).}
    \label{fig:Bayes}
\end{figure}

\begin{figure}[p]
	\begin{center}
		\includegraphics[width=0.85\textwidth]{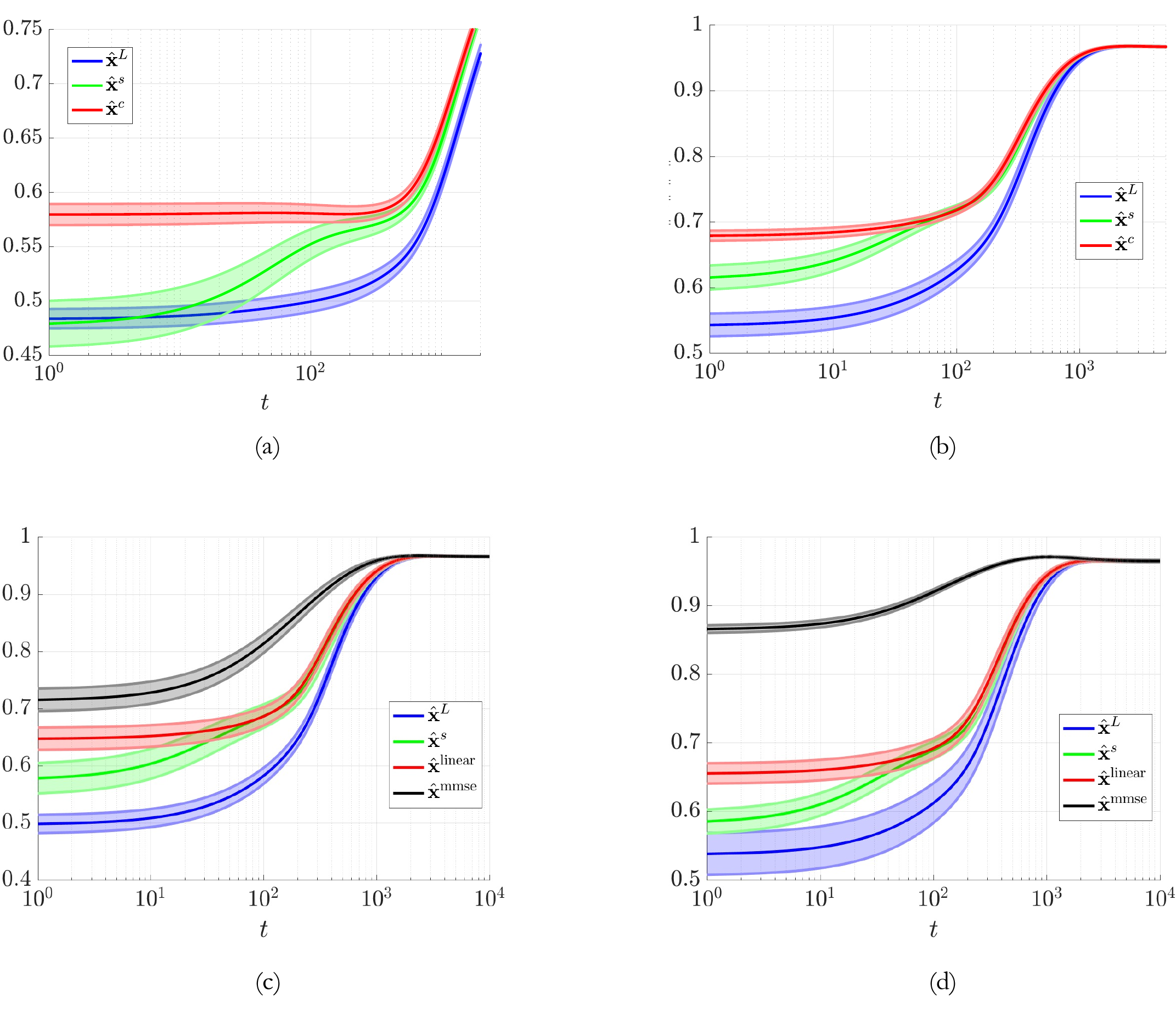}
	\end{center}
	\caption{Normalized correlation of gradient descent iterates for different initializations: \emph{(i)} linear $\hat{\bx}^{\rm L}$, \emph{(ii)} spectral $\hat{\bx}^{\rm s}$, \emph{(iii)} optimal linear combination $\hat{\bx}^{\rm linear}$ (Bayes-optimal for Gaussian signal prior), and \emph{(iv)} Bayes-optimal combination $\hat{\bx}^{\rm mmse}$. Depicted are means (solid lines) and standard errors (shaded regions) over $10$ Monte Carlo realizations. In all cases, $y_i=f( \langle \ba_i , \bx \rangle)+z_i, i\in[n]$, $z_i\sim\mathcal{N}(0,0.2)$ and $d=250$. (a) $f(x)=0.3 x + 0.5(x^2-1),$ $\delta=3$, Gaussian prior. (b)  $f(x)=0.3 x + 0.5x^2,$ $\delta=5$, Gaussian prior. (c,d) $f(x)=0.3 x + 0.5x^2,$ $\delta=5$,  prior over $\{+1, -1 \}$ with $P_X(1)=\sfp=0.3$ for (c) and $P_X(1)=\sfp=0.1$ for (d).
	}
	\label{fig:GD}
\end{figure}


\subsection{The Combined Estimator as Initialization for Local Algorithms}
As mentioned in the introduction, the initial estimates of the signal obtained by either the linear/spectral methods or our proposed combined estimator, can then be refined via  local algorithms such as gradient descent (GD). The theory and numerical simulations in the previous sections showed the superiority of our proposed combined estimator $\hat\bx^{\rm mmse}=F_*(\bx^{\rm L},\bx^{\rm s})$ over $\bx^{\rm L}$ and $\bx^{\rm s}$ in terms of correlation with the true signal. In Figure \ref{fig:GD}, by plotting the correlation of GD iterates $\bx_t, t\geq 1$ for different initializations, we show numerically how this improvement translates to improved performance of gradient descent refinements. Specifically,  we ran GD on squared error loss with step size $1/2$, that is $\bx_{t+1}=\bx_{t}-\sum_{i\in[n]}f^\prime(\langle \ba_i , \bx_t \rangle)\left(y_i-f(\langle \ba_i , \bx_t \rangle)\right)$. Here, $f$ is the output function described in the caption of the figure.

In Figures \ref{fig:GD}(a,b), the true signal has  i.i.d. Gaussian entries, thus the linear combined estimator $\hat\bx^{\rm c}$ is the optimal combination in terms of correlation performance. We observe that for two different choices of  output function and sampling ratio (see caption), GD with linear or spectral initialization requires hundreds of iterations to reach the performance of the combined estimator. In Figures \ref{fig:GD}(c,d), the true signal has entries in $\{+1, -1\}$, chosen independently  with $P_X(1)=\sfp=0.3$ and $P_X(-1)=\sfp=0.1$, respectively. Here, the linear combined estimator is sub-optimal (but still improves upon linear/spectral), thus we also compute and study the Bayes-optimal estimator. Interestingly, while for both priors, GD converges to the same correlation as $t$ increases, for $\sfp=0.1$, GD achieves higher correlation if stopped early. It is a fascinating, albeit challenging, question to better understand the evolution of the GD trajectory as a function of the initialization, signal prior and output function.



\section{Proof of Theorem \ref{th:W2conv2}} \label{sec:proof_mainthm}

\subsection{Proof Sketch}\label{subsec:sketch}

The proof of Theorem \ref{th:W2conv2} is based on the design and analysis of a  \emph{generalized approximate message passing} (GAMP) algorithm. GAMP is a  class of iterative algorithms proposed by Rangan \cite{RanganGAMP} for estimation in generalized linear models. A GAMP algorithm is defined in terms of a sequence of Lipschitz functions $f_t:\mathbb R\to \mathbb R$ and $g_t:\mathbb R \times \reals \to \mathbb R$, for $t \geq 0$. For $t\ge 0$, the GAMP iteration computes:
\begin{equation}
\begin{split}
\bu^t &= \frac{1}{\sqrt{\delta}}\bA f_t(\bv^t)-\sb_t g_{t-1}(\bu^{t-1}; \by),\\
\bv^{t+1} &= \frac{1}{\sqrt{\delta}}\bA^\sT g_t(\bu^t; \by) - \sc_t f_t(\bv^t).
\end{split}
\label{eq:GAMP}
\end{equation}
 Here, the functions $f_t$ and $g_t$ are understood to be applied component-wise, i.e., $f_t(\bv^t)=(f_t(v^t_1)$, $\ldots, f_t(v^t_d))$ and $g_t(\bu^t; \by)=(g_t(u^t_1; y_1), \ldots, g_t(u^t_n; y_n))$. The scalars $\sb_t, \sc_t$ are defined as
\begin{equation}
\sb_t =\frac{1}{n}\sum_{i=1}^d f_t'(v_i^t), \qquad
\sc_t = \frac{1}{n}\sum_{i=1}^n g_t'(u_i^t; y_i),
\label{eq:GAMP_onsager}
\end{equation}
where $g_t'(\cdot, \cdot)$ denotes the derivative with respect to the first argument. The iteration \eqref{eq:GAMP} is initialized with 
\beq 
\bu^{0}= c \bfone_n, \qquad   \bv^1 = \frac{1}{\sqrt{\delta}} \bA^{\sT} g_{0}(\bu^{0} ; \by),
\label{eq:GAMP_init}
\eeq
for some constant $c>0$. Here, $\bfone_n \in \reals^n$ denotes the all-ones vector.

A key feature of the GAMP algorithm \eqref{eq:GAMP} is that the asymptotic empirical distribution of its iterates can be succinctly characterized via a deterministic recursion, called \emph{state evolution}. Hence, the performance of the high-dimensional problem involving the iterates $\bu^t, \bv^t$ is captured by a scalar recursion. Specifically, this result  gives that for $t \geq 1$, the empirical distributions of $\bu^t$ and $\bv^t$ converge in $W_k$ distance to the laws of the random variables $U_t$ and $V_t$, respectively, with 
\begin{align}
    U_t & \equiv \mu_{U,t} G + \sigma_{U,t} W_{U,t},  \label{eq:Ut_def}\\
    V_t & \equiv \mu_{V,t} X + \sigma_{V,t} W_{V,t},    \label{eq:Vt_def}
\end{align}
where $(G, W_{U,t}) \sim_{\text{i.i.d.}} \normal(0,1)$. Similarly, $X \sim P_X$ and $W_{V,t} \sim \normal(0,1)$ are independent. The deterministic parameters $(\mu_{U,t}, \sigma_{U,t}$, $\mu_{V,t}, \sigma_{V,t} )$ are computed via the recursion \eqref{eq:SE_def} detailed in Section \ref{subsec:GAMP}, and the formal statement of the state evolution result is contained in Proposition \ref{prop:GAMP_SE} (again in Section \ref{subsec:GAMP}). 

Next, in Section \ref{eq:GAMP_power}, we show that a suitable choice of the functions $f_t, g_t$ leads to a GAMP algorithm such that \emph{(i)} $\bv^1 = \sqrt{d} \,  \bxl$, and \emph{(ii)} $\bv^t$ is aligned with $\sqrt{d} \, \bxs$ as $t$ grows large. 
Specifically, choosing $g_0(u; y)=\mathcal T_L(y)/\sqrt{\delta}$ in \eqref{eq:GAMP_init} immediately gives $\bv^1 = \sqrt{d} \,  \bxl$. 
In order to approach the spectral estimator as $t\to\infty$, we pick $f_t, g_t$ to be linear functions; see \eqref{eq:ft_gt_choice}. The idea is that, with this choice of $f_t$ and $g_t$, the GAMP iteration is effectively a power method. Let us now briefly discuss why this is the case. With the choice of $f_t, g_t$ in \eqref{eq:ft_gt_choice}, the GAMP iteration can be expressed as 
\begin{equation}\label{eq:newGAMP}
    \begin{split}
    \bu^t &= 
    \frac{1}{\sqrt{\delta} \, \beta_t} \big[ \bA \bv^t \, -  \, \bZ \bu^{t-1} \big], \\
\bv^{t+1} &=  \bA^\sT \bZ \bu^t - \frac{\sqrt{\delta}}{\beta_t} \, \E\{ Z \} \,  \bv^t,
    \end{split}
\end{equation}
where $\bZ = \diag(\cT(y_1), \ldots, \cT(y_n))$, $Z=\mathbb E\{\cT(Y)\}$, and the function $\mathcal T:\mathbb R\to\mathbb R$ is defined later in terms of the spectral preprocessing function $\cT_s$; see \eqref{eq:deftildeZ}. Then, Lemma \ref{lem:fixed_pts} analyzes the fixed points of the state evolution of the GAMP algorithm \eqref{eq:newGAMP}, and Lemma \ref{lem:evec_emp_dist} proves that in the high-dimensional limit, the vector $\bv^t$ tends to align with the principal eigenvector of the matrix $\bM_n=  \bA^\sT \bZ(\bZ+\delta\mathbb E\{Z(G^2-1)\}\bI_n)^{-1} \bA$ as $t \to \infty$. Here, we provide a heuristic sanity check for this last claim. Assume that the iterates $\bv^t$ and $\bu^t$ converge to the limits $\bv^\infty$ and $\bu^\infty$, respectively, in the sense that $\lim_{t\to\infty}\frac{1}{d}\|\bv^t-\bv^\infty\|^2=0$ and $\lim_{t\to\infty}\frac{1}{n}\|\bu^t-\bu^\infty\|^2=0$. Then, from \eqref{eq:newGAMP}, the limits $\bv^\infty$ and $\bu^\infty$ satisfy
\begin{equation}\label{eq:newGAMP2}
    \begin{split}
    \bu^\infty &= 
    \frac{1}{\sqrt{\delta} \, \beta_\infty} \big[ \bA \bv^\infty \, -  \, \bZ \bu^{\infty} \big], \\
\bv^{\infty} &=  \bA^\sT \bZ \bu^\infty - \frac{\sqrt{\delta}}{\beta_\infty} \, \E\{ Z \} \,  \bv^\infty.
    \end{split}
\end{equation}
Furthermore, from the analysis of the fixed points of state evolution of Lemma \ref{lem:fixed_pts}, we obtain that $\beta_\infty=\sqrt{\delta}\mathbb E\{Z(G^2-1)\}$. Thus, after some manipulations, \eqref{eq:newGAMP2} can be rewritten as
\begin{equation}\label{eq:newGAMP3}
\begin{split}
    \bu^\infty &=
     (\bZ+\delta\mathbb E\{Z(G^2-1)\}\bI_n)^{-1} \bA \bv^\infty\\
\left(1+\frac{ \E\{ Z \}}{\mathbb E\{Z(G^2-1)\}} \,\right)\bv^{\infty} &=  \bA^\sT \bZ (\bZ+\delta\mathbb E\{Z(G^2-1)\}\bI_n)^{-1} \bA \bv^\infty.
\end{split}    
\end{equation}
Hence, $\bv^\infty$ is an eigenvector of the matrix $\bA^\sT \bZ (\bZ+\delta\mathbb E\{Z(G^2-1)\}\bI_n)^{-1} \bA$, and the GAMP algorithm is effectively performing a power method. Finally, we choose $\mathcal T$ (and consequently $\bZ$) so that $\bZ (\bZ+\delta\mathbb E\{Z(G^2-1)\}\bI_n)^{-1}=\bZ_s$, with $\bZ_s$ given by \eqref{eq:Zdef}. In this way, the matrix $\bA^\sT \bZ (\bZ+\delta\mathbb E\{Z(G^2-1)\}\bI_n)^{-1} \bA$ coincides with the spectral matrix $\bD_n$, as defined in \eqref{eq:Dn_def}, and therefore $\bv^t$ approaches the spectral estimator $\bxs$. 

In conclusion, as $\bv^1 = \sqrt{d} \bxl$ and $\bv^t$ tends to be aligned with $\sqrt{d} \bxs$ for large $t$, we can use the state evolution result of Proposition \ref{prop:GAMP_SE} to analyze the joint empirical distribution of $(\bx, \sqrt{d} \bxl, \sqrt{d} \bxs)$.  At this point, the proof of Theorem \ref{th:W2conv2} becomes a straightforward application of Lemma \ref{lem:evec_emp_dist}, and is presented at the end of Section \ref{eq:GAMP_power}. The proof of Lemma \ref{lem:evec_emp_dist} is quite long, so it is provided separately  in Section \ref{subsec:proof_lem_evec}.



\subsection{State Evolution for Generalized Approximate Message Passing (GAMP)} \label{subsec:GAMP}

For $t \ge 1$, let us define the following deterministic recursion for the parameters $(\mu_{U,t}, \sigma_{U,t}$, $\mu_{V,t}, \sigma_{V,t} )$ appearing in \eqref{eq:Ut_def}-\eqref{eq:Vt_def}: 
\begin{equation}
\label{eq:SE_def}
\begin{split}
    \mu_{U,t} & = \frac{1}{\sqrt{\delta}} \E\{ X f_t(V_t) \}, \\
     \sigma_{U,t}^2 & = \frac{1}{\delta} \E\Big\{ \big( f_t(V_t)  -  X \E\{ X f_t(V_t)\} \big)^2 \Big\} = 
     \frac{1}{\delta} \E\Big\{ \big( f_t(V_t)  -  \sqrt{\delta}\,  \mu_{U,t} X  \big)^2 \Big\}, \\
     \mu_{V,t+1}& =\sqrt{\delta} \E\{ G g_t(U_t; Y) \} \, - \, \E\{ g_t'(U_t;Y) \} \E\{ X f_t(V_t) \}, \\
     \sigma_{V,t+1}^2 & = \E\{ g_t(U_t; Y)^2 \}.
\end{split}
\end{equation}
Recalling from \eqref{eq:GAMP_init} that $\bu^0 = c \bfone_n$, the state evolution recursion is initialized with
\beq
\begin{split}
%
& \mu_{V,1} = \sqrt{\delta} \, \E\{ g_0(c ; Y) G \},
\qquad 
\sigma_{V,1}^2 =  \E\{ g_0(c; Y)^2 \}.
\label{eq:SEinit0}
\end{split}
\eeq

Furthermore, for $t \geq 0$, let the sequences of random variables $(W_{U,t})_{t \geq 0}$ and 
$(W_{V,t})_{t \geq 0}$ be each jointly Gaussian with zero mean and covariance defined as follows
\cite{rush2018finite, berthier2020state}.  
 First, we have:
\beq
\E\{W_{V,1} W_{V,t} \}  = \frac{1}{\sigma_{V,1} \, \sigma_{V,t}}   
\E\left\{ g_{0}(c; Y) \, g_{t-1}(\mu_{U,t-1} G + \sigma_{U,t-1} W_{U,t-1}; Y) \right\}, \qquad t \geq 2. 
\label{eq:WV_corr_init}
\eeq
Then, for $r, t \geq 1$, we iteratively compute:
\begin{align}
& \E\{W_{U,r} W_{U,t} \} \nonumber \\
& \quad  = \frac{1}{\sigma_{U,r} \, \sigma_{U,t}}   
\cdot \frac{1}{\delta}\E\Big\{ \big( f_r(\mu_{V,r} X + \sigma_{V,r} W_{V,r})  -  X \sqrt{\delta}\,  \mu_{U,r}\big)
\big( f_t(\mu_{V,t} X + \sigma_{V,t} W_{V,t})  -  X \sqrt{\delta}\,  \mu_{U,t}\}\big) \Big\},
\label{eq:WU_corr}  \\
\label{eq:WV_corr}
& \E\{W_{V,r+1} W_{V,t+1} \}  = \frac{1}{\sigma_{V,r+1} \, \sigma_{V,t+1}}   
\E\left\{ g_{r}(\mu_{U,r} G + \sigma_{U,r} W_{U,r}; Y) \, g_{t}(\mu_{U,t} G + \sigma_{U,t} W_{U,t}; Y) \right\}.  
\end{align}
Note that for $r=t$, by \eqref{eq:SE_def} we have $\E\{W_{U,t}^2 \} = \E\{W_{V,t}^2 \}=1$. 

At this point, we are ready to present the state evolution result \cite{RanganGAMP,javanmard2013state} for the GAMP algorithm \eqref{eq:GAMP}-\eqref{eq:GAMP_onsager}.

\begin{proposition}[State Evolution]
Consider the GAMP iteration in Eqs. \eqref{eq:GAMP}-\eqref{eq:GAMP_onsager}, with initialization $\bu_0 = c \bfone_n \in \reals^n$, for any constant $c>0$. Assume that for $t \ge 0$, the functions $g_t: \reals \times \reals \to \reals$ and $f_t: \reals  \to \reals$ are Lipschitz, and that Assumption \textbf{(B1)} on p.\pageref{Assump:B1} holds for some $k \geq 2$.

Then, the following holds almost surely for any PL(k) function $\psi: 
\reals^{t+1}  \to \reals$, for $t \geq 1$:
\begin{align}
& \lim_{n \to \infty}\frac{1}{n} \sum_{i=1}^n \psi(g_i, u^t_i, u^{t-1}_i, \ldots, u^1_{i} ) = \E\{ \psi(G, \, U_t, \, U_{t-1}, \, \ldots, U_1) \}, \label{eq:psiG} \\
& \lim_{d \to \infty} \frac{1}{d} \sum_{i=1}^d \psi(x_i, v^{t}_i, v^{t-1}_i, \ldots, v^1_i) = \E\{ \psi(X, \, V_{t}, \, V_{t-1}, \, \ldots, V_1) \}, \label{eq:psiX}
\end{align}
where the distributions of the random vectors $(G, U_t, \ldots, U_1)$ and $(X, V_t, \ldots, V_1)$ are given by the state evolution recursion in Eqs. \eqref{eq:Ut_def}-\eqref{eq:WV_corr}.
\label{prop:GAMP_SE}
\end{proposition}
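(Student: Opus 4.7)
The plan is to reduce the GAMP recursion in \eqref{eq:GAMP}--\eqref{eq:GAMP_onsager} to a symmetric abstract AMP iteration for which state-evolution guarantees with PL($k$) test functions are already established, and then translate the conclusion back into the asymmetric $(\bu^t,\bv^t)$ notation. Concretely, I would stack the two families of iterates into vectors in $\reals^{n+d}$, define the symmetrized matrix
\begin{equation*}
\tilde{\bA} \;=\; \frac{1}{\sqrt{\delta}}\begin{pmatrix} \b0 & \bA \\ \bA^\sT & \b0 \end{pmatrix},
\end{equation*}
and interpret one full sweep $(\bu^t \to \bv^{t+1})$ as two half-iterations of an abstract AMP driven by $\tilde{\bA}$ with denoisers that act on the $\reals^n$ or $\reals^d$ block only. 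The Onsager coefficients $\sb_t,\sc_t$ in \eqref{eq:GAMP_onsager} are exactly the empirical averages of the partial derivatives of these block denoisers, so they match the correction demanded by the symmetric AMP master theorem. This reduction is classical and is the route taken in \cite{RanganGAMP,javanmard2013state}.

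The steps I would carry out, in order, are the following. First, I would handle the initialization: since $\bu^0=c\bfone_n$ is deterministic, the first iterate $\bv^1=\bA^\sT g_0(c\bfone_n;\by)/\sqrt{\delta}$ involves $\by$, which depends on $\bA$ through $\bA\bx$. I would apply the standard side-information argument, conditioning on $\bA\bx$ and using that, by Gaussian projection, the remainder of $\bA$ is independent of this sufficient statistic. This puts the iteration in the framework of AMP with an informative initialization given by the pair $(\bx,\by)$. Second, I would invoke the abstract state-evolution theorem for AMP with separable (or, more generally, uniformly PL($k$)) denoisers, which gives almost-sure $W_k$ convergence of the empirical joint distribution of $(g_i,u^1_i,\dots,u^t_i)$ and of $(x_i,v^1_i,\dots,v^t_i)$ to the Gaussian mixture described by \eqref{eq:Ut_def}--\eqref{eq:Vt_def} and the covariance recursion \eqref{eq:WU_corr}--\eqref{eq:WV_corr}. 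Third, using the equivalence recalled in Remark \ref{rmk:equiv}, this $W_k$ convergence implies \eqref{eq:psiG}--\eqref{eq:psiX} for every PL($k$) test function $\psi$.

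Two items deserve separate care. (a) Matching the recursion: one must check that the bookkeeping for the covariances $\E\{W_{U,r}W_{U,t}\}$ and $\E\{W_{V,r+1}W_{V,t+1}\}$ produced by the symmetric reduction coincides with the asymmetric expressions in \eqref{eq:WV_corr_init}--\eqref{eq:WV_corr}. This is routine once the block structure of $\tilde{\bA}$ and the fact that $g_t$ depends on $\by$ only through components coupled to the $\reals^n$ block are unwound, but it requires patience. (b) Moment control: PL($k$) test functions grow like $\|\cdot\|^k$, so the expectations in \eqref{eq:psiG}--\eqref{eq:psiX} are finite only under a $2k-2$ moment assumption on $(X,Y)$. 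This is precisely the content of Assumption \textbf{(B1)}. The inductive argument that propagates these moments along the iteration — showing that the empirical $(2k-2)$-th moments of $\bu^t,\bv^t$ remain bounded almost surely — is the standard inductive bootstrap used in \cite{BayatiMontanariLASSO,rush2018finite,berthier2020state} and must be verified at each step.

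The main obstacle is not conceptual but technical: carefully maintaining the bookkeeping through the symmetric reduction while simultaneously propagating the PL($k$) moment bounds, and ensuring that the deterministic initialization $\bu^0=c\bfone_n$ combined with the data-dependent vector $\bv^1$ fits within the side-information framework of the abstract AMP theorem. Once these steps are in place the conclusion \eqref{eq:psiG}--\eqref{eq:psiX} follows as a direct specialization of the abstract state-evolution result.
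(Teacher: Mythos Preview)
Your sketch is essentially the standard route to this result, and it matches what the cited references \cite{RanganGAMP,javanmard2013state} do: symmetrize the rectangular iteration into an abstract AMP on $\reals^{n+d}$, verify that the Onsager terms line up, handle the dependence of $\by$ on $\bA\bx$ via the Gaussian conditioning/side-information argument, and propagate moment bounds inductively so that PL($k$) conclusions hold under Assumption~\textbf{(B1)}.

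Note, however, that the paper does not actually supply a proof of Proposition~\ref{prop:GAMP_SE}. It is stated as a known result, attributed in the text immediately preceding the proposition to \cite{RanganGAMP,javanmard2013state}, and then used as a black box in the remainder of Section~\ref{sec:proof_mainthm}. So there is no ``paper's own proof'' to compare against; your outline is a faithful summary of how those references establish the result, and the points you flag (matching the covariance bookkeeping in \eqref{eq:WV_corr_init}--\eqref{eq:WV_corr}, and the $(2k{-}2)$-moment propagation) are indeed the places where care is needed.
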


\begin{remark}
%
Suppose that we have a sequence of $PL(k)$ functions $\psi_t: 
\reals^{t+1}  \to \reals$ (indexed by $t$) such that 
\beq
\begin{split} 
& \lim_{t \to \infty} \E\{ \psi_t(G, \, U_{t}, \, U_{t-1}, \, \ldots, U_1)\} = c_U, \\
& \lim_{t \to \infty} \E\{ \psi_t(X, \, V_{t}, \, V_{t-1}, \, \ldots, V_1)\} = c_V,
\end{split}
\eeq
for some constants $c_U, c_V \in \reals$. Then, since the statements \eqref{eq:psiG} and \eqref{eq:psiX} hold with probability $1$ for each fixed $t \geq 1$, we have that, almost surely,
\begin{align}
   & \lim_{t\to \infty} \lim_{n \to \infty} \frac{1}{n} \sum_{i=1}^n \psi(g_i, u^{t}_i,  \ldots, u^1_i) = c_U,  \label{eq:tn_GU_lim} \\
    & \lim_{t\to \infty} \lim_{d \to \infty}   \frac{1}{d} \sum_{i=1}^d \psi(x_i, v^{t}_i,  \ldots, v^1_i)  =  c_V. \label{eq:td_XV_lim}
\end{align}
\label{rem:tn_infty}
\end{remark}

\subsection{GAMP as a Power Method to Compute the Spectral Estimator} \label{eq:GAMP_power}

Consider the GAMP iteration in \eqref{eq:GAMP} initialized with $\bu^0 = \frac{1}{\delta} \bfone_n$, and the function $g_0:\reals \times \reals \to \reals$ chosen as
\beq
g_0(u; \, y) =    \frac{\cT_L(y)}{\sqrt{\delta}},
\label{eq:g0def}
\eeq
so that 
\beq
\bv^1 = \frac{1}{\delta}\bA^{\sT} \cT_L(\by).
\label{eq:v1_sp}
\eeq
(The function $f_0$ is assumed to be zero.)  From \eqref{eq:xlin_def}, we note that $\bv^1 = \sqrt{d} \, \bxl$.

For $t \ge 1$, let the functions $g_t: \reals \times \reals \to \reals$ and $f_t:\reals \to \reals$ be chosen as
\begin{equation}
   g_t(u; \, y) = \sqrt{\delta} \, u \cT(y), \qquad  f_{t}(v) = \frac{v}{\beta_{t}},
   \label{eq:ft_gt_choice}
\end{equation}
where the function $\cT: \reals \to \reals$ is bounded and Lipschitz, and $\beta_t$ is a constant, defined iteratively for $t \ge 1$ via the state evolution equations below (Eqs. \eqref{eq:SE_lin1}-\eqref{eq:defbetat1}). To prove Theorem \ref{th:W2conv2}, we will choose $\cT$ as a suitable function of $\cT_s$ (see \eqref{eq:deftildeZ}). Note that the functions $g_t$ and $f_t$ are required to be Lipschitz for $t\ge 0$, and this  will be ensured choosing $\cT$ to be bounded and Lipschitz  (see Lemma \ref{lem:evec_emp_dist} below).

With this choice of $f_t, g_t$, for $t\ge 1$, the scalars in \eqref{eq:GAMP_onsager} are given by
\beq
\sb_t =\frac{1}{\delta \beta_t}, \qquad
\sc_t = \sqrt{\delta} \cdot \frac{1}{n}\sum_{i=1}^n \cT(y_i).
\eeq
In the GAMP iteration below, we will replace the parameter $\sc_t$  by its almost sure limit $\bar{\sc}_t = \sqrt{\delta} \, \E\{Z\}$, where $Z \triangleq \cT(Y)$. The state evolution result in Proposition \ref{prop:GAMP_SE} still holds when $\sc_t$ is replaced with $\bar{\sc}_t$ in the GAMP iteration \cite{RanganGAMP, javanmard2013state}. This can be shown using the pseudo-Lipschitz property of the test functions $\psi$ in Eqs. \eqref{eq:psiG}-\eqref{eq:psiX} and the fact that  $ \lim_{n \to \infty} \frac{1}{n}\sum_{i=1}^n \cT(y_i) = \E\{ Z \}$ almost surely, due to the strong law of large numbers.


With these choices, the GAMP iteration in \eqref{eq:GAMP} is as follows.  Initializing with 
\beq
\bu^0 = \frac{1}{\delta} \bfone_n, \qquad \bv^1 = \frac{1}{\delta} \bA^{\sT} \cT_L(\by),
\label{eq:power_gamp_init}
\eeq
we have for $t \ge 1$: %
\begin{equation}
\label{eq:GAMP_choice_fg}
\begin{split}
    \bu^t &= 
    \begin{cases}
    \frac{1}{\sqrt{\delta} \, \beta_t} \big[ \bA \bv^t \, -  \, \bZ_L \bu^{t-1} \big], & \text{ for } t=1, \\
    \frac{1}{\sqrt{\delta} \, \beta_t} \big[ \bA \bv^t \, -  \, \bZ \bu^{t-1} \big], & \text{ for } t >1,  
    \end{cases}\\
\bv^{t+1} &=  \bA^\sT \bZ \bu^t - \frac{\sqrt{\delta}}{\beta_t} \, \E\{ Z \} \,  \bv^t,
\end{split}    
\end{equation}
where $\bZ_L = \diag(\cT_L(y_1), \ldots, \cT_L(y_n))$ and   $\bZ = \diag(\cT(y_1), \ldots, \cT(y_n))$.
The state evolution equations in \eqref{eq:SE_def} become:
\begin{align}
    & \mu_{U,t} = \frac{\mu_{V,t}}{\sqrt{\delta} \beta_t}, \qquad \sigma_{U, t}^2 = \frac{\sigma_{V,t}^2}{ \delta \, \beta^2_t},  \label{eq:SE_lin1} \\
    & \mu_{V,t+1} = \sqrt{\delta}  \,\frac{\mu_{V,t}}{\beta_t} \big[ \E\{ZG^2\} - \E\{Z\} \big], \qquad
    \sigma_{V,t+1}^2 = \frac{1}{\beta_t^2}\big[ \mu_{V,t}^2 \E\{Z^2G^2\} + 
    \sigma_{V,t}^2 \E\{ Z^2\} \big],
    \label{eq:SE_lin2}\\
    & \beta_{t+1} = \sqrt{\mu_{V,{t+1}}^2 + \sigma_{V,{t+1}}^2}. \label{eq:defbetat1}
\end{align}
Here, we recall that 
$G \sim \normal(0,1)$  and $Z=\cT(Y)$, for $Y \sim p_{Y|G}(\cdot \mid G)$. From \eqref{eq:SEinit0}, the state evolution iteration is initialized with the following:
\beq
\mu_{V,1} =   \E\{ \cT_L(Y) G \}, \qquad  \sigma_{V,1}^2 =  \frac{1}{\delta}\E\{ \cT_L(Y)^2 \}, \qquad 
\beta_1 = \sqrt{\mu_{V,1}^2 + \sigma_{V,1}^2}.
\label{eq:SEinit}
\eeq

We will show in Lemma \ref{lem:evec_emp_dist} that in the high-dimensional limit, the vector $\bv^t$  in \eqref{eq:GAMP_choice_fg} tends to align with the principal eigenvector of the matrix $\bM_n=  \bA^\sT \bZ(\bZ+\delta\mathbb E\{Z(G^2-1)\}\bI_n)^{-1} \bA$ as $t \to \infty$. In other words, the GAMP iteration is equivalent to a power iteration for $\bM_n$. This equivalence, together with Proposition \ref{prop:GAMP_SE}, allows us to precisely characterize the limiting empirical distribution of the eigenvector of $\bM_n$ in  Lemma \ref{lem:evec_emp_dist}.

We begin with a result characterizing the fixed points of the state evolution recursion in \eqref{eq:SE_lin1}-\eqref{eq:SE_lin2}.

\begin{lemma}
Assume that $\E\{ Z(G^2-1)\} >0$ and that $\delta > \frac{\E\{ Z^2\}}{ (\E\{ Z G^2 \} \, - \,\E\{Z\})^2}$. Then, the state evolution recursion \eqref{eq:SE_lin1}-\eqref{eq:SE_lin2} has three fixed points:  one is $\FP_0 \equiv (\bar{\mu}_V=0, \bar{\sigma}_V^2 =\E\{ Z^2\})$, and the other  two are $\FP_1 \equiv (\tilde{\mu}_V, \tilde{\sigma}_V^2)$ and $\FP_2 \equiv (-\tilde{\mu}_V, \tilde{\sigma}_V^2)$, where 
\beq
\tilde{\mu}_V = \sqrt{\frac{\tilde{\beta}^2(\tilde{\beta}^2 -  \E\{Z^2\})}{\tilde{\beta}^2 + \E\{Z^2 G^2\} - \E\{Z^2\}}}, \qquad  \tilde{\sigma}_V^2 = \frac{\tilde{\beta}^2 \E\{Z^2 G^2\}}{ \tilde{\beta}^2 + \E\{ Z^2 G^2\} - \E\{Z^2\}}, 
\label{eq:FP1}
\eeq
with 
\beq
\tilde{\beta}^2 =  \delta\, ( \E\{ZG^2\} - \E\{Z\})^2.
\label{eq:beta2_def}
\eeq

Furthermore, if the initialization \eqref{eq:SEinit} is such that $\mu_{V,1} >0$, then the recursion  converges to $\FP_1$. If $\mu_{V,1} < 0$, the recursion converges to $\FP_2$. 
\label{lem:fixed_pts}
\end{lemma}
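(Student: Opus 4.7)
My strategy is to first identify all fixed points of \eqref{eq:SE_lin1}--\eqref{eq:defbetat1} by algebraic manipulation, and then to prove convergence by reducing the two-dimensional recursion to a scalar iteration on the normalized ratio $a_t := \mu_{V,t}/\beta_t \in [-1,1]$, whose dynamics turn out to be given by a M\"obius-type map on $[0,1]$ that can be analyzed by monotonicity and concavity.

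For the fixed points, I impose $\mu_{V,t+1} = \mu_{V,t} =: \bar{\mu}_V$ and $\sigma_{V,t+1}^2 = \sigma_{V,t}^2 =: \bar{\sigma}_V^2$ in \eqref{eq:SE_lin2}. The $\mu_V$-equation gives either $\bar{\mu}_V = 0$ or $\bar{\beta}^2 = \delta(\E\{ZG^2\}-\E\{Z\})^2 = \tilde{\beta}^2$. In the first case, the $\sigma_V^2$-equation forces $\bar{\beta}^2 = \E\{Z^2\}$, producing $\FP_0$. In the second case, the $\sigma_V^2$-equation together with the constraint $\bar{\mu}_V^2 + \bar{\sigma}_V^2 = \tilde{\beta}^2$ is a linear system in $(\bar{\mu}_V^2,\bar{\sigma}_V^2)$ whose unique solution is \eqref{eq:FP1}, up to the sign of $\bar{\mu}_V$; the assumption $\delta > \E\{Z^2\}/(\E\{ZG^2\}-\E\{Z\})^2$, equivalent to $\tilde{\beta}^2 > \E\{Z^2\}$, is exactly what is needed to make $\tilde{\mu}_V^2 > 0$.

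For convergence, I rewrite \eqref{eq:SE_lin2}--\eqref{eq:defbetat1} in terms of $a_t$. Since $\sigma_{V,t}^2/\beta_t^2 = 1 - a_t^2$, a short computation yields
\begin{equation*}
\mu_{V,t+1}^2 = \tilde{\beta}^2 a_t^2, \qquad \beta_{t+1}^2 = \E\{Z^2\} + C\, a_t^2, \qquad a_{t+1}^2 = \frac{\tilde{\beta}^2 \, a_t^2}{\E\{Z^2\} + C\, a_t^2},
\end{equation*}
with $C := \tilde{\beta}^2 + \E\{Z^2 G^2\} - \E\{Z^2\} > 0$. The scalar map $x \mapsto \tilde{\beta}^2 x/(\E\{Z^2\} + C x)$ on $[0,1]$ is strictly increasing and concave, with exactly two fixed points: $x=0$, whose linearization has slope $\tilde{\beta}^2/\E\{Z^2\} > 1$ (unstable), and $x = a_*^2 := (\tilde{\beta}^2 - \E\{Z^2\})/C$, whose linearization has slope $\E\{Z^2\}/\tilde{\beta}^2 < 1$ (attracting). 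Monotonicity and concavity then imply that every orbit started at $a_1^2 \in (0,1]$ converges monotonically to $a_*^2$. Finally, because $\E\{Z(G^2-1)\} > 0$, the relation $\mu_{V,t+1} = \sqrt{\delta}\, a_t \, \E\{Z(G^2-1)\}$ preserves the sign of $a_t$, so the sign of $\mu_{V,1}$ selects between $\tilde{\mu}_V$ and $-\tilde{\mu}_V$, giving $\FP_1$ or $\FP_2$ respectively. Once $a_t^2 \to a_*^2$, I read off $\beta_t^2 \to \tilde{\beta}^2$, $\mu_{V,t}^2 \to \tilde{\mu}_V^2$, and $\sigma_{V,t}^2 = \beta_t^2 - \mu_{V,t}^2 \to \tilde{\sigma}_V^2$.

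The only mildly delicate point is promoting local (linearization-based) convergence of the scalar recursion to global convergence; this is precisely where the explicit M\"obius form is useful, because monotonicity plus concavity suffice to attract every orbit in $(0,1]$ to $a_*^2$ without a separate argument near the unstable fixed point $0$.
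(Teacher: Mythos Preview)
Your proof is correct and follows essentially the same route as the paper: both reduce the two-dimensional state evolution to a one-dimensional recursion via a ratio variable---you use $a_t^2 = \mu_{V,t}^2/\beta_t^2$, the paper uses $\gamma_t^2 = \mu_{V,t}^2/\sigma_{V,t}^2$, and these are related by the monotone bijection $a^2 = \gamma^2/(1+\gamma^2)$, so the resulting scalar maps are conjugate. Your convergence argument via monotonicity-plus-concavity of the M\"obius map handles both sides of the nontrivial fixed point uniformly and is slightly more streamlined than the paper's, which splits into a monotone-bounded argument below $\gamma^2_{\FP_{12}}$ and a Banach contraction above it, but the substance is the same.
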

The lemma is proved in Appendix \ref{app:lem_fixed_pts_proof}. We note that Lemma \ref{lem:fixed_pts} (and the subsequent Lemmas \ref{lem:diffs}-\ref{lem:evec_emp_dist} as well) assumes that $\E\{ Z(G^2-1)\} >0$ and $\delta > \frac{\E\{ Z^2\}}{ (\E\{ Z G^2 \} \, - \,\E\{Z\})^2}$. These conditions concern the auxiliary random variable $Z$ (or, equivalently, the auxiliary function $\cT$). In the proof of Theorem \ref{th:W2conv2}, which appears at the end of this section, we will provide a choice of $Z$ (depending on $Z_s$) that fulfills these requirements; see \eqref{eq:deftildeZ}. Let us also point out that the condition $\delta > \frac{\E\{ Z^2\}}{ (\E\{ Z G^2 \} \, - \,\E\{Z\})^2}$ follows from $\psi_{\delta}'(\lambda_{\delta}^*)> 0$, which in turn ensures that $\rho_s>0$. For a discussion of the case $\rho_s=0$, see Remark \ref{rmk:special}.

The next lemma shows that the mean-squared difference between successive AMP iterates vanishes as $t \to \infty$ in the high-dimensional limit.

\begin{lemma}
Assume that $\E\{ Z(G^2-1)\} >0$, $\delta > \frac{\E\{ Z^2\}}{ (\E\{ Z G^2 \} \, - \,\E\{Z\})^2}$, and $\abs{\E\{ \cT_L(Y) G \}} >0$.  Consider the GAMP iteration in \eqref{eq:GAMP_choice_fg} initialized with $\bu^0 = \frac{1}{\delta}\bfone_n$. Then, the following limits hold almost surely:
\begin{align}
\lim_{t \to \infty} \lim_{n \to \infty} \, \frac{1}{n} \| \bu^t - \bu^{t-1} \|^2 =0, \qquad \lim_{t\to \infty} \lim_{d \to \infty} \, \frac{1}{d} \| \bv^{t+1} - \bv^{t} \|^2 =0.
\end{align}
\label{lem:diffs}
\end{lemma}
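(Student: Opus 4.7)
The plan is to use Proposition~\ref{prop:GAMP_SE} to reduce the two limits to scalar statements about the state-evolution parameters, and then to analyze a deterministic recursion for the covariance between successive Gaussian noise terms.

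First I would apply Proposition~\ref{prop:GAMP_SE} with the $\PL(2)$ test function $\psi(a,b)=(a-b)^2$, which yields, almost surely for each fixed $t\ge 1$,
\[
\lim_{d\to\infty}\frac{1}{d}\|\bv^{t+1}-\bv^t\|^2 = \E\{(V_{t+1}-V_t)^2\}, \qquad \lim_{n\to\infty}\frac{1}{n}\|\bu^t-\bu^{t-1}\|^2 = \E\{(U_t-U_{t-1})^2\}.
\]
By Remark~\ref{rem:tn_infty}, the claim reduces to showing that these right-hand sides vanish as $t\to\infty$. Using $V_t=\mu_{V,t}X+\sigma_{V,t}W_{V,t}$ together with the independence of $X$ from $W_{V,t}$ and $\E\{X^2\}=1$, I would expand
\[
\E\{(V_{t+1}-V_t)^2\} = (\mu_{V,t+1}-\mu_{V,t})^2 + \sigma_{V,t+1}^2 + \sigma_{V,t}^2 - 2\sigma_{V,t+1}\sigma_{V,t}\,\E\{W_{V,t+1}W_{V,t}\},
\]
and analogously for the $U$-iterates. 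Since the hypothesis $|\E\{\cT_L(Y)G\}|>0$ gives $\mu_{V,1}\neq 0$, Lemma~\ref{lem:fixed_pts} applies and yields $(\mu_{V,t},\sigma_{V,t})\to(\bar{\mu}_V,\tilde{\sigma}_V)$, so the first three terms have common limit $2\tilde{\sigma}_V^2$. The whole task therefore reduces to proving $\rho_t := \E\{W_{V,t+1}W_{V,t}\}\to 1$ (with the analogous statement for the $W_{U,\cdot}$'s deduced below).

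Next I would derive a scalar recursion for $\rho_t$. Plugging $f_t(v)=v/\beta_t$ into \eqref{eq:WU_corr} and using the relations $\sqrt{\delta}\mu_{U,t}=\mu_{V,t}/\beta_t$ and $\sigma_{U,t}=\sigma_{V,t}/(\sqrt{\delta}\beta_t)$ from \eqref{eq:SE_lin1}, I obtain the identity $\E\{W_{U,r}W_{U,t}\}=\E\{W_{V,r}W_{V,t}\}$ for all $r,t\ge 1$. Substituting $g_t(u;y)=\sqrt{\delta}uZ$ into \eqref{eq:WV_corr} with $r=t-1$ and using that $W_{U,r}$ is independent of $(G,Z)$ then produces a scalar linear recursion $\rho_t = A_t + B_t\,\rho_{t-1}$ with
\[
A_t = \frac{\mu_{V,t-1}\mu_{V,t}\,\E\{Z^2G^2\}}{\beta_{t-1}\beta_t\,\sigma_{V,t}\sigma_{V,t+1}},\qquad B_t = \frac{\sigma_{V,t-1}\,\E\{Z^2\}}{\beta_{t-1}\beta_t\,\sigma_{V,t+1}}.
\]
By Lemma~\ref{lem:fixed_pts}, $A_t\to\tilde{A}$ and $B_t\to\tilde{B}$; using \eqref{eq:FP1} and \eqref{eq:beta2_def} one computes $\tilde{B}=\E\{Z^2\}/\tilde{\beta}^2$ and $\tilde{A}=1-\tilde{B}$. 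The hypothesis $\delta>\E\{Z^2\}/(\E\{ZG^2\}-\E\{Z\})^2$ ensures $\tilde{B}<1$, and so $\rho^{*}=\tilde{A}/(1-\tilde{B})=1$ is the unique attracting fixed point of the limiting recursion.

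The main obstacle is transferring this asymptotic contraction to actual convergence of the time-varying iterates $\rho_t$. Since $B_t\to\tilde{B}<1$, one can choose $T_0$ so that $|B_t|\le(1+\tilde{B})/2<1$ for all $t\ge T_0$; similarly $A_t = (1-\tilde{B})+\eta_t$ with $\eta_t\to 0$. Writing $r_t := \rho_t-1$, these facts give a perturbed contractive inequality of the form $|r_t|\le \tfrac{1+\tilde{B}}{2}|r_{t-1}|+|\eta_t|$ valid for $t\ge T_0$, and a standard geometric-series comparison yields $r_t\to 0$. The analogous statement for the $\bu^t$-iterates then follows immediately from the identity $\E\{W_{U,t}W_{U,t-1}\}=\E\{W_{V,t}W_{V,t-1}\}$, completing the proof.
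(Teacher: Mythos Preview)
Your proposal is correct and follows essentially the same route as the paper: apply Proposition~\ref{prop:GAMP_SE} with a $\PL(2)$ test function to reduce to the state-evolution parameters, derive the linear recursion $\rho_t=A_t+B_t\rho_{t-1}$ for the cross-covariance $\E\{W_{V,t+1}W_{V,t}\}$, and show it converges to $1$ using that $A_t\to 1-\tilde B$ and $B_t\to\tilde B<1$. The only cosmetic differences are that you make the clean identity $\E\{W_{U,r}W_{U,t}\}=\E\{W_{V,r}W_{V,t}\}$ explicit (the paper leaves it implicit) and you finish with a contraction/perturbation argument rather than the paper's $\limsup/\liminf$ sandwich; also note the typo $\bar{\mu}_V$ should be $\pm\tilde{\mu}_V$.
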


 The proof of the lemma is given in Appendix \ref{app:proof_lem_diffs}.
 The next result is the main technical lemma needed to prove Theorem \ref{th:W2conv2}.
 It shows that, as $t$ grows large, $\bv^t$ tends to be aligned with the principal eigenvector of the matrix $\bM_n$ defined in \eqref{eq:matZ1} below. Theorem \ref{th:W2conv2} is then obtained from Lemma \ref{lem:evec_emp_dist} by using a suitable choice for $\cT(\cdot)$ in the GAMP iteration \eqref{eq:GAMP_choice_fg}, which ensures that $\bM_n$ is a scaled version of $\bD_n$ in \eqref{eq:Dn_def}.

\begin{lemma}
Let $\bx$ be such that $\|\bx\|^2=d$, $\{\ba_i\}_{1\le i\le n}\sim_{i.i.d.}\normal({\b0}_d,\id_d/d)$, and $\by$ be distributed according to \eqref{eq:defy}. Let $n/d\to \delta$, $G\sim \normal(0, 1)$ and $Z=\mathcal T(Y)$
for $Y\sim p_{Y|G}(\,\cdot\,|\,G)$. Assume that the conditions \textbf{(B1)}-\textbf{(B2)} on p.\pageref{Assump:B1} hold (with $\cT_s(\cdot)$ replaced by $\mathcal T(\cdot)$ in \textbf{(B2)}). Assume also that $Z$ is bounded, $\mathbb P(Z>-1)=1$, $\E\{ Z(G^2-1)\} >0$ and $\delta > \frac{\E\{ Z^2\}}{ (\E\{ Z G^2 \} \, - \,\E\{Z\})^2}$.  Define $Z'=\frac{Z}{Z+\delta\mathbb E\{Z(G^2-1)\}}$ and assume that $Z'$ satisfies the assumptions \textbf{(A1)}, \textbf{(A2)},  \textbf{(A3)} on p. \pageref{assump:A1}.   Define the matrix 
\begin{equation}\label{eq:matZ1}
  \bM_n=  \bA^\sT \bZ(\bZ+\delta\mathbb E\{Z(G^2-1)\}\bI_n)^{-1} \bA,
\end{equation} 
where $\bZ = \diag(\cT(y_1), \ldots, \cT(y_n))$. Let $\hat{\bphi}_1$ be the principal eigenvector of $\bM_n$, let its sign be chosen so that $\<\hat{\bphi}_1, \bx\>\ge0$, and consider the rescaling $\tilde{\bphi}^{(1)}=\sqrt{d}\hat{\bphi}_1$. Also, let $\tbx^{\rm L} = \sqrt{d} \bxl$, where $\bxl$ is the linear estimator defined in \eqref{eq:xlin_def}. 

Then, the following holds almost surely  for any PL(k) function $\psi: \reals \times \reals \times \reals \to \reals$:
\begin{align}
& \lim_{d \to \infty} \,   \frac{1}{d} \sum_{i=1}^d \psi(x_i, \tilde{x}^{\rm L}_i,  \tilde{\varphi}_i^{(1)}) = \E\{ \psi(X, 
\, \mu_{V,1} X + \sigma_{V,1} W_{V,1}
,\,  \tilde{\beta}^{-1}(\tilde{\mu}_{V} X + \tilde{\sigma}_{V} W_{V, \infty})) \} \label{eq:psiX1}. 
\end{align}
Here $X \sim P_X$,  $\tilde{\mu}_{V}, \tilde{\sigma}_{V}, \tilde{\beta}$ are given by \eqref{eq:FP1}-\eqref{eq:beta2_def}, and $\mu_{V,1},\sigma_{V,1}^2$ are given by \eqref{eq:SEinit}. The random variables $(W_{V,1}, W_{V, \infty})$ are independent of $X$, and jointly Gaussian with zero mean and covariance:
\beq
\E\{ W_{V,1}^2 \} = \E\{ W_{V,\infty}^2 \} =1, \qquad 
\E\{W_{V,1} W_{V,\infty}  \} =  \frac{ \tilde{\mu}_{V} \, \E\{ \cT_L(Y) Z G \}}
{ \tilde{\beta} \tilde{\sigma}_{V} \sqrt{ \E\{ \cT_L(Y)^2\} } }.
\label{eq:WV1inf_cov}
\eeq
\label{lem:evec_emp_dist}
\end{lemma}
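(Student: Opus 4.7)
The plan is to exploit the fact that the GAMP iteration \eqref{eq:GAMP_choice_fg} is a power iteration for $\bM_n$ corrected by Onsager terms: by construction $\bv^1 = \tbx^{\rm L}$, and for large $t$ the rescaled iterate $\bv^t/\tilde{\beta}$ aligns with the principal eigenvector $\tilde{\bphi}^{(1)} = \sqrt{d}\,\hat{\bphi}_1$. Combining this alignment with the state evolution of Proposition \ref{prop:GAMP_SE} and a PL($k$) replacement will yield \eqref{eq:psiX1}. Concretely, I will (i) apply Proposition \ref{prop:GAMP_SE} at fixed $t$ to compute the joint limit of $(\bx, \bv^1, \bv^t/\tilde{\beta})$; (ii) pass $t\to\infty$ via Lemma \ref{lem:fixed_pts} to identify the Gaussian limit in \eqref{eq:psiX1}; (iii) use Lemma \ref{lem:diffs} together with the spectral gap of $\bM_n$ from Lemma \ref{lemma:pt} to show that $\frac{1}{d}\|\bv^t/\tilde{\beta} - \tilde{\bphi}^{(1)}\|_2^2\to 0$ in the iterated limit; and (iv) conclude by PL($k$) substitution.

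\textbf{Steps (i)--(ii): state evolution and fixed point.} By Proposition \ref{prop:GAMP_SE}, for each fixed $t\ge 1$ and any PL($k$) test function $\psi$,
\[
\frac{1}{d}\sum_{i=1}^d \psi(x_i, v_i^1, v_i^t/\tilde{\beta}) \ras \E\{\psi(X, V_1, V_t/\tilde{\beta})\},
\]
with $V_s = \mu_{V,s} X + \sigma_{V,s} W_{V,s}$ and $(\mu_{V,s}, \sigma_{V,s})$ evolving via \eqref{eq:SE_lin2}--\eqref{eq:defbetat1}. Since $|\E\{\cT_L(Y) G\}|>0$, after a sign flip of $\cT_L$ if needed we may take $\mu_{V,1}>0$, whence Lemma \ref{lem:fixed_pts} gives $(\mu_{V,t}, \sigma_{V,t}^2)\to(\tilde{\mu}_V, \tilde{\sigma}_V^2)$ and $\beta_t\to\tilde{\beta}$. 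The covariance $\E\{W_{V,1} W_{V,t}\}$ follows from iterating \eqref{eq:WV_corr_init}--\eqref{eq:WV_corr} with the choices \eqref{eq:g0def} and \eqref{eq:ft_gt_choice}: plugging $g_{t-1}(u;y)=\sqrt{\delta}\,u\,Z$ collapses the relevant expectation to $\mu_{U,t-1}\E\{\cT_L(Y) Z G\}$, and using $\mu_{U,t-1}\to \tilde{\mu}_V/(\sqrt{\delta}\tilde{\beta})$ reproduces the cross-covariance in \eqref{eq:WV1inf_cov}. Thus as $t\to\infty$ the right-hand side converges to $\E\{\psi(X, V_1, V_\infty/\tilde{\beta})\}$, matching the target of \eqref{eq:psiX1}.

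\textbf{Step (iii): power-method convergence (main obstacle).} The crux is to establish
\[
\lim_{t\to\infty}\limsup_{n\to\infty}\frac{1}{d}\big\|\bv^t/\tilde{\beta} - \tilde{\bphi}^{(1)}\big\|_2^2 = 0 \quad \text{a.s.}
\]
Eliminating $\bu^t$ from \eqref{eq:GAMP_choice_fg} as in \eqref{eq:newGAMP}--\eqref{eq:newGAMP3} gives $\bM_n(\bv^t/\tilde{\beta}) = \lambda_*(\bv^t/\tilde{\beta}) + \bR_t$, where $\lambda_* = \E\{ZG^2\}/\E\{Z(G^2-1)\}$ and $\bR_t$ collects terms proportional to the successive-iterate differences $\bv^{t+1}-\bv^t$, $\bu^t-\bu^{t-1}$, and the scalar gap $\beta_t-\tilde{\beta}$. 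By Lemma \ref{lem:diffs}, boundedness of $\bZ$, and the a.s.\ bound on $\|\bA\|_{\rm op}$, we get $\frac{1}{d}\|\bR_t\|_2^2\to 0$ in the iterated limit. Lemma \ref{lemma:pt} applies to $\bM_n$ because, by hypothesis, $Z' = Z/(Z+\delta\E\{Z(G^2-1)\})$ satisfies \textbf{(A1)}--\textbf{(A3)}; it yields an a.s.\ positive spectral gap $\lambda_1^{\bM_n}-\lambda_2^{\bM_n}$. The approximate eigenvalue relation combined with this spectral gap and the norm control $\|\bv^t/\tilde{\beta}\|_2^2/d \to 1$ (from $\beta_t\to\tilde{\beta}$ and $\E\{V_t^2\} = \beta_t^2$) forces $\bv^t/\tilde{\beta}$ to align with the unit principal eigenvector $\hat{\bphi}_1 = \tilde{\bphi}^{(1)}/\sqrt{d}$; the sign ambiguity is resolved consistently because $\mu_{V,t}>0$ matches $\langle \tilde{\bphi}^{(1)}, \bx\rangle\ge 0$. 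This is the main technical obstacle, as it requires turning an approximate-eigenvector bound into quantitative $\ell_2$ alignment with the top eigenspace via a spectral decomposition of $\bM_n$ controlled uniformly in $n$ by the spectral gap.

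\textbf{Step (iv): PL($k$) replacement and conclusion.} The PL($k$) property \eqref{eq:PL2_prop} together with Cauchy--Schwarz yields
\[
\bigg|\frac{1}{d}\sum_{i=1}^d \psi(x_i, \tilde{x}_i^{\rm L}, v_i^t/\tilde{\beta}) - \frac{1}{d}\sum_{i=1}^d \psi(x_i, \tilde{x}_i^{\rm L}, \tilde{\varphi}_i^{(1)})\bigg| \le C\, M_{n,t}^{1/2} \, \bigg(\frac{1}{d}\big\|\bv^t/\tilde{\beta}-\tilde{\bphi}^{(1)}\big\|_2^2\bigg)^{1/2},
\]
where $M_{n,t}$ aggregates the $(2k-2)$-th empirical moments of $(x_i, \tilde{x}_i^{\rm L}, v_i^t/\tilde{\beta}, \tilde{\varphi}_i^{(1)})$. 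By Assumption \textbf{(B1)}, the Lipschitz property of $\cT_L$ and $\cT$, state-evolution control of the moments of $\tilde{x}^{\rm L}$ and $\bv^t/\tilde{\beta}$, and the normalization $\|\tilde{\bphi}^{(1)}\|_2^2/d = 1$, the factor $M_{n,t}$ is a.s.\ bounded uniformly in $n$ and then in $t$. The displayed difference thus vanishes in the iterated limit by Step (iii), and combining with Steps (i)--(ii) gives \eqref{eq:psiX1}.
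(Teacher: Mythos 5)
Your plan follows the same architecture as the paper's proof: identify $\bv^1 = \tbx^{\rm L}$, show that the rescaled GAMP iterates align with the top eigenvector of $\bM_n$ via an approximate-eigenvector relation controlled by Lemma \ref{lem:diffs} and a spectral gap deduced from Lemma \ref{lemma:pt}, then pass to the fixed point of state evolution and finish with a PL($k$) replacement. Steps (i)--(iii) are essentially the paper's argument (the paper also performs a preliminary rescaling to the normalization $\E\{Z(G^2-1)\}=1/\delta$, which you leave implicit but which simplifies the bookkeeping and makes $\lambda_*=1+\delta\E\{Z\}$ coincide cleanly with $\lim_n\lambda_1^{\bM_n}$).

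There is, however, a genuine gap in Step (iv). Your bound on $M_{n,t}$ requires control of the empirical $(2k-2)$-th moments of all four sequences. For $x_i$ this comes from \textbf{(B1)}; for $\tilde{x}^{\rm L}_i = v^1_i$ and $v^t_i/\tilde{\beta}$ it comes from the state evolution machinery (e.g. \cite[Lemma 2(e)]{javanmard2013state}). But for $\tilde{\varphi}^{(1)}_i$ you invoke only $\|\tilde{\bphi}^{(1)}\|_2^2/d=1$, which controls the second moment and is insufficient when $k>2$. Bounding $\limsup_{d}\frac{1}{d}\sum_i|\tilde{\varphi}^{(1)}_i|^{2(k-1)}$ is not automatic: the entries of the top eigenvector of a random matrix need not have controlled higher moments a priori. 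The paper closes this gap with a rotational-invariance argument (Eqs.~\eqref{eq:rot_inv_app}--\eqref{eq:rotinvfin}): conditionally on the data, $\tilde{\bphi}^{(1)}$ has the same law as $a_d\,\bx + \sqrt{1-a_d^2}\,\bphi^\perp$ with $\bphi^\perp$ uniform on the sphere orthogonal to $\bx$, and the latter is then expressed as a normalized Gaussian via \cite[Sec.~3.3.3]{vershynin2018high}, whose empirical $(2k-2)$-th moments concentrate. You need this (or an equivalent) argument to make your moment bound rigorous; without it, the PL($k$) replacement in Step (iv) is unsupported for $k>2$.

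A smaller point: you assert that Lemma \ref{lemma:pt} ``yields an a.s.\ positive spectral gap'' for $\bM_n$, but this requires checking that, for the effective preprocessing $Z'$, the relevant unique solution $\lambda_{\delta}^*$ equals the right-hand endpoint of the bulk and that $\psi_\delta'(\lambda_\delta^*)>0$, which is where the hypotheses $\E\{Z(G^2-1)\}>0$ and $\delta > \E\{Z^2\}/(\E\{ZG^2\}-\E\{Z\})^2$ enter. This is exactly what the paper's in-line Lemma~\ref{lem:Mn_specgap} verifies (showing $\lambda_1^{\bM_n}\to 1+\delta\E\{Z\}$ and $\limsup\lambda_2^{\bM_n}<1+\delta\E\{Z\}-c_1$); you should spell out that check rather than assume it.
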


We first show how Theorem \ref{th:W2conv2} is obtained from Lemma \ref{lem:evec_emp_dist}, and then prove the lemma in the following subsection.

\begin{proof}[\textbf{Proof of Theorem \ref{th:W2conv2}}]
Recall that $\lambda_\delta^*$ is the unique solution of \eqref{eq:uniqsol} for $\lambda>\tau$, where $\tau$ is the supremum of the support of $Z_s$. Define
\begin{equation}\label{eq:deftildeZ}
Z \triangleq \frac{Z_s}{\lambda_\delta^*- Z_s}  = \frac{\cT_s(Y)}{\lambda_\delta^*- \cT_s(Y)}.
\end{equation}
We now verify that $Z$ satisfies the assumptions of Lemma \ref{lem:evec_emp_dist}. As $\lambda_\delta^*>\tau$ and $Z_s$ has bounded support with supremum $\tau$, we have that $Z$ is bounded. Furthermore, $Z$ is a Lipschitz function of $Y$, since $Z_s$ is a Lipschitz function of $Y$ and $Z_s$ is bounded away from $\lambda_\delta^*$. Thus, $Z$ satisfies the condition \textbf{(B2)} (with $\cT_s(\cdot)$ replaced by $\cT(\cdot)$).

Next, note that $\tau >0$ (since $Z_s$ satisfies assumption \textbf{(A3)}). As $\mathbb P(\lambda_\delta^*- Z_s>0)=1$, we have that  $\mathbb P(Z>-1)=1$.

As $\psi_{\delta}'(\lambda_{\delta}^*)> 0$, we have that $\lambda_{\delta}^*>\bar{\lambda}_\delta$, where $\bar{\lambda}_\delta$ denotes the point at which $\psi_\delta$ attains its minimum (see Eq. \eqref{eq:minpsi}). Consequently, since $\lambda_{\delta}^*$ solves \eqref{eq:uniqsol}, we have that
\begin{equation}\label{eq:eqlds}
 \psi_\delta(\lambda_{\delta}^*)=\phi(\lambda_{\delta}^*).   
\end{equation}
As $Z_s$ satisfies assumption \textbf{(A3)}, we have that $\tau>0$, which implies that $\lambda_\delta^*>0$. Thus, by using the definitions \eqref{eq:defphi2}-\eqref{eq:defpsi}, \eqref{eq:eqlds} can be rewritten as
\begin{equation}\label{eq:alphastarnewc}
    \mathbb E\left\{\frac{Z_s(G^2-1)}{\lambda_{\delta}^*-Z_s}\right\}=\frac{1}{\delta}.
\end{equation}
By combining \eqref{eq:deftildeZ} and \eqref{eq:alphastarnewc}, we obtain that
\begin{equation}\label{eq:condZt}
    \mathbb E\{Z(G^2-1)\}=\frac{1}{\delta}>0.
\end{equation}

Finally, we compute the derivative of $\psi_\delta(\lambda)$ at $\lambda_{\delta}^*$, noting that the derivative and the expectation in \eqref{eq:defpsi} can be interchanged due to bounded convergence. We thus obtain that the condition $\psi_{\delta}'(\lambda_{\delta}^*)> 0$ is equivalent to
\begin{equation}
    \label{eq:tilZ_sq}
    \frac{1}{\delta} > \mathbb E\left\{\frac{Z_s^2}{(\lambda_{\delta}^*-Z_s)^2}\right\} = \mathbb E\left\{Z^2\right\},
\end{equation}
where the last equality follows from \eqref{eq:deftildeZ}. From \eqref{eq:condZt} and \eqref{eq:tilZ_sq}, we have $\delta > \frac{\E\{ Z^2\}}{ (\E\{ Z G^2 \} \, - \,\E\{Z\})^2}$.

We also have
\begin{equation}
\label{eq:ZprZs}
  Z'\triangleq  \frac{Z}{Z+\delta\mathbb E\{Z(G^2-1)\}}=\frac{Z}{Z+1}=\frac{Z_s}{\lambda_\delta^*},
\end{equation}
where in the first equality we use \eqref{eq:condZt} and in the second equality we use \eqref{eq:deftildeZ}. Thus, $Z'$ satisfies the assumptions \textbf{(A1)}-\textbf{(A2)}-\textbf{(A3)} (with $\tau/\lambda_\delta^*$ being the supremum of its support), and we can apply Lemma \ref{lem:evec_emp_dist}.

Using \eqref{eq:ZprZs} in \eqref{eq:matZ1}, we see that $\bM_n = \frac{1}{\lambda_\delta^*}\bA^{\sT} \bZ_s \bA$. Therefore the principal eigenvector of $\bM_n$ is equal to $\bxs$. Furthermore, from \eqref{eq:FP1}-\eqref{eq:beta2_def}, we can compute the coefficients $\tilde{\mu}_{V}, \tilde{\sigma}_{V}, \tilde{\beta}$ as
\begin{equation*}
    \begin{split}
    \tilde{\beta}^2 &=  \delta\, ( \E\{ZG^2\} - \E\{Z\})^2=\frac{1}{\delta},\\
    \tilde{\mu}_V &= \sqrt{\frac{\tilde{\beta}^2(\tilde{\beta}^2 -  \E\{Z^2\})}{\tilde{\beta}^2 + \E\{Z^2 G^2\} - \E\{Z^2\}}}=\sqrt{\frac{\frac{1}{\delta}\left(\frac{1}{\delta} -  \E\left\{\frac{Z_s^2}{(\lambda_\delta^*- Z_s)^2}\right\}\right)}{\frac{1}{\delta} + \E\left\{\frac{Z_s^2(G^2-1)}{(\lambda_\delta^*- Z_s)^2}\right\}}},\\
    \tilde{\sigma}_V &= \sqrt{\frac{\tilde{\beta}^2 \E\{Z^2 G^2\}}{ \tilde{\beta}^2 + \E\{ Z^2 G^2\} - \E\{Z^2\}}}= \sqrt{\frac{\frac{1}{\delta} \E\left\{\frac{Z_s^2\cdot G^2}{(\lambda_\delta^*- Z_s)^2}\right\}}{\frac{1}{\delta} + \E\left\{\frac{Z_s^2(G^2-1)}{(\lambda_\delta^*- Z_s)^2}\right\}}}.
    \end{split}
\end{equation*}
By using also \eqref{eq:SEinit}, one can easily verify that 
\begin{equation*}
 \mu_{V, 1}=n_L\rho_L,\qquad \sigma_{V, 1}=n_L\sqrt{1-\rho_L^2},\qquad \tilde{\beta}^{-1}\tilde{\mu}_V=\rho_s,  
\end{equation*}
\begin{equation*}
    \tilde{\beta}^{-1}\tilde{\sigma}_V=\sqrt{1-\rho_s^2}, \qquad \frac{q-\rho_s\rho_L}{\sqrt{(1-\rho_s^2)(1-\rho_L^2)}}=\frac{ \tilde{\mu}_{V} \, \E\{ \cT_L(Y) Z G \}}
{ \tilde{\beta} \tilde{\sigma}_{V} \sqrt{ \E\{ \cT_L(Y)^2\} } },
\end{equation*}
which yields the desired result.
\end{proof}

\subsection{Proof of Lemma \ref{lem:evec_emp_dist}} \label{subsec:proof_lem_evec}

Fix $c>0$, and let $\tilde{\bZ}=c \bZ$ and $\tilde{Z}=c Z$. Then,
\begin{equation}
\begin{split}
   \bZ(\bZ+\delta\mathbb E\{Z(G^2-1)\}\bI_n)^{-1} &= \tilde{\bZ}(\tilde{\bZ}+\delta\mathbb E\{\tilde{Z}(G^2-1)\}\bI_n)^{-1}.
\end{split}
\end{equation}
Thus, by inspecting the definition \eqref{eq:matZ1}, one immediately obtains that $\tilde{\bphi}^{(1)}$ does not change if we rescale $\bZ$ and $Z$ by the multiplicative factor $c$. Furthermore, by using the definitions \eqref{eq:FP1}-\eqref{eq:beta2_def}, we have that
\begin{equation}
\begin{split}
        \tilde{\beta}^{-1}\tilde{\mu}_{V}&=\sqrt{\frac{\delta\, ( \E\{ZG^2\} - \E\{Z\})^2 -  \E\{Z^2\}}{\delta\, ( \E\{ZG^2\} - \E\{Z\})^2 + \E\{Z^2 G^2\} - \E\{Z^2\}}} \\
        &= \sqrt{\frac{\delta\, ( \E\{\tilde{Z}G^2\} - \E\{\tilde{Z}\})^2 -  \E\{\tilde{Z}^2\}}{\delta\, ( \E\{\tilde{Z}G^2\} - \E\{\tilde{Z}\})^2 + \E\{\tilde{Z}^2 G^2\} - \E\{\tilde{Z}^2\}}},\\
  \end{split}
\end{equation}
\begin{equation}
\begin{split}
        \tilde{\beta}^{-1}\tilde{\sigma}_{V}&=\sqrt{\frac{\E\{Z^2 G^2\}}{ \delta\, ( \E\{ZG^2\} - \E\{Z\})^2 + \E\{ Z^2 G^2\} - \E\{Z^2\}}}\\
        &=\sqrt{\frac{\E\{\tilde{Z}^2 G^2\}}{ \delta\, ( \E\{\tilde{Z}G^2\} - \E\{\tilde{Z}\})^2 + \E\{ \tilde{Z}^2 G^2\} - \E\{\tilde{Z}^2\}}},
\end{split}
\end{equation}
and that
\begin{equation}
\begin{split}
       \frac{ \tilde{\mu}_{V} \, \E\{ \cT_L(Y) Z G \}}
{ \tilde{\beta} \tilde{\sigma_{V}} \sqrt{ \E\{ \cT_L(Y)^2\} } } &=\frac{\E\{ \cT_L(Y) Z G \}}{\sqrt{\delta}( \E\{ZG^2\} - \E\{Z\})}\sqrt{\frac{\delta\, ( \E\{ZG^2\} - \E\{Z\})^2 - \E\{Z^2\}}{\E\{ Z^2 G^2 \}\E\{ \cT_L(Y)^2\}}}\\
        &=\frac{\E\{ \cT_L(Y) \tilde{Z} G \}}{\sqrt{\delta}( \E\{\tilde{Z}G^2\} - \E\{\tilde{Z}\})}\sqrt{\frac{\delta\, ( \E\{\tilde{Z}G^2\} - \E\{\tilde{Z}\})^2 - \E\{\tilde{Z}^2\}}{\E\{ \tilde{Z}^2 G^2 \}\E\{ \cT_L(Y)^2\}}}.
\end{split}
\end{equation}
Consequently, both the LHS and the RHS of \eqref{eq:psiX1} are unchanged when we rescale $\bZ$ and $Z$ by the multiplicative factor $c$.

The argument above proves that, without loss of generality, we can rescale $\bZ$ and $Z$ by any multiplicative factor $c>0$. To simplify the rest of the argument, it is convenient to assume the normalization condition 
\begin{equation}\label{eq:ncond}
 \mathbb E\{Z(G^2-1)\}=\frac{1}{\delta},   
\end{equation}
under which $\bM_n = \bA^\sT \bZ(\bZ+\bI_n)^{-1} \bA$.

Consider the iteration \eqref{eq:GAMP_choice_fg} with the initialization in \eqref{eq:power_gamp_init}.   Since by hypothesis, $\mu_{V,1}^2 = (\E\{ \cT_L(Y) G  \})^2 >0$, Lemma \ref{lem:fixed_pts} guarantees that the state evolution recursion \eqref{eq:SE_lin2} converges to either fixed point $\FP_1$ or $\FP_2$ as $t \to \infty$. That is, 
\beq
\lim_{t \to \infty} \mu_{V,t}^2 = \tilde{\mu}_V^2, \quad 
\lim_{t \to \infty} \sigma^2_{V,t} = \tilde{\sigma}^2_V, \quad
\lim_{t \to \infty} \beta_t^2 = \tilde{\mu}_V^2 + \tilde{\sigma}^2_V = \tilde{\beta}^2 = \frac{1}{\delta}.
\label{eq:FPspecial}
\eeq
The last equality above follows by combining \eqref{eq:beta2_def} and \eqref{eq:ncond}.

By substituting the expression for $\bu^t$ in \eqref{eq:GAMP_choice_fg} in the $\bv^{t+1}$ update,  the iteration can be rewritten as follows, for $t \geq 2$:
\begin{align}
 \bu^t &= \frac{1}{\sqrt{\delta} \, \beta_t} \big[ \bA \bv^t \, -  \, \bZ \bu^{t-1} \big], \label{eq:GAMPrewrite2} \\
 \bv^{t+1} & = \frac{1}{\sqrt{\delta} \beta_t}\Big[ \big(\bA^\sT \bZ \bA  - \delta \E\{Z\} \, \bI_d \big) \bv^t \, - \, \bA^\sT \bZ^2 \bu^{t-1} \Big]. \label{eq:GAMPrewrite1}
\end{align}
In the remainder of the proof, we will assume that $t \geq 2$. Define
\begin{align}
        \be_1^t &= \bu^{t}-\bu^{t-1},\label{eq:err1}\\
        \be_2^t &= \bv^{t+1}-\bv^{t}.\label{eq:err2}
\end{align}
By combining \eqref{eq:err1} with \eqref{eq:GAMPrewrite2}, we have that
\begin{equation}\label{eq:ut1new}
    \bu^{t-1}=(\bZ+\sqrt{\delta}\beta_t\bI_n)^{-1}\bA\bv^t-\sqrt{\delta}\beta_t(\bZ+\sqrt{\delta}\beta_t\bI_n)^{-1}\be_1^t.
\end{equation}
By substituting the expression for $\bu^{t-1}$ obtained in \eqref{eq:ut1new} into \eqref{eq:GAMPrewrite1}, we have
\begin{equation}\label{eq:GAMPrewrite3}
\begin{split}
    \bv^{t+1}  &= \left(\bA^\sT \bZ(\bZ+\sqrt{\delta}\beta_t\bI_n)^{-1} \bA  - \frac{\sqrt{\delta} \E\{Z\}}{\beta_t} \, \bI_d \right) \bv^t +\bA^\sT \bZ^2(\bZ+\sqrt{\delta}\beta_t\bI_n)^{-1}\be_1^t\\
    &= \left(\bA^\sT \bZ(\bZ+\bI_n)^{-1} \bA  -\delta \E\{Z\} \, \bI_d \right) \bv^t  \\
    & \quad +(1-\sqrt{\delta}\beta_t)\bA^\sT \bZ(\bZ+\bI_n)^{-1}(\bZ+\sqrt{\delta}\beta_t\bI_n)^{-1} \bA\bv^t \\
& \quad   + \delta\E\{Z\}\left(1-\frac{1}{\sqrt{\delta}\beta_t}\right) \bv^t+\bA^\sT \bZ^2(\bZ+\sqrt{\delta}\beta_t\bI_n)^{-1}\be_1^t.
\end{split}
\end{equation}
Let
\begin{equation}\label{eq:GAMPrewrite4}
    \be_3^t =  \left(\bA^\sT \bZ(\bZ+\bI_n)^{-1} \bA  -(\delta \E\{Z\}+1) \, \bI_d \right) \bv^t.
\end{equation}
From \eqref{eq:err2} and \eqref{eq:GAMPrewrite3}, we obtain
\begin{equation}\label{eq:deferr3}
    \begin{split}
        \be_3^t&=\be_2^t-(1-\sqrt{\delta}\beta_t)\bA^\sT \bZ(\bZ+\bI_n)^{-1}(\bZ+\sqrt{\delta}\beta_t\bI_n)^{-1} \bA\bv^t \\
& \quad   - \delta\E\{Z\}\left(1-\frac{1}{\sqrt{\delta}\beta_t}\right) \bv^t-\bA^\sT \bZ^2(\bZ+\sqrt{\delta}\beta_t\bI_n)^{-1}\be_1^t.
    \end{split}
\end{equation}

Let us decompose $\bv^t$ into a component in the direction of $\hat{\bphi}_1$ plus an orthogonal component $\br^t$:
\begin{equation}\label{eq:projvt}
    \bv^t = \xi_t \hat{\bphi}_1+\br^t,
\end{equation}
where $\xi_t=\<\bv^t, \hat{\bphi}_1\>$. 

At this point, the idea is to show that, when $t$ is large, $\br^t$ is small, thus $\bv^t$ tends to be aligned with $\hat{\bphi}_1$. To do so, we prove that, as $n\to\infty$, the largest eigenvalue of the matrix $\bM_n$ defined in \eqref{eq:matZ1} converges almost surely to $\delta \E\{Z\}+1$. Furthermore, we prove that the matrix $\bM_n$ exhibits a spectral gap, in the sense that the second largest eigenvalue of $\bM_n$ converges almost surely to a value  strictly smaller than $\delta \E\{Z\}+1$. Since $\br^t$ is orthogonal to $\hat{\bphi}_1$ and $\bM_n$ has a spectral gap, the norm  of $\be_3^t$ in \eqref{eq:GAMPrewrite4} can be lower bounded by a strictly positive constant times the norm of $\br^t$. Next, using the expression in \eqref{eq:deferr3}, we show that the norm of $\be_3^t$   can be made arbitrarily small by taking $n$ and $t$ sufficiently large. From this, we conclude that $\br^t$ must be small.

We begin by proving that $\bM_n$ has a spectral gap.
\begin{lemma}[Spectral gap for $\bM_n$]
The following holds almost surely:
\begin{align} 
\lim_{n \to \infty} \lambda_1^{\bM_n} & = 1+\delta \mathbb E\{Z\}, \label{eq:lambda1Mlim}
\\
\limsup_{n \to \infty} \lambda_2^{\bM_n} & < 1+\delta \mathbb E\{Z\} - c_1,
\label{eq:lambda2M}
\end{align}
for a numerical constant $c_1 >0$ that does not depend on $n$.
\label{lem:Mn_specgap} 
\end{lemma}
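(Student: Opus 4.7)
The strategy is to recognize $\bM_n$ as a spectral matrix of precisely the type treated by Lemma \ref{lemma:pt}, and to transcribe its eigenvalue asymptotics via the change of variable $Z' = Z/(Z+1)$. Setting $\bZ' = \diag(Z'_1,\ldots,Z'_n)$ with $Z'_i = \cT(y_i)/(\cT(y_i)+1)$, the identity $\bZ(\bZ+\bI_n)^{-1} = \bZ'$ yields $\bM_n = \bA^\sT \bZ' \bA$. By hypothesis $Z'$ satisfies (A1)-(A3), so part 3 of Lemma \ref{lemma:pt}, applied with $Z'$ in place of $Z_s$, gives almost surely
\begin{equation*}
\lambda_1^{\bM_n}\to\delta\,\zeta_\delta(\lambda_\delta^*),\qquad \lambda_2^{\bM_n}\to\delta\,\zeta_\delta(\bar\lambda_\delta),
\end{equation*}
where $\phi$, $\psi_\delta$, $\zeta_\delta$, $\bar\lambda_\delta$ and $\lambda_\delta^*$ are those of \eqref{eq:defphi2}-\eqref{eq:uniqsol} computed for the law of $Z'$. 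I also need to check $1 > \tau' := \sup\,\supp(Z')$, which is immediate: $Z' = 1 - 1/(Z+1)$ is strictly increasing in $Z$ on $\{Z > -1\}$ and $Z$ is bounded, so $\tau' < 1$.

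The crux of the proof is the algebraic identification $\lambda_\delta^* = 1$. Using the identities $Z'/(1-Z') = Z$ and $(Z')^2/(1-Z')^2 = Z^2$, direct computation gives
\begin{equation*}
\phi(1) = \E\{Z G^2\},\qquad \psi_\delta(1) = \tfrac{1}{\delta} + \E\{Z\},\qquad \psi_\delta'(1) = \tfrac{1}{\delta} - \E\{Z^2\}.
\end{equation*}
Under the normalization $\E\{Z(G^2-1)\} = 1/\delta$ imposed at the start of Section \ref{subsec:proof_lem_evec}, the first two quantities are equal, so $\phi(1) = \psi_\delta(1)$. Under the hypothesis $\delta > \E\{Z^2\}/(\E\{ZG^2\}-\E\{Z\})^2 = \delta^2 \E\{Z^2\}$, equivalent to $1/\delta > \E\{Z^2\}$, one obtains $\psi_\delta'(1) > 0$. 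Since $\psi_\delta$ is strictly convex on $(\tau',\infty)$ (its second derivative is $2\E\{(Z')^2/(\lambda-Z')^3\} > 0$, using (A1) for $Z'$), this forces $\bar\lambda_\delta < 1$, hence $\zeta_\delta(1) = \psi_\delta(1)$. Combined with $\phi(1) = \psi_\delta(1)$ and the uniqueness statement of part 1 of Lemma \ref{lemma:pt}, I conclude $\lambda_\delta^* = 1$, so
\begin{equation*}
\delta\,\zeta_\delta(\lambda_\delta^*) \;=\; \delta\bigl(\tfrac{1}{\delta} + \E\{Z\}\bigr) \;=\; 1 + \delta\,\E\{Z\},
\end{equation*}
which proves \eqref{eq:lambda1Mlim}.

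For the gap \eqref{eq:lambda2M}, strict convexity of $\psi_\delta$ together with $\bar\lambda_\delta < 1$ gives $\psi_\delta(\bar\lambda_\delta) < \psi_\delta(1)$ strictly, whence $\delta\,\zeta_\delta(\bar\lambda_\delta) < 1 + \delta\,\E\{Z\}$. Choosing any deterministic $c_1$ in the open interval $(0,\; 1 + \delta\,\E\{Z\} - \delta\,\zeta_\delta(\bar\lambda_\delta))$ then yields the desired limsup bound almost surely. The main potential obstacle is the algebraic verification that $\lambda_\delta^* = 1$, which crucially relies on both the normalization of $Z$ and the change of variable $Z \mapsto Z/(Z+1)$; once this is in place, the rest is a mechanical application of Lemma \ref{lemma:pt} and strict convexity of $\psi_\delta$.
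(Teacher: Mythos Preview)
Your proposal is correct and follows essentially the same route as the paper's proof: rewrite $\bM_n = \bA^\sT\bZ'\bA$ with $Z' = Z/(Z+1)$, invoke Lemma~\ref{lemma:pt} for $Z'$, and verify algebraically under the normalization $\E\{Z(G^2-1)\}=1/\delta$ that $\lambda_\delta^*=1$, $\psi_\delta'(1)>0$, hence $\bar\lambda_\delta<1$. The only cosmetic difference is in the justification of the strict gap: you invoke strict convexity of $\psi_\delta$ (via $\psi_\delta''>0$), whereas the paper phrases it through Lipschitz continuity of $\psi_\delta$ and the fact that $\bar\lambda_\delta<1$; both arrive at the same conclusion that $\delta\,\psi_\delta(\bar\lambda_\delta)$ is strictly below $1+\delta\,\E\{Z\}$ by a deterministic margin.
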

\begin{proof}[Proof of Lemma \ref{lem:Mn_specgap}]
By hypothesis, $Z'=Z/(Z+1)$ satisfies the assumptions \textbf{(A1)}-\textbf{(A2)}-\textbf{(A3)}. Thus, we can use Lemma \ref{lemma:pt} to compute the almost sure limit of the two largest eigenvalues of $\bM_n$, call them $\lambda_1^{\bM_n}\ge \lambda_2^{\bM_n}$.

Let $\tau$ denote the supremum of the support of $Z'$. As $\mathbb P(Z>-1)=1$ and $Z$ has bounded support, we have that $\tau<1$. For $\lambda\in (\tau, \infty)$, define
\begin{equation}\label{eq:defphi2new}
\phi(\lambda) = \lambda \cdot {\mathbb E}\left\{\frac{Z'\cdot G^2}{\lambda-Z'}\right\}=\lambda \cdot {\mathbb E}\left\{\frac{Z\cdot G^2}{(\lambda-1)Z+ \lambda}\right\},
\end{equation}
and 
\begin{equation}\label{eq:defpsinew}
\psi_{\delta}(\lambda) = \lambda\left(\frac{1}{\delta}+{\mathbb E}\left\{\frac{Z'}{\lambda-Z'}\right\}\right)= \lambda\left(\frac{1}{\delta}+{\mathbb E}\left\{\frac{Z}{(\lambda-1)Z+ \lambda}\right\}\right).
\end{equation}
Note that
\begin{equation}\label{eq:cp00}
    \begin{split}
    \phi(1) = \mathbb E\{Z\cdot G^2\},  \qquad \psi_{\delta}(1) =\frac{1}{\delta}+{\mathbb E}\{Z\}.
    \end{split}
\end{equation}
Thus, by using \eqref{eq:ncond}, we have that
\begin{equation}\label{eq:cp0}
    \phi(1)=\psi_{\delta}(1).
\end{equation}
Furthermore, 
\begin{equation}\label{eq:cp1}
    \psi_{\delta}'(\lambda) = \frac{1}{\delta}-{\mathbb E}\left\{\left(\frac{Z'}{\lambda-Z'}\right)^2\right\}= \frac{1}{\delta}-{\mathbb E}\left\{\left(\frac{Z}{(\lambda-1)Z+ \lambda}\right)^2\right\}.
\end{equation}
Thus,
\begin{equation}\label{eq:dpsid1}
 \psi_{\delta}'(1)=\frac{1}{\delta}-{\mathbb E}\{Z^2\}>0,  
\end{equation}
where in the last step we combine the hypothesis $\delta > \frac{\E\{ Z^2\}}{ (\E\{ Z G^2 \} \, - \,\E\{Z\})^2}$ with the normalization condition \eqref{eq:ncond}.

Let $\bar{\lambda}_\delta$ be the point at which $\psi_\delta$ attains its minimum, as defined in \eqref{eq:minpsi}, define $\zeta_\delta$ as in \eqref{eq:defzeta} and let $\lambda_\delta^*$ be the unique solution of \eqref{eq:uniqsol}. Since $\psi_\delta$ is convex, \eqref{eq:cp1} and \eqref{eq:dpsid1} imply that $\bar{\lambda}_\delta<1$. Furthermore, \eqref{eq:cp0} implies that $\lambda_\delta^*=1$. By Lemma \ref{lemma:pt}, we obtain that, as $n\to\infty$,
\begin{equation}\label{eq:lambda12M}
\begin{split}
        \lambda_1^{\bM_n} &\stackrel{\mathclap{\mbox{\footnotesize a.s.}}}{\longrightarrow}\delta \,\zeta_\delta(1),\\
                \lambda_2^{\bM_n} &\stackrel{\mathclap{\mbox{\footnotesize a.s.}}}{\longrightarrow}\delta \,\zeta_\delta(\bar{\lambda}_\delta).
\end{split}
\end{equation}
Note that
\begin{equation}\label{eq:fgeq}
    \delta \,\zeta_\delta(1)=\delta \,\phi(1)= \delta\mathbb E\{Z\cdot G^2\} = 1+\delta \mathbb E\{Z\},
\end{equation}
where the first equality comes from the fact that $\lambda_\delta^*=1$ is the unique solution of \eqref{eq:uniqsol},  while the second  and third equalities follow from \eqref{eq:cp00} and \eqref{eq:cp0}. By combining \eqref{eq:fgeq} and \eqref{eq:lambda12M}, we obtain
$\lambda_1^{\bM_n} \stackrel{\mathclap{\mbox{\footnotesize a.s.}}}{\longrightarrow}1+\delta \mathbb E\{Z\}$.
Furthermore, $\zeta_\delta(\bar{\lambda}_\delta)=\psi_\delta(\bar{\lambda}_\delta)$. As $\bar{\lambda}_\delta<1$ and $\psi_\delta$ is Lipschitz continuous (from \eqref{eq:cp1}), there exists a numerical constant $c_1 >0$ such that
\begin{equation}\label{eq:fgeq2}
\delta\psi_\delta(\bar{\lambda}_\delta)\le 1+\delta \mathbb E\{Z\}-c_1.
\end{equation}
Hence,
$\lambda_2^{\bM_n} \stackrel{\mathclap{\mbox{\footnotesize a.s.}}}{\longrightarrow}\delta \,\psi_\delta(\bar{\lambda}_\delta)\le 1+\delta \mathbb E\{Z\}-c_1$.
\end{proof}

Let us now go back to \eqref{eq:GAMPrewrite4} and combine it with \eqref{eq:projvt}. Then, 
\begin{equation}\label{eq:GAMPrewrite5}
  \left(\bA^\sT \bZ(\bZ+\bI_n)^{-1} \bA  -(\delta \E\{Z\}+1) \, \bI_d \right) \br^t  =\be_3^t+\xi_t(\delta \E\{Z\}+1-\lambda_1^{\bM_n})\hat{\bphi}_1.
\end{equation}
We now prove that, almost surely, for all sufficiently large $n$, the following lower bound on the norm of the LHS of \eqref{eq:GAMPrewrite5} holds:
\begin{equation}\label{eq:lbr1t}
    \left\|\left(\bA^\sT \bZ(\bZ+\bI_n)^{-1} \bA  -(\delta \E\{Z\}+1) \, \bI_d \right) \br^t\right\|_2\ge c_2\|\br^t\|_2,
\end{equation}
where $c_2>0$ is a numerical constant independent of $n, t$.

As the matrix $\bA^\sT \bZ(\bZ+\bI_n)^{-1} \bA  -(\delta \E\{Z\}+1) \, \bI_d$ is symmetric, it can be written in the form $\bQ\bLambda\bQ^\sT$, with $\bQ$ orthogonal and $\bLambda$ diagonal. Furthermore, the columns of $\bQ$ are the eigenvectors of $\bA^\sT \bZ(\bZ+\bI_n)^{-1} \bA  -(\delta \E\{Z\}+1) \, \bI_d$ and the diagonal entries  of $\bLambda$ are the corresponding eigenvalues. As $\br^t$ is orthogonal to $\hat{\bphi}_1$,  we can write
\begin{equation}\label{eq:comprt}
    \left(\bA^\sT \bZ(\bZ+\bI_n)^{-1} \bA  -(\delta \E\{Z\}+1) \, \bI_d \right) \br^t=\bQ\bLambda'\bQ^\sT\br^t,
\end{equation}
where $\bLambda'$ is obtained from $\bLambda$ by changing the entry corresponding to $\lambda_1^{\bM_n}-(\delta \E\{Z\}+1)$ to any other value. For our purposes, it suffices to substitute $\lambda_1^{\bM_n}-(\delta \E\{Z\}+1)$ with $\lambda_2^{\bM_n}-(\delta \E\{Z\}+1)$. Note that
\begin{equation}\label{eq:comprt2}
\begin{split}
        \|\bQ\bLambda'\bQ^\sT\br^t\|_2^2 &\ge \|\br^t\|_2^2 \min_{\bs:\|\bs\|=1}\|\bQ\bLambda'\bQ^\sT\bs\|_2^2\\
        &=\|\br^t\|_2^2\min_{\bs:\|\bs\|=1}\<\bs, \bQ\left(\bLambda'\right)^2\bQ^\sT\bs\>\\
        &=\|\br^t\|_2^2\,\lambda_{\rm min}(\bQ\left(\bLambda'\right)^2\bQ^\sT),
\end{split}
\end{equation}
where $\lambda_{\rm min}(\bQ\left(\bLambda'\right)^2\bQ^\sT)$ denotes the smallest eigenvalue of $\bQ\left(\bLambda'\right)^2\bQ^\sT$ and the last equality follows from the variational characterization of the smallest eigenvalue of a symmetric matrix. Note that
\begin{equation}\label{eq:Lambda}
    \lambda_{\rm min}(\bQ\left(\bLambda'\right)^2\bQ^\sT)=\lambda_{\rm min}\left((\bLambda')^2\right)=\min_{i\in\{2, \ldots, d\}}\left(((\delta \E\{Z\}+1)-\lambda_i^{\bM_n})^2\right).
\end{equation}
By using \eqref{eq:lambda2M}, we obtain that, almost surely, for all sufficiently large $n$,
\begin{equation}\label{eq:Lambda2}
    \min_{i\in\{2, \ldots, d\}}\left(((\delta \E\{Z\}+1)-\lambda_i^{\bM_n})^2\right)\ge \left(\frac{c_1}{2}\right)^2.
\end{equation}
By combining \eqref{eq:comprt}, \eqref{eq:comprt2}, \eqref{eq:Lambda} and \eqref{eq:Lambda2}, we conclude that \eqref{eq:lbr1t} holds. 

Recalling that $\br^t$ satisfies \eqref{eq:GAMPrewrite5}, we will next show that, almost surely,
\begin{equation}\label{eq:stp2}
    \lim_{t\to\infty}\lim_{d\to\infty}\frac{1}{\sqrt{d}}\left\|\be_3^t+\xi_t(\delta \E\{Z\}+1-\lambda_1^{\bM_n})\hat{\bphi}_1\right\|_2=0.
\end{equation}
Combined with \eqref{eq:GAMPrewrite5} and \eqref{eq:lbr1t}, this  implies that 
$ \lim_{t\to\infty}\lim_{d\to\infty} \frac{\| \br^t \|_2}{\sqrt{d}} =  0$ almost surely.

By using the triangle inequality, we have 
\begin{equation}\label{eq:trianglenorm}
\begin{split}
        \left\|\be_3^t+\xi_t(\delta \E\{Z\}+1-\lambda_1^{\bM_n})\hat{\bphi}_1\right\|_2
        &\le \left\|\be_3^t\right\|_2+|\xi_t|\cdot |\delta \E\{Z\}+1-\lambda_1^{\bM_n}|\cdot\left\|\hat{\bphi}_1\right\|_2\\
        &\le \left\|\be_3^t\right\|_2+\|\bv^t\|_2\cdot |\delta \E\{Z\}+1-\lambda_1^{\bM_n}|,
        \end{split}
\end{equation}
where the second inequality uses  $\left\|\hat{\bphi}_1\right\|_2=1$ and that $|\xi_t|= \< \bv^t, \hat{\bphi}_1 \> \le \|\bv^t\|_2$.

We can bound the second term on the RHS of \eqref{eq:trianglenorm} using the result in Proposition \ref{prop:GAMP_SE},  applied with the PL(2) test function $\psi(v) = v^2$. Then, almost surely,
\begin{align}
\lim_{d\to\infty} \,  \frac{1}{d}\|\bv^t\|^2_2 = \E\{V_t^2\}=\beta_t^2.
\label{eq:norm_vt2}
\end{align}
Here we used the definitions of $V_t$ and $\beta_t^2$ from \eqref{eq:Vt_def} and \eqref{eq:defbetat1}.
Recalling from \eqref{eq:FPspecial} that  
$\lim_{t\to\infty}  \beta_t^2 =\frac{1}{\delta}$, the limit in \eqref{eq:norm_vt2} combined with Remark \ref{rem:tn_infty} and the continuous mapping theorem implies that, almost surely,
\begin{equation}\label{eq:limvt}
\lim_{t\to\infty}\lim_{d\to\infty}    \frac{1}{\sqrt{d}}\|\bv^t\|_2=\frac{1}{\sqrt{\delta}}.
\end{equation}
Thus, by using \eqref{eq:lambda1Mlim}, we conclude that, almost surely, 
\begin{equation}\label{eq:secondtrm}
    \lim_{t\to\infty}\lim_{n\to\infty}\frac{1}{\sqrt{d}}\|\bv^t\|_2\cdot |\delta \E\{Z\}+1-\lambda_1^{\bM_n}|=0.
\end{equation}

We now bound the first term on the RHS of \eqref{eq:trianglenorm}. Recalling the definition  of $\be_3^t$ in \eqref{eq:deferr3}, an application of the triangle inequality gives
\begin{equation}\label{eq:deferr3cmp}
    \begin{split}
        \left\|\be_3^t\right\|_2&\le\left\|\be_2^t\right\|_2+|1-\sqrt{\delta}\beta_t|\cdot\left\|\bA^\sT \bZ(\bZ+\bI_n)^{-1}(\bZ+\sqrt{\delta}\beta_t\bI_n)^{-1} \bA\bv^t\right\|_2 \\
& \quad  + \delta|\E\{Z\}|\cdot\left|1-\frac{1}{\sqrt{\delta}\beta_t}\right| \cdot\left\|\bv^t\right\|_2+\left\|\bA^\sT \bZ^2(\bZ+\sqrt{\delta}\beta_t\bI_n)^{-1}\be_1^t\right\|_2\\
&\le\left\|\be_2^t\right\|_2+|1-\sqrt{\delta}\beta_t|\cdot\left\|\bA^\sT \bZ(\bZ+\bI_n)^{-1}(\bZ+\sqrt{\delta}\beta_t\bI_n)^{-1} \bA\right\|_{\rm op}\|\bv^t\|_2 \\
&\quad   + \delta|\E\{Z\}|\cdot\left|1-\frac{1}{\sqrt{\delta}\beta_t}\right| \cdot\left\|\bv^t\right\|_2+\left\|\bA^\sT \bZ^2(\bZ+\sqrt{\delta}\beta_t\bI_n)^{-1}\right\|_{\rm op}\|\be_1^t\|_2,
    \end{split}
\end{equation}
where the second inequality follows from the fact that, given a matrix $\bM$ and a vector $\bv$, $\|\bM\bv\|_2\le \|\bM\|_{\rm op} \|\bv\|_2$.

Let us bound the operator norm of the two matrices appearing in the RHS of \eqref{eq:deferr3cmp}. As the operator norm is sub-multiplicative, we have 
\begin{equation}
    \begin{split}
        \left\|\bA^\sT \bZ(\bZ+\bI_n)^{-1}(\bZ+\sqrt{\delta}\beta_t\bI_n)^{-1} \bA\right\|_{\rm op}&\le \left\|\bZ\right\|_{\rm op}\left\|(\bZ+\bI_n)^{-1}\right\|_{\rm op}\left\|(\bZ+\sqrt{\delta}\beta_t\bI_n)^{-1}\right\|_{\rm op} \left\|\bA\right\|_{\rm op}^2,\\
        \left\|\bA^\sT \bZ^2(\bZ+\sqrt{\delta}\beta_t\bI_n)^{-1}\right\|_{\rm op}&\le \left\|\bZ\right\|_{\rm op}^2\left\|(\bZ+\sqrt{\delta}\beta_t\bI_n)^{-1}\right\|_{\rm op}\left\|\bA\right\|_{\rm op}^2.
    \end{split}
\end{equation}
As $Z$ is bounded, 
the operator norm of $\bZ$ is upper bounded by a numerical constant (independent of $n, t$). The operator norm of $(\bZ+\bI_n)^{-1}$ and $(\bZ+\sqrt{\delta}\beta_t\bI_n)^{-1}$ is also upper bounded by a numerical constant (independent of $n, t$). Indeed, from \eqref{eq:FPspecial} $\beta_t \to 1/\sqrt{\delta}$ as $t \to \infty$,  and the support of $Z$ does not contain points arbitrarily close to $-1$. We also have that, almost surely, for all sufficiently large $n$, the operator norm of $\bA$ is upper bounded by a constant (independent of $n, t$). As a result, we deduce that, almost surely, for all sufficiently large $n, t$, 
\begin{equation}\label{eq:opnormmat}
    \begin{split}
        \left\|\bA^\sT \bZ(\bZ+\bI_n)^{-1}(\bZ+\sqrt{\delta}\beta_t\bI_n)^{-1} \bA\right\|_{\rm op}&\le C,\\
        \left\|\bA^\sT \bZ^2(\bZ+\sqrt{\delta}\beta_t\bI_n)^{-1}\right\|_{\rm op}&\le C,
    \end{split}
\end{equation}
where $C$ is a numerical constant (independent of $n, t$). Furthermore, by Lemma \ref{lem:diffs}, the following limits hold almost surely:
\begin{equation}\label{eq:e12lim}
\begin{split}
    \lim_{t\to\infty}\lim_{n\to\infty}\frac{1}{\sqrt{n}}\left\|\be_1^t\right\|_2&=0,\\    \lim_{t\to\infty}\lim_{n\to\infty}\frac{1}{\sqrt{d}}\left\|\be_2^t\right\|_2&=0.
    \end{split}
\end{equation}
By combining \eqref{eq:FPspecial}, \eqref{eq:limvt}, \eqref{eq:opnormmat} and \eqref{eq:e12lim}, we obtain that, almost surely, each of the terms in the RHS of \eqref{eq:deferr3cmp} vanishes when scaled by the factor $1/\sqrt{n}$, as $t, n\to\infty$. As a result, almost surely,
\begin{equation}\label{eq:lime30}
     \lim_{t\to\infty}\lim_{d\to\infty}\frac{1}{\sqrt{d}}\left\|\be_3^t\right\|_2=0.\\   
\end{equation}
By combining \eqref{eq:trianglenorm}, \eqref{eq:secondtrm} and \eqref{eq:lime30}, we conclude that, almost surely, \eqref{eq:stp2} holds. 

Recall that $\br^t$ satisfies \eqref{eq:GAMPrewrite5}. Thus, by combining the lower bound in \eqref{eq:lbr1t} with the almost sure limit in \eqref{eq:stp2}, we obtain that, almost surely,
\begin{equation}\label{eq:limtrt}
     \lim_{t\to\infty}\lim_{d \to\infty}\frac{1}{\sqrt{d}}\left\|\br^t\right\|_2=0.
\end{equation}
Recalling from \eqref{eq:projvt} that $\br^t$ is the component of $\bv^t$ orthogonal to $\hat{\bphi}_1$, the result in \eqref{eq:limtrt} implies that $\bv^t$ tends to be aligned with $\hat{\bphi}_1$  in the high-dimensional limit. In formulas, by combining \eqref{eq:projvt} with \eqref{eq:limtrt}, we have that, almost surely, 
\begin{equation}
\lim_{t\to\infty}\lim_{n\to\infty}\frac{1}{\sqrt{n}}\left\|\bv^t - \xi_t \hat{\bphi}_1\right\|_2=0.
\end{equation}
Note that
\begin{equation}
    \left\|\bv^t - \xi_t \hat{\bphi}_1\right\|_2^2=\left\|\bv^t\right\|_2^2-\xi_t^2.
\end{equation}
Thus, by using \eqref{eq:limvt}, we obtain that, almost surely,
\begin{equation}
\lim_{t\to\infty}\lim_{n\to\infty}    \frac{1}{\sqrt{d}}|\xi_t|=\frac{1}{\sqrt{\delta}}.
\end{equation}
To obtain the sign of $\xi_t$, we first observe that, by Proposition \ref{prop:GAMP_SE}, almost surely, 
\begin{equation}
    \lim_{d \to\infty}\frac{1}{d}\<\bv^t, \bx\>=\mu_{V, t}.
\end{equation}
As $\mu_{V, 0} = \alpha > 0$ and $\E\{ Z(G^2-1)\} = 1/\delta$, the state evolution iteration \eqref{eq:SE_lin2} implies that $\mu_{V, t} >0$ for all $t \ge 0$.  Using \eqref{eq:projvt} we can write
\beq
\frac{1}{d} \< \bv^t, \bx \> = \frac{\xi_t}{\sqrt{d}} \frac{\< \hat{\bphi}_1, \bx \>}{\sqrt{d}} \, + \, \frac{\< \br^t, \bx \>}{d}.
\eeq
Recall that by hypothesis, $\<\hat{\bphi}_1, \bx\>\ge0$. Moreover, using  \eqref{eq:limtrt} and Cauchy-Schwarz, we have $ \lim_{t \to \infty} \lim_{d \to \infty}\frac{\< \br^t, \bx \>}{\sqrt{d}} = 0$ almost surely. Thus  we deduce that   $\lim_{t \to \infty} \lim_{d \to \infty} \frac{\xi_t}{\sqrt{d}} = + \frac{1}{\sqrt{\delta}}$ almost surely.
 Therefore, 
\begin{equation}\label{eq:lim1p}
\lim_{t\to\infty}\lim_{d\to\infty}\frac{1}{\sqrt{d}}\left\|\sqrt{\delta}\bv^t -  \tilde{\bphi}^{(1)}\right\|_2=0 \quad \text{a.s.},
\end{equation}
with $\tilde{\bphi}^{(1)}=\sqrt{d}\hat{\bphi}_1$. 

At this point, we are ready to prove \eqref{eq:psiX1}. For any $\PL(k)$ function $\psi: \reals^3 \to \reals$, we have that 
\begin{equation}\label{eq:lim2p}
\begin{split}
& \left| \frac{1}{d}\sum_{i=1}^d \psi(x_i, \tilde{x}^{\rm L}_i,  \tilde{\varphi}_i^{(1)}) - \frac{1}{d}\sum_{i=1}^d \psi(x_i, \tilde{x}^{\rm L}_i,  \sqrt{\delta}v_i^t) \right| \le  \frac{1}{d}\sum_{i=1}^d \left|\psi(x_i, \tilde{x}^{\rm L}_i, \tilde{\varphi}_i^{(1)}) -  \psi(x_i, \tilde{x}^{\rm L}_i,  \sqrt{\delta}v_i^t) \right|\\
&\ \le \frac{C}{d}\sum_{i=1}^d |\tilde{\varphi}_i^{(1)}-\sqrt{\delta}v_i^t| \left[ 1+\, \left(\big(\tilde{\varphi}_i^{(1)}\big)^2+  (\tilde{x}^{\rm L}_i)^2 +x_i^2 \right)^{(k-1)/2}+ \,
\left( \delta(v_i^t)^2 + (\tilde{x}^{\rm L}_i)^2 +x_i^2\right)^{(k-1)/2} \right]\\
&  \ \le \frac{C}{d}\sum_{i=1}^d |\tilde{\varphi}_i^{(1)}-\sqrt{\delta}v_i^t| \left[ 1+\, 3^{\frac{k-1}{2}}
 \Big( \abs{\tilde{\varphi}_i^{(1)}}^{k-1} +  \abs{\tilde{x}^{\rm L}_i}^{k-1} + \abs{x_i}^{k-1} +  \abs{\sqrt{\delta} v_i^t}^{k-1} + \abs{\tilde{x}^{\rm L}_i}^{k-1} +
 \abs{x}_i^{k-1}  \Big) \right] \\
&\ \le C'\frac{\|\tilde{\bphi}^{(1)} - \sqrt{\delta}\bv^t\|_2}{\sqrt{d}} 
\left[ 1+ \sum_{i=1}^d \left( 
\frac{ |\tilde{\varphi}_i^{(1)}|^{2(k-1)} +  |\tilde{x}^{\rm L}_i|^{2(k-1)}  +  
 |{x}_i|^{2(k-1)}  +    |\sqrt{\delta} v_i^t|^{2(k-1)}}{d}
\right)  \right]^{1/2},
\end{split}
\end{equation}
where $C,C'$ are universal constants (which may depend on $k$ but not on $d, n$). The inequality in the second line above uses $\psi\in {\rm PL}(k)$, and the third and fourth lines are obtained via the H{\" o}lder and Cauchy-Schwarz inequalities. We now claim that, almost surely,
\begin{equation}\label{eq:PLKphi0}
\lim_{d\to\infty}\sum_{i=1}^d \left( 
\frac{ |\tilde{\varphi}_i^{(1)}|^{2(k-1)} +  |\tilde{x}^{\rm L}_i|^{2(k-1)}  +  
 |{x}_i|^{2(k-1)}  +    |\sqrt{\delta} v_i^t|^{2(k-1)}}{d}
\right)\le C,
\end{equation}
where, from now on, we will use $C$ to denote a generic positive constant that does not depend on $d,n$. If \eqref{eq:PLKphi0} holds, then by using \eqref{eq:lim1p} and \eqref{eq:lim2p}, we deduce that, almost surely,
\begin{equation}
\lim_{t\to\infty}\lim_{d\to\infty} \left| \frac{1}{d}\sum_{i=1}^d \psi(x_i, \tilde{x}^{\rm L}_i, \tilde{\varphi}_i^{(1)}) - \frac{1}{d}\sum_{i=1}^d \psi(x_i, \tilde{x}^{\rm L}_i, \sqrt{\delta}v_i^t) \right| = 0.
\label{eq:phi_v_lim}
\end{equation}

Let us now prove \eqref{eq:PLKphi0}. First, by assumption \textbf{(B1)}, we have that
\begin{equation}\label{eq:PLKphi1}
 \lim_{d \to \infty} \frac{1}{d} \sum_{i} |{x}_i|^{2(k-1)} \leq C.   
\end{equation}
Next, the main technical lemma \cite[Lemma 2]{javanmard2013state}  leading to the state evolution result in Proposition \ref{prop:GAMP_SE} implies that, almost surely, for $t\ge 1$,
\begin{equation}\label{eq:PLKphi2}
\limsup_{d \to \infty} \frac{1}{d} \sum_i | v_i^t|^{2(k-1)} \leq C.
\end{equation}
In particular, this follows from \cite[Lemma 2(e)]{javanmard2013state} (see also \cite[Lemma 1(e)]{BM-MPCS-2011}). Since $\tilde{\bx}^{\rm L} = \bv^1$, we also have that, almost surely,
\begin{equation}\label{eq:PLKphi3}
\limsup_{d \to \infty} \frac{1}{d} \sum_i | \tilde{x}_i^{\rm L}|^{2(k-1)} \leq C.    
\end{equation}
It remains to show that, almost surely,
\begin{equation}\label{eq:PLKphi}
 \limsup_{d \to \infty} \frac{1}{d} \sum_i (\tilde{\varphi}_i^{(1)})^{2(k-1)}\le C.   
\end{equation}
To do so, we use a rotational invariance argument. Let $\bR\in \mathbb R^{d \times d}$ be an orthogonal matrix such that $\bR \bx = \bx$. Then, 
\begin{equation}\label{eq:rot_inv_app}
\langle \bx, \ba_i\rangle =\langle \bR \bx, \bR\ba_i \rangle = \langle \bx, \bR\ba_i\rangle .
\end{equation}
Consequently, we have that
\begin{equation}
\bR\bA^\sT \bZ(\bZ+\bI_n)^{-1} \bA \bR^{\sT}  \stackrel{\mathclap{\text{d}}}{=}\bA^\sT \bZ(\bZ+\bI_n)^{-1} \bA,
\end{equation}
which immediately implies that
\begin{equation}
\bR\tilde{\bphi}^{(1)}  \stackrel{\mathclap{\text{d}}}{=}\tilde{\bphi}^{(1)}.
\end{equation}
Then, we can decompose $\tilde{\bphi}^{(1)}$ as
\begin{equation}\label{eq:dectbphi2}
\tilde{\bphi}^{(1)} = a_d \, \bx + \sqrt{1-a_d^2}\, \bphi^{\perp},
\end{equation}
where $\bphi^{\perp}$ is uniformly distributed over the set of vectors orthogonal to $\bx$ with norm $\sqrt{d}$ and
\begin{equation}\label{eq:defad2}
\frac{1}{d}  \< \bx,  \tilde{\bphi}^{(1)}  \> =  a_d.
\end{equation}
Relating the uniform distribution on the sphere to the normal distribution \cite[Sec.~3.3.3]{vershynin2018high}, we can express $\bphi^\perp$ as follows:
\begin{equation}\label{eq:decbphi2}
\bphi^{\perp} =\sqrt{d} \,  \frac{\bu - \displaystyle\frac{1}{d}\langle \bu, \bx\rangle \bx}{\Big\|\bu - \displaystyle\frac{1}{d}\langle \bu, \bx\rangle \bx\Big\|_2},
\end{equation}
where $\bu\sim \normal(\b0_d, \bI_d)$ and independent of $\bx$. By the law of large numbers, we have the almost sure limits
\begin{equation}\label{eq:ascf}
\begin{split}
\lim_{d\to\infty}\frac{1}{d}\langle \bu, \bx\rangle = 0, \qquad \quad 
\lim_{d\to\infty}\frac{1}{d}\Big\|\bu - \displaystyle\frac{1}{d}\langle \bu, \bx\rangle \bx\Big\|_2^2 = 1. 
\end{split}
\end{equation}
Thus, by combining \eqref{eq:dectbphi2} and \eqref{eq:decbphi2}, we conclude that
\begin{equation}\label{eq:rotinvfin}
    \tilde{\bphi}^{(1)} = c_1 \bx +c_2\bu,
\end{equation}
where the coefficients $c_1$ and $c_2$ can be bounded by universal constants (independent of $n, d$) using \eqref{eq:ascf}. As a result, 
\begin{equation}\label{eq:tU}
    \frac{1}{d} \sum_i (\tilde{\varphi}_i^{(1)})^{2(k-1)}\le 2^{2(k-1)} |c_1|^{2(k-1)} \frac{1}{d} \sum_i |x_i|^{2(k-1)}+2^{2(k-1)} |c_2|^{2(k-1)} \frac{1}{d} \sum_i |u_i|^{2(k-1)}.
\end{equation}
Note that, almost surely,
\begin{equation}\label{eq:aslimU}
\lim_{d\to\infty} \frac{1}{d} \sum_i |u_i|^{2(k-1)}=\E\{U^{2(k-1)}\}\le C,
\end{equation}
where $U\sim \normal(0, 1)$. By combining \eqref{eq:PLKphi1}, \eqref{eq:tU} and \eqref{eq:aslimU}, \eqref{eq:PLKphi} immediately follows. Finally, by combining \eqref{eq:PLKphi1}, \eqref{eq:PLKphi2}, \eqref{eq:PLKphi3} and \eqref{eq:PLKphi}, we deduce that \eqref{eq:PLKphi0} holds.

 We now use Proposition \ref{prop:GAMP_SE}  which guarantees that, almost surely,
\beq
 \abs{ \lim_{d \to \infty} \frac{1}{d} \sum_{i=1}^d \psi(x_i,
\tilde{x}^{\rm L}_i, \sqrt{\delta}v_i^t) - \E\{ \psi(X, \, \mu_{V, 1} X + \sigma_{V,1} W_{V,1} , \, \sqrt{\delta}(\mu_{V, t} X + \sigma_{V,t} W_{V,t})) \}} = 0, \qquad  t \geq 1.
\label{eq:XVi}
\eeq
To conclude the proof of \eqref{eq:psiX1}, we take the limit $t \to \infty$ and use Remark \ref{rem:tn_infty}. For this, we will show that 
\beq
\begin{split}
& \lim_{t \to \infty} \E\{ \psi(X, \, \mu_{V, 1} X + \sigma_{V,1} W_{V,1} , \, \sqrt{\delta}(\mu_{V, t} X + \sigma_{V,t} W_{V,t})) \} \\
& =
\E\{ \psi(X, \, \mu_{V, 1} X + \sigma_{V,1} W_{V,1} , \, \sqrt{\delta}(\tilde{\mu}_{V} X + \tilde{\sigma}_{V} W_{V,\infty})) \},
\end{split}
\label{eq:limExpfinal}
\eeq
where $(W_{V,1}, W_{V,\infty})$ are zero mean jointly Gaussian random variables with covariance given by \eqref{eq:WV1inf_cov}. Using  \eqref{eq:WV_corr_init} and the formulas for $g_0$ and $g_t$ from \eqref{eq:g0def} and \eqref{eq:ft_gt_choice}, we have
\beq
\begin{split}
\E\{ W_{V,1} W_{V,t} \} & = 
\frac{1}{\sigma_{V,1} \sigma_{V,t}}\E\{ \cT_L(Y) Z (\mu_{U,t-1} G  + \sigma_{U,t-1}W_{U,t-1}) \}   \\
& = \frac{\mu_{V,t-1}}{\sqrt{\delta} \beta_{t-1} \sigma_{V,1} \sigma_{V,t}} \E\{ \cT_L(Y) Z G \}.
\end{split}
\eeq
In the second equality above, we used \eqref{eq:SE_lin1}. Using the expression
for $\sigma_{V,1}$ from \eqref{eq:SEinit} and letting  $t \to \infty$, we have 
\beq
\lim_{t \to \infty} \E\{ W_{V,1} W_{V,t} \} = \frac{ \tilde{\mu}_{V} \E\{ \cT_L(Y) Z G \}}
{ \tilde{\beta} \tilde{\sigma}_{V} \sqrt{ \E\{ \cT_L(Y)^2\} } } = \E\{ W_{V,1} W_{V, \infty} \}.
\label{eq:WV1Vinf_conv}
\eeq
Therefore, the sequence of zero mean jointly Gaussian pairs $(W_{V,1}, W_{V,t})_{t \geq 1}$ converges in distribution to the jointly Gaussian pair $(W_{V,1}, W_{V,\infty})$, whose covariance is given by \eqref{eq:WV1inf_cov}. 

To show \eqref{eq:limExpfinal}, we use Lemma \ref{lemma:dumbgen} in Appendix \ref{sec:aux}. We apply this result taking $Q_t$ to be the distribution of   $$(X, \,  \mu_{V,1}X + \sigma_{V,1} W_{V,1}, \,  {\mu}_{V,t}X + {\sigma}_{V,t} W_{V,t} ).$$ Since $\mu_{V,t} \to \tilde{\mu}_V$, $\sigma_{V,t} \to \tilde{\sigma}_V$, the sequence  $(Q_t)_{t \geq 2}$ converges weakly to $Q$, which is the distribution of  $$(X, \,  \mu_{V,1}X + \sigma_{V,1} W_{V,1}, \,  \tilde{\mu}_{V}X + \tilde{\sigma}_{V} W_{V,\infty} ).$$ In our case, $\psi: \reals^3 \to \reals$ is PL($k$),  and therefore
$\psi(a,b,c) \leq C'(1 + \abs{a}^k + \abs{b}^k + \abs{c}^k)$, for all $(a,b,c) \in \reals^3$ for some constant $C'$. Choosing $h(a,b,c) = \abs{a}^k + \abs{b}^k + \abs{c}^k$, we have $\frac{\abs{\psi}}{1 + h} \leq C'$. Furthermore, $\int h \, \de Q_t$ is a linear combination of $\{\mu_{V,t}, \mu_{V,t}^2, \ldots, \mu_{V,t}^k, \sigma_{V,t}, \sigma_{V,t}^2, \ldots, \sigma_{V,t}^k  \}$, with coefficients that do not depend on $t$. The integral $\int h \, \de Q$ has the same form, except that  $\mu_{V,t}, \sigma_{V,t}$ are replaced by $\tilde{\mu}_{V}, \tilde{\sigma}_{V}$, respectively. Since $\mu_{V,t} \to \tilde{\mu}_{V}$ and $\sigma_{V,t} \to \tilde{\sigma}_{V}$, we have that $$\lim_{t \to \infty} \int h \, \de Q_t = \int h \, \de Q.$$ Therefore, by applying Lemma \ref{lemma:dumbgen} in Appendix \ref{sec:aux}, we have that
$$\lim_{t \to \infty} \int \psi \, \de Q_t = \int \psi \, \de Q,$$ which is equivalent to  \eqref{eq:limExpfinal}. This completes the proof of the lemma.

\section*{Acknowledgements}

M. Mondelli would like to thank Andrea Montanari for  helpful discussions. M. Mondelli was partially supported by the 2019 Lopez-Loreta Prize. C. Thrampoulidis was partially supported by an NSF award CIF-2009030. R. Venkataramanan was partially supported by the Alan Turing Institute under the EPSRC grant EP/N510129/1.

{\small{
\bibliographystyle{amsalpha}
\bibliography{all-bibliography}
\addcontentsline{toc}{section}{References}
}}

\appendix

\section{Proof of Lemma \ref{lemma:pt_lin}}\label{sec:proof_lin}

By rotational invariance of the Gaussian measure, we can assume without loss of generality that $\bx=\sqrt{d}\be_1=[\sqrt{d},0,\ldots,0]^\sT$. Let us also denote the first column of the matrix $\bA$ by $\bu\in\R^{n}$ and the remaining $n\times (d-1)$ sub-matrix by $\widetilde\bA$, i.e., 
$
\bA = \begin{bmatrix} \bu & \widetilde\bA \end{bmatrix}.
$
In this notation, each measurement $y_i, i\in[n]$, depends only on the corresponding element $u_i$ of the vector $\bu$. In particular, the random variables $z^{\rm L}_i=\cT(y_i), i\in[n]$, are independent of the sub-matrix $\widetilde\bA$. Furthermore, we may express $\bxl$ as follows:
$$
\bxl = \frac{\sqrt{d}}{n} \begin{bmatrix} \bu^\sT\bzl  \\ \widetilde{\bA}^\sT\bzl \end{bmatrix}\,.
$$
We are now ready to prove \eqref{eq:lin_cor}.
First, we compute the correlation $\langle\bxl, \frac{\bx}{\norm{\bx}_2}\rangle$:
\begin{align}\label{eq:cor_lin}
\langle\bxl, \frac{\bx}{\norm{\bx}_2}\rangle = \frac{1}{n} \sqrt{d} (\bu^\sT \bzl) = \frac{1}{n} \sum_{i\in[n]}\sqrt{d} u_i z_i^{\rm L} \ras \E\left\{GZ_L\right\},
\end{align}
where we have that $\sqrt{d} u_i\simiid\Nn(0,1)$ and the almost sure convergence follows from the law of large numbers.

Second, we compute the norm of the estimator $\|\bxl\|_2$:
\begin{align}
\|\bxl\|_2^2 & = \frac{d}{n^2}(\bu^\sT\bzl)^2 + \frac{d}{n^2}\norm{\widetilde{\bA}^\sT\bzl}_2^2 \nonumber \\
& \eqd \frac{1}{n^2}\Big(\sum_{i\in[n]}\sqrt{d} u_i z_i^{\rm L}\Big)^2 + \frac{1}{n^2}\norm{\|\bzl\|_2\bh}_2^2 \ras \left(\E\left\{GZ_L\right\}\right)^2 + \frac{\E\left\{Z_L^2\right\}}{\delta}\,,\label{eq:norm_lin}
\end{align}
where we have used the following: (i) $\sqrt{d}\widetilde{\bA}^\sT\bzl\eqd \|\bzl \|_2\bh$ with $\bh\sim\Nn(0,\bI_{d-1})$; (ii) $\| \bzl \|_2^2/n\ras\E\left\{Z_L^2\right\}$ and $\| \bh \|_2^2/n\ras1/\delta$, by the law of large numbers.

Combining the above displays completes the proof of the lemma.


\section{Proof of Corollary \ref{lem:fopt}} \label{app:fopt}

Consider 
$
F(\theta)=\frac{\theta\rho_L+\rho_s}{\sqrt{1+\theta^2+2\theta q}}.
$
It can be checked that
\begin{align}\label{eq:Fder}
F^\prime(\theta) = \frac{(\rho_L-q\rho_s) - \theta (\rho_s-q\rho_L)}{(1+\theta^2+2\theta q)^{3/2}}.
\end{align}
We consider three cases.

\vp
\underline{Case 1:~$\rho_s=\rho_L q$.}~Here, $F$ is either strictly increasing or strictly decreasing depending on the sign of $\rho_L-q\rho_s$. But, $q\in(-1,1)$, thus, $\rho_s<|\rho_L|\implies \sign(\rho_L-q\rho_s)=\sign(\rho_L)$. Thus, $F$ is maximized at $\tilde\theta\rightarrow\sign(\rho_L)\cdot\infty$ and approaches the value $|\rho_L|$. Moreover, $F(\theta)\leq |\rho_L|$. To conclude with the desired, notice that if $\rho_s=\rho_L q$, then $\theta_*$ and $F(\theta_*)$ defined in the lemma take the values $\theta_*=\sign(\rho_L)\cdot\infty$ and $F(\theta_*)=|\rho_L|$, respectively.

\vp
\underline{Case 2:~$\rho_s>\rho_L q$.}~Here, it can be readily checked from \eqref{eq:Fder} that $F$ is maximized at $\tilde\theta:=\frac{\rho_L-\rho_s q}{\rho_s-\rho_L q}$. Also, a bit of algebra shows that 
$$
F(\tilde\theta)=\sqrt{\frac{\rho_s^2+\rho_L^2-2q\rho_L\rho_s}{1-q^2}}=F(\theta_*).
$$
Thus, $|F(\theta)|$ is maximized either at $\tilde\theta$ or as $\theta$ approaches $\pm\infty$. But, $F(\theta_*)^2-\rho_L^2=\frac{(\rho_s-q\rho_L)^2}{1-q^2}> 0 \Rightarrow F(\tilde\theta)>|\rho_L|$. Hence, $|F(\theta)|$ is indeed maximized at $\tilde\theta=\theta_*$ and attains the value $F(\tilde\theta)=F(\theta_*)$.

\vp
\underline{Case 3:~$\rho_s<\rho_L q$.}~Here, it can be checked from \eqref{eq:Fder} that $F$ is minimized at $\tilde\theta:=\frac{\rho_L-\rho_s q}{\rho_s-\rho_L q}$ and the minimum value is $F(\tilde\theta)=-F(\theta_*).$ Moreover, similar to Case 2 above, $F(\theta_*)=|F(\tilde\theta)|>|\rho_L|$. Thus, again, $|F(\theta)|$ is maximized at $\theta_*$ and taking the value $F(\theta_*)$.

This completes the proof of the result.

\section{Optimization of Preprocessing Functions}\label{app:optpre}

In order to state the results in this section, let us define the following functions for $y\in\R$ and $G\sim\mathcal{N}(0,1)$,
\begin{subequations}\label{eq:mus}
\begin{align}
    \mu_0(y) &= \E_G[p_{Y\mid G}(y\mid G)],\\
    \mu_1(y) &= \E_G[G\cdot p_{Y\mid G}(y\mid G)],\\
    \mu_2(y) &= \E_G[G^2\cdot p_{Y\mid G}(y\mid G)].
\end{align}
\end{subequations}
Note that the functions $\mu_0,\mu_1$ and $\mu_2$ only depend on the conditional distribution $p_{Y\mid G}(\, \cdot \, | \,  G)$. Furthermore, let $\cS$ denote the support of the probability measure $Y$ (i.e., the support of $\mu_0(y)$).

\subsection{Linear Estimator}\label{sec:lin_opt}

In terms of the notation in \eqref{eq:mus}, for a preprocessing function $\Tc_L(y)$, we can write 
$$
\E\{GZ_L\} = \int_{\cS} \Tc_L(y) \mu_1(y) \mathrm{d}y\quad\text{and}\quad
\E\{Z_L^2\} = \int_{\cS} \Tc^2_L(y) \mu_0(y) \mathrm{d}y.
$$ 
Thus, it follows from \eqref{eq:lin_cor} in Lemma \ref{lemma:pt_lin} that
\begin{align}\label{eq:rhoL_mus}
|\rho_L| = \left(1+\frac{1}{\delta}\frac{\int_{\cS} \Tc_L^2(y)\mu_0(y)\mathrm{d}y}{\left(\int_{\cS} \Tc_L(y)\mu_1(y)\mathrm{d}y\right)^2}\right)^{-1/2}\,,
\end{align}
 provided $\int_{\cS} \Tc_L(y) \mu_1(y) \mathrm{d}y\neq 0$ and $\E\{|GZ_L|\}<\infty.$

Assume, henceforth, that 
\begin{align}\label{eq:lin_opt_cond}
    0<\int_{\cS} \frac{ \mu_1^2(y) }{\mu_0(y)} \mathrm{d}y\,<\infty.
\end{align}
Then, we will show in this section that 
the optimal preprocessing function for the linear estimator is
\begin{align}\label{eq:Tcl_def}
    \Tc^*_L(y) = \frac{\mu_1(y)}{\mu_0(y)}\,,
\end{align}
and the achieved (optimal) normalized correlation is
\begin{align}\label{eq:rhoL*}
    \rho_L^* = \left(1+\frac{1}{\delta\int_{\cS}\frac{\mu_1^2(y)}{\mu_0(y)}\mathrm{d}y}\right)^{-1/2}\,.
\end{align}

To see this, note from \eqref{eq:rhoL_mus} that $\rho_L^2$ is maximized for the choice of $\Tc_L$ that minimizes the ratio ${\int_{\cS} \Tc_L^2(y)\mu_0(y)\mathrm{d}y}\big/{\left(\int_{\cS} \Tc_L(y)\mu_1(y)\mathrm{d}y\right)^2}$, while ensuring $\int_{\cS} \Tc_L(y) \mu_1(y) \mathrm{d}y\neq0$. Furthermore, by using the Cauchy-Schwarz inequality, we obtain:
\begin{align}
    \left(\int_{\cS} \Tc_L(y)\mu_1(y)\mathrm{d}y\right)^2 = \left(\int_{\cS} \Tc_L(y)\sqrt{\mu_0(y)}\frac{\mu_1(y)}{\sqrt{\mu_0(y)}}\mathrm{d}y\right)^2 \leq \left(\int_{\cS} \Tc_L^2(y){\mu_0(y)}\mathrm{d}y\right) \left(\int_{\cS} \frac{\mu_1^2(y)}{\mu_0(y)}\mathrm{d}y\right)\,.
\end{align}
Rearranging the above and substituting in the expression for $\abs{\rho_L}$ from \eqref{eq:rhoL_mus} yields $\rho_L^2 \leq \left(\rho_L^*\right)^2$, with equality achieved if and only if $\Tc_L(y)=c\cdot \frac{\mu_1(y)}{\mu_0(y)}, \forall y\in\R$ and some constant $c>0$. Clearly, the correlation performance of $\bxl$ is insensitive to scaling $\Tc_L$ by a constant. Thus, we can choose $c=1$ to arrive at \eqref{eq:Tcl_def}. To complete the proof of the claim, note that for the choice in \eqref{eq:Tcl_def}:
$$
\int_{\cS} \Tc_L^*(y) \mu_1(y) \mathrm{d}y = \int_{\cS} \frac{ \mu_1^2(y) }{\mu_0(y)} \mathrm{d}y\,>0,
$$
and 
$$
\E\{|G Z_L|\} \leq\sqrt{\E\{G^2\}}\left(\int_\cS (\cT_L^*(y))^2\mu_0(y)\mathrm{d}y\right)^{1/2} = \sqrt{\E\{G^2\}}\left(\int_\cS \frac{\mu_1^2(y)}{\mu_0(y)}\mathrm{d}y\right)^{1/2} <\infty,
$$
where the last inequalities in the above lines follow from \eqref{eq:lin_opt_cond}.

As a final note, observe that the optimal $\Tc^*_L$ does \emph{not} depend on the sampling ratio $\delta$.

\subsection{Spectral Estimator}\label{sec:spec_opt}
The optimal preprocessing function for the spectral estimator is derived in \cite{luo2019optimal}. For ease of reference, we present here their result in the special case where 
$\inf_y \frac{\mu_2(y)}{\mu_0(y)}>0.$
If this condition does not hold, the idea is to construct a sequence of approximations of the optimal preprocessing function (we refer the reader to \cite{luo2019optimal} for the details).

Assume $\delta\geq \delta^*$, where 
\begin{align}\label{eq:delta_star}
    \delta^*:= \left(\int_{\cS}\frac{(\mu_2(y)-\mu_0(y))^2}{\mu_0(y)} \mathrm{d}y\right)^{-1}
\end{align}
is the threshold for weak recovery of the spectral estimator \cite{mondelli2017fundamental}. For a preprocessing function $\Tc_s(y)$, we have from Lemma \ref{lemma:pt} that 
$$
\rho_s = \left(1+\frac{\int_{\cS} \left(\frac{\Tc_s(y)}{\lambda_\delta^*-\Tc_s(y)}\right)^2\mu_2(y)\mathrm{d}y}{\frac{1}{\delta}-\int_{\cS} \left(\frac{\Tc_s(y)}{\lambda_\delta^*-\Tc_s(y)}\right)^2\mu_0(y)\mathrm{d}y}\right)^{-1/2}\,,
$$
where $\lambda_\delta^*$ is the unique solution to the following equation for $\lambda\geq\tau$:
\begin{align}\label{eq:spec_opt1}
\int_{\cS}\frac{\Tc_s(y)}{\lambda-\Tc_s(y)}\left(\mu_2(y)-\mu_0(y)\right)\mathrm{d}y=\frac{1}{\delta}\,,
\end{align}
and also (cf. $\psi_\delta^\prime(\lambda_\delta^*)\geq0$),
\begin{align}\label{eq:spec_opt2}
\int_{\cS} \left(\frac{\Tc_s(y)}{\lambda_\delta^*-\Tc_s(y)}\right)^2\mu_0(y) \mathrm{d}y \leq \frac{1}{\delta}\,.
\end{align}

Using this characterization, \cite{luo2019optimal} shows that, the optimal preprocessing function for the spectral estimator is
\begin{align}\label{eq:Tcs_def}
    \Tc^*_s(y) = 1-\frac{\mu_0(y)}{\mu_2(y)}\,,
\end{align}
and the achieved (optimal) normalized correlation is
\begin{align}\label{eq:rhos_star_def}
    \rho_s^* = \left(1+\beta_\delta\right)^{-1/2},\quad\text{where}~
    \int_{\cS}\frac{(\mu_2(y)-\mu_0(y))^2}{\mu_0(y)+\mu_2(y)/\beta_\delta}\mathrm{d}y = \frac{1}{\delta}.
\end{align}
As for the linear estimator, the optimal function $\Tc^*_s$ does \emph{not} depend on the sampling ratio $\delta$.

\subsection{Spectral vs Linear}\label{sec:lin_vs_spec}
As mentioned in the introduction, there is no clear winner between the linear and the spectral estimator: the superiority of one method over the other depends on the measurement model and on the sampling ratio. Here, we fix the measurement model (i.e., the stochastic output function $p_{Y|G}(\, \cdot \, | \,  \inp{\bx}{\ba_i})$) and we present an analytic condition that determines which method is superior for any given $\delta>0$ after optimizing both in terms of the preprocessing function. 
\begin{lemma}
Assume that  $\inf_y \frac{\mu_2(y)}{\mu_0(y)}>0$ and \eqref{eq:lin_opt_cond} hold. 
Consider the  function $h:\R_+\rightarrow\R_+$,
\begin{align}
h(t):=    \int_{\cS}\frac{(\mu_2(y)-\mu_0(y))^2}{\mu_0(y)+\mu_2(y)/t}\mathrm{d}y,
\end{align}
and let 
$\gamma_\delta:=\left({\delta\int_{\cS}\frac{\mu_1^2(y)}{\mu_0(y)}\mathrm{d}y}\right)^{-1}$. Then, the following holds:
\begin{align}
    \delta\cdot h( \gamma_\delta ) \lessgtr 1 ~\Longrightarrow~ \rho_s^* \lessgtr \rho_L^*,
\end{align}
where $\rho_L^*$ and $\rho_s^*$ are defined in \eqref{eq:rhoL*} and \eqref{eq:rhos_star_def}, and denote the optimal normalized correlation for the linear and the spectral estimator, respectively.
\end{lemma}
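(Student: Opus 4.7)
The strategy is to reduce the comparison of $\rho_s^*$ and $\rho_L^*$ to a one‑dimensional comparison of two scalars, and then exploit monotonicity of $h$. Observe first that both optimal correlations have the common form $(1+\cdot)^{-1/2}$: from \eqref{eq:rhoL*} we can write $\rho_L^*=(1+\gamma_\delta)^{-1/2}$, while from \eqref{eq:rhos_star_def} we have $\rho_s^*=(1+\beta_\delta)^{-1/2}$, where $\beta_\delta$ is defined implicitly by $h(\beta_\delta)=1/\delta$. Since $x\mapsto (1+x)^{-1/2}$ is strictly decreasing on $\R_+$, the ordering between $\rho_L^*$ and $\rho_s^*$ is exactly the reverse of the ordering between $\gamma_\delta$ and $\beta_\delta$.

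Next, I would show that the map $t\mapsto h(t)$ is strictly increasing on $(0,\infty)$. This is immediate: for each $y\in\cS$ with $\mu_2(y)\neq\mu_0(y)$ (a set of positive measure, otherwise the model is degenerate), the integrand $(\mu_2(y)-\mu_0(y))^2/(\mu_0(y)+\mu_2(y)/t)$ is strictly increasing in $t$ because $\mu_0(y)+\mu_2(y)/t$ is strictly decreasing and positive. Integrating preserves strict monotonicity. Combined with the definition of $\beta_\delta$ via $h(\beta_\delta)=1/\delta$, this gives the equivalences
\begin{equation*}
\delta\cdot h(\gamma_\delta)<1 \iff h(\gamma_\delta)<h(\beta_\delta)\iff \gamma_\delta<\beta_\delta,
\end{equation*}
and likewise with the strict inequality reversed.

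Putting the two pieces together yields the claim: $\delta h(\gamma_\delta)<1$ forces $\gamma_\delta<\beta_\delta$, hence $(1+\gamma_\delta)^{-1/2}>(1+\beta_\delta)^{-1/2}$, i.e.\ $\rho_L^*>\rho_s^*$; the opposite inequality is handled symmetrically.

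The only minor technical point to check is that $\beta_\delta$ is well defined, i.e.\ that the equation $h(\beta_\delta)=1/\delta$ admits a (unique) positive solution. Strict monotonicity of $h$ gives uniqueness, while existence follows from the boundary behaviour $h(0^+)=0$ and $h(\infty)=\int_\cS (\mu_2-\mu_0)^2/\mu_0\,\mathrm d y=1/\delta^*$, together with the implicit hypothesis $\delta>\delta^*$ (needed so that the spectral estimator is effective in the first place, cf.\ \eqref{eq:delta_star}); this is the only place a standing assumption is invoked, and no real ``obstacle'' arises beyond carefully recording it. No new computation is needed, since the closed‑form expressions for $\rho_L^*$ and $\rho_s^*$ have already been established in Sections \ref{sec:lin_opt} and \ref{sec:spec_opt}.
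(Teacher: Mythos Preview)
Your proposal is correct and follows essentially the same approach as the paper: both arguments hinge on recognizing that $\rho_L^*=(1+\gamma_\delta)^{-1/2}$ and $\rho_s^*=(1+\beta_\delta)^{-1/2}$, that $h$ is strictly increasing (so $h(\gamma_\delta)\lessgtr 1/\delta$ forces $\gamma_\delta\lessgtr\beta_\delta$), and that $t\mapsto(1+t)^{-1/2}$ is strictly decreasing. Your version is slightly more detailed---in particular the remark on existence and uniqueness of $\beta_\delta$---but the logical skeleton is identical.
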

\begin{proof}
It can be checked by direct differentiation and the fact that $\mu_2(y)>0$, that $h(\cdot)$ is strictly increasing. Thus,  from \eqref{eq:rhos_star_def}, we find that $h( \gamma_\delta ) \lessgtr 1/\delta ~\Longrightarrow~ \beta_\delta \gtrless \gamma_\delta$. To conclude the proof, note from \eqref{eq:rhoL*} and \eqref{eq:rhos_star_def} that $\rho_L^*=u(\gamma_\delta)$ and  $\rho_s^*=u(\beta_\delta)$, respectively, where we define $u(t) = (1+t)^{-1/2}$ and $u(\cdot)$ is strictly decreasing.
\end{proof}

\subsection{Combined Estimator}\label{sec:comb_opt}
In the previous sections of Appendix \ref{app:optpre}, we have discussed how to optimally choose $\Tc_L$ and $\Tc_s$  to maximize the correlation of the linear and spectral estimators. This was possible thanks to the asymptotic characterizations in Lemmas \ref{lemma:pt_lin} and \ref{lemma:pt}. Theorem \ref{th:optimality} opens up the possibility to optimally choose $\Tc_L$ and $\Tc_s$ to maximize the correlation achieved by the Bayes-optimal combination $F_*(\bx^{\rm L}, \bx^{\rm s})$. Here, we focus on the special case in which the signal prior is Gaussian and, hence, $F_*(\bx^{\rm L}, \bx^{\rm s})$ is a linear combination of $\bx^{\rm L}$ and  $\bx^{\rm s}$, see Corollary \ref{lem:fopt}. In the rest of this section, we formalize the problem of (optimally) choosing the functions  $\Tc_L$ and $\Tc_s$. 

To begin, note from \eqref{eq:q_def_compact} that $q=\rho_L \cdot\rho_s \cdot s$, where we define
\begin{align}\label{eq:q_def_mus}
    s := 
     \frac{\int_{\cS}{\frac{\Tc_L(y)\mu_1(y)}{1-\frac{1}{\lambda_\delta^*}\Tc_s(y)}\mathrm{d}y}}{\int_{\cS}{\Tc_L(y)\mu_1(y)\mathrm{d}y}} \,.
\end{align}
Furthermore, by using \eqref{eq:q_def_mus} in \eqref{eq:rho_star_def}, we can express the achieved correlation $F(\theta_*)$ of $\bxc(\theta_*)$ as follows
\begin{align}
    F^2(\theta_*) &= \frac{\rho_s^2+\rho_L^2-2\rho_s^2\rho_L^2s}{1-\rho_s^2\rho_L^2s^2} =  \frac{\frac{1}{\rho_s^2}+\frac{1}{\rho_L^2}-2s}{\frac{1}{\rho_s^2\rho_L^2}-s^2} \label{eq:division_zero}\\
    &= \frac{2-2s+\frac{1}{\delta}\frac{\int_{\cS}{\Tc_L^2\mu_0}}{(\int_{\cS}{\Tc_L\mu_1)^2}} + \frac{\int_{\cS}{\left(\frac{\Tct_s}{1-\Tct_s}\right)^2\mu_2}}{\frac{1}{\delta} -\int_{\cS}{\left(\frac{\Tct_s}{1-\Tct_s}\right)^2\mu_0} } }{ \left(1+\frac{1}{\delta}\frac{\int_{\cS}{\Tc_L^2\mu_0}}{(\int_{\cS}{\Tc_L\mu_1)^2}}\right)\left(1+\frac{\int_{\cS}{\left(\frac{\Tct_s}{1-\Tct_s}\right)^2\mu_2}}{\frac{1}{\delta} -\int_{\cS}{\left(\frac{\Tct_s}{1-\Tct_s}\right)^2\mu_0} }\right) -s^2},\label{eq:rho_star_open}
\end{align}
where we have denoted $\Tct(y):=\frac{1}{\lambda_\delta^*}\Tc(y)$ and all integrals are over $y$ (not explicitly written for brevity). Thus, the problem of choosing $\Tc_L$ and $\Tc_s$ can be reformulated as follows:
\begin{align}
    &\max_{\gamma,\Tc_L,\Tct_s}~\gamma\nn\\
    &~\text{s.t.}~~~~\eqref{eq:rho_star_open} \geq \gamma \nn\\
    &~~~~~~~~\int_{\cS}\frac{\Tct_s(y)}{1-\Tct_s(y)}\left(\mu_2(y)-\mu_0(y)\right)\mathrm{d}y=\frac{1}{\delta}\nn\\
    &~~~~~~~~\int_{\cS}\left(\frac{\Tct_s(y)}{1-\Tct_s(y)}\right)^2\mu_0(y) \mathrm{d}y\leq\frac{1}{\delta}\nn\\
    &~~~~~~~~\int_{\cS}\Tc_L(y)\mu_1(y) \mathrm{d}y > 0\nn\,.
\end{align}
The second and the third constraints above follow from \eqref{eq:spec_opt1} and \eqref{eq:spec_opt2}, respectively. These further guarantee that $\rho_s>0$ (so division in \eqref{eq:division_zero} is allowed). Similarly, the last constraint on $\Tc_L$ ensures that $\rho_L^2>0$. 

Though concrete, the formulation above is a difficult function optimization problem. Solving this goes beyond the scope of this paper, but it may be an interesting future direction. Another related question is whether the solution to this problem coincides (or not) with the ``individually" optimal choices in \eqref{eq:Tcl_def} and \eqref{eq:Tcs_def}, respectively.

\section{Example: Bayes-optimal Combination for Binary Prior}\label{sec:Bayes_Bern}
In this section, we evaluate explicitly the Bayes-optimal estimator $F_*(x_L,x_s) = \E\{ X\,|\,X_L=x_L,X_s=x_s\}$ in \eqref{eq:Bayesopt} for the case where
$X\in\{1, -1\}$ with $P_X(1)=\sfp$ and $P_X(-1)=1-\sfp$.

Using this prior, we obtain
\begin{align}
    F_*(x_L,x_s)=\E\{X\,|\,X_L=x_L,X_s=x_s\} &= \E\{X\,|\,\rho_LX+W_L=x_L,\rho_sX+W_s=x_s\}\nn \\
    &= 2\mathbb P(X=1\,|\,\rho_LX+W_L=x_L,\rho_sX+W_s=x_s) - 1 \nn \\
    &= \frac{2}{1+\frac{1-\sfp}{\sfp} \frac{p_{W_L,W_S}(x_L+\rho_L,x_s+\rho_s)}{p_{W_L,W_S}(x_L-\rho_L,x_s-\rho_s)}} - 1,\label{eq:Bayes_bern}
\end{align}
where the last line follows by Bayes rule and simple algebra. Here, $p_{W_L,W_s}$ denotes the joint density of $(W_L,W_s)$ as predicted by Theorem \ref{th:W2conv2}, i.e.,
\begin{align}\label{eq:pWW}
p_{W_L,W_s}(w_L,w_s) = \frac{1}{C}\exp\Big({-\frac{1}{2}
\begin{bmatrix}
w_L & w_s
\end{bmatrix} \Sigma^{-1} \begin{bmatrix}
w_L \\ w_s
\end{bmatrix}\Big)
},
\end{align}
where $\Sigma=\begin{bmatrix}
1-\rho_L^2 & q-\rho_L\rho_s \\ q-\rho_L\rho_s & 1-\rho_s^2
\end{bmatrix},$
and $C$ is a constant that is irrelevant for our purpose as it cancels in \eqref{eq:Bayes_bern}. Using \eqref{eq:pWW} in \eqref{eq:Bayes_bern} gives an explicit expression for $F_*(x_L,x_s)$.

In Section \ref{sec:num_bayes}, we numerically implement the optimal combined estimator for various measurement models. Specifically, we use the linear and spectral estimators $\bx^{\rm L}$ and $\bx^{\rm s}$ to form the combined estimator
$$
\hat\bx^{\rm mmse} = F_*(\bx^{\rm L},\bx^{\rm s}),
$$
where $F_*$ acts element-wise on the entries of its arguments as specified in \eqref{eq:Bayes_bern}. The  asymptotic correlation of the estimator $\hat\bx^{\rm mmse}$ is given by Theorem \ref{th:optimality} as follows: $\rho_*=\frac{|\E\left\{X\cdot F_*(X_L,X_s)\right\}|}{\sqrt{\E\left\{F_*^2(X_L,X_s)\right\}}}$ (see \eqref{eq:optres0}). Equipped with the explicit expression for $F_*(X_L,X_s)$ in \eqref{eq:Bayes_bern}, we can compute  $\rho_*$ using Monte Carlo averaging over realizations of the triplet $(X,X_L,X_s)$.

\section{Proof of Lemma \ref{lem:fixed_pts}} \label{app:lem_fixed_pts_proof}
Take $t \to \infty$ in \eqref{eq:SE_lin2}, and let  $\mu_V = \lim_{t \to \infty} \mu_{V,t}, \, 
\sigma_V^2 = \lim_{t \to \infty} \sigma_{V,t}^2$, $\beta^2 = \sigma_V^2 + \mu_V^2$. Then, by solving these equations, we obtain two solutions for the pair $(\mu_V^2, \sigma_V^2)$: one solution gives the fixed point $\FP_0$; and the other solution gives $\FP_1$ and $\FP_2$.
Note that the fixed points $\FP_1$ and $\FP_2$ exist only when $\tilde{\beta}^2 > \E\{Z^2\}$. From \eqref{eq:beta2_def}, this is equivalent to $\delta > \frac{\E\{ Z^2\}}{ (\E\{ Z G^2 \} \, - \,\E\{Z\})^2}$, which is the condition in the statement of the lemma.

Let us define $$\gamma_t^2 \equiv \frac{\mu_{V,t}^2}{\sigma_{V,t}^2}.$$
Using this definition, the two equations in \eqref{eq:SE_lin2} can  be combined to obtain the following recursion in $\gamma_t^2$:
\beq
\gamma_{t+1}^2 = 
\frac{\delta (\E\{ Z G^2\} - \E\{Z\})^2}{\E\{Z^2G^2\} + \E\{ Z^2\}/\gamma_t^2 }.
\label{eq:gamma_iter}
\eeq
Note that $\E\{ Z^2\}>0$. In fact, if $\E\{ Z^2\}=0$, then $\mathbb P(Z=0)=1$ and the condition $\E\{ Z(G^2-1)\} >0$ cannot hold. Thus, the two fixed points of this recursion are $\gamma^2_{\FP_0}=0$, and 
\beq
\gamma^2_{\FP_{12}}=\frac{ \delta(\E\{ ZG^2\} - \E\{Z\})^2 - \E\{Z^2\} }{\E\{ Z^2 G^2 \}} = \frac{\tilde{\beta}^2 - \E\{Z^2\}}{\E\{ Z^2 G^2 \}} = \frac{\tilde{\mu}_V^2}{\tilde{\sigma}_V^2}.
\label{eq:gam_FP12}
\eeq
As discussed above, the fixed point $\gamma^2_{\FP_{12}}$ exists when  $\delta > \frac{\E\{ Z^2\}}{ (\E\{ Z G^2 \} \, - \,\E\{Z\})^2}$. The recursion can be written as $\gamma_{t+1}^2 = f(\gamma_t^2)$, where 
\[ 
f(x) = \frac{\delta (\E\{ZG^2\} - \E\{Z\})^2}{\E\{Z^2G^2\} + \E\{ Z^2\}/x}.
\]
The derivative of $f$ is given by 
\beq
f'(x) = \frac{\delta \, (\E\{ZG^2\} - \E\{Z\})^2\,\E\{Z^2\} }{(\E\{Z^2\} \, +  \,  x \,  \E\{Z^2 G^2\})^2}. 
\label{eq:fpr_x}
\eeq
 
We now argue that, whenever $\gamma_1^2 = \frac{\mu_{V,1}^2}{\sigma_{V,1}^2}$ is strictly positive, the recursion $\gamma_{t+1}^2 = f(\gamma_t^2)$ converges to $\gamma^2_{\FP_{12}}$. We will separately consider the two cases $\gamma_1^2 < \gamma^2_{\FP_{12}}$ and $\gamma_1^2 > \gamma^2_{\FP_{12}}$. 

We consider first the case $\gamma_1^2 < \gamma^2_{\FP_{12}}$. Since $f'(x) >0$, the function $f(x)$ is strictly increasing for $x \geq 0$. 
Therefore, if for any $t \geq 0$ we have  $\gamma_t^2 < \gamma^2_{\FP_{12}}$, then $\gamma_{t+1}^2 = f(\gamma_t^2) < f(\gamma^2_{\FP_{12}}) = \gamma^2_{\FP_{12}}$. Next, we argue that $f(x) > x$ for $0 < x < \gamma^2_{\FP_{12}}$. 
 To show this claim, we first note that $f'(0) >1$ since $\delta > \frac{\E\{ Z^2\}}{ (\E\{ Z G^2 \} \, - \,\E\{Z\})^2}$. Thus, $f(x) > x$ for $x$ sufficiently close to $0$.  If $f(\gamma') \leq \gamma'$ for some $0 < \gamma' < \gamma^2_{\FP_{12}}$, then there exists a fixed point between $0$ and $\gamma'$, as $f(x)$ is continuous. However, this is not possible since $\gamma' < \gamma^2_{\FP_{12}}$ and $\gamma^2_{\FP_{12}}$ is the unique  fixed point $> 0$. As a result, $f(x) > x$ for $0 < x < \gamma^2_{\FP_{12}}$. Hence, if $\gamma_1^2 < \gamma^2_{\FP_{12}}$, then the sequence  $(\gamma_t^2)_{t \geq 1}$ is strictly increasing and bounded above by  $\gamma^2_{\FP_{12}}$. 
Furthermore, by using the uniqueness of the fixed point, one obtains that its supremum is $\gamma^2_{\FP_{12}}$. Therefore, the sequence 
$(\gamma_t^2)_{t \geq 1}$ converges to $\gamma^2_{\FP_{12}}$.

Next, consider the case $\gamma_1^2 > \gamma^2_{\FP_{12}}$. We observe that
\beq
f'(\gamma^2_{\FP_{12}}) =\frac{\E\{Z^2\}}{\delta (\E\{ZG^2\} - \E\{Z\})^2 } < 1
\eeq
since $\delta > \frac{\E\{ Z^2\}}{ (\E\{ Z G^2 \} \, - \,\E\{Z\})^2}$. From \eqref{eq:fpr_x}, we see that $f'(x)$ is strictly decreasing for $x >0$, hence $f'(x) <1$ for $x \geq \gamma^2_{\FP_{12}}$.   Therefore, by the Banach fixed point theorem, the iteration \eqref{eq:gamma_iter} converges to $\gamma^2_{\FP_{12}}$ whenever $\gamma_1^2 \geq \gamma^2_{\FP_{12}}$.   

Finally, we observe from  \eqref{eq:SE_lin2} that:
\beq
\mu_{V, t+1}^2 = \delta (\E\{Z(G^2-1)\})^2 \frac{\gamma_t^2}{1+ \gamma_t^2}, \qquad \sigma_{V, t+1}^2 = \frac{\gamma_t^2}{1+ \gamma_t^2} \E\{Z^2 G^2\} \,
\, + \, \frac{1}{1+ \gamma_t^2} \, \E\{ Z^2 \}. 
\label{eq:mut1_sigt1}
\eeq
Thus, for any initialization such that $\gamma_1^2 > 0$,
 \beq
 \begin{split}
& \lim_{t \to \infty} \mu_{V, t+1}^2  = \delta (\E\{Z(G^2-1)\})^2 \frac{\gamma^2_{\FP_{12}}}{1+ \gamma^2_{\FP_{12}}} = \tilde{\mu}_V^2, \\
& \lim_{t \to \infty} \sigma_{V, t+1}^2 = 
 \frac{\gamma^2_{\FP_{12}}}{1+ \gamma^2_{\FP_{12}}} \E\{Z^2 G^2\}  \, + \, \frac{1}{1+ \gamma^2_{\FP_{12}}} \, \E\{ Z^2 \} = \tilde{\sigma}_V^2,
 \end{split}
\eeq
where we have used the expression for $\gamma^2_{\FP_{12}}$ from \eqref{eq:gam_FP12}. Note that both fixed points $\FP_1$ and $\FP_2$ correspond to the same $(\tilde{\mu}_V^2, \tilde{\sigma}_V^2)$. The assumption that $\E\{ Z(G^2-1)\} >0$ ensures that the sign of $\mu_{V,t+1}$ in \eqref{eq:SE_lin2} remains unchanged and hence the iteration converges to either $\FP_1$ or $\FP_2$ depending  on the sign of $\mu_{V,1}$.

\section{Proof of Lemma \ref{lem:diffs} }
\label{app:proof_lem_diffs}

We first consider a fixed $t \geq 2$ and write  
\begin{align}
\frac{1}{n} \| \bu^t - \bu^{t-1} \|^2 = \frac{\|\bu^t \|^2}{n}  \, + \,  \frac{\| \bu^{t-1} \|^2}{n} \, - \,  2 \frac{\< \bu^t, \, \bu^{t-1} \>}{n}.      
\end{align}   
Applying Proposition \ref{prop:GAMP_SE} with the PL(2) functions $\psi(a) = a^2$ (for the first two terms) and $\psi(a, b) = ab$ (for the last term), we obtain
\beq
\begin{split}
& \lim_{n \to \infty } \frac{1}{n} \| \bu^t - \bu^{t-1} \|^2  \,   \stackrel{\text{a.s.}}{=} \,  
\E\Big\{ (\mu_{U,t} G + \sigma_{U,t} W_{U,t} )^2 \Big\} \, + \, 
\E\Big\{ (\mu_{U,t-1} G + \sigma_{U,t-1} W_{U,{t-1}} )^2 \Big\}  \\
& \hspace{1.5in} - 2 \E\Big\{ (\mu_{U,t} G + \sigma_{U,t} W_{U,t}) (\mu_{U,t-1} G + \sigma_{U,t-1} W_{U,{t-1}} ) \Big\} \\
&= \mu_{U,t}^2 \, + \, \sigma_{U,t}^2 \, + \, \mu_{U,t-1}^2 \, + \, \sigma_{U,t-1}^2
\, - \, 2 \mu_{U,t} \mu_{U,t-1} \, - \, 2 \sigma_{U,t} \sigma_{U,t-1} \E\{W_{U,t-1} W_{U,t}\}.
\end{split}
\label{eq:ut1ut_diff}
\eeq
Similarly,  we have for any $t \geq 1$
\beq
\begin{split}
& \lim_{d \to \infty } \frac{1}{d} \| \bv^{t+1} - \bv^{t} \|^2   
\,  \stackrel{\text{a.s.}}{=} \, \mu_{V,t+1}^2 + \sigma_{V,t+1}^2  +  \mu_{V,t}^2  +  \sigma_{V,t}^2
 -  2 \mu_{V,t} \mu_{V,t+1}  -  2 \sigma_{V,t+1} \sigma_{V,t} \E\{W_{V,t+1} W_{V,t}\}.
\end{split}
\label{eq:vt1vt_diff}
\eeq

Since  $\abs{\E\{ \cT_L(Y) G \}} >0$, the initialization  $\mu_{V,1}$ of the state evolution recursion in \eqref{eq:SEinit}  is strictly non-zero.  Therefore Lemma \ref{lem:fixed_pts} guarantees that the state evolution recursion \eqref{eq:SE_lin1}-\eqref{eq:SE_lin2} converges to either the fixed point $\FP_1$ or to $\FP_2$ depending on the sign of $\mu_{V,1}$. Without loss of generality assume that $\mu_{V,1} >0$, so that the recursion converges to $\FP_1$. (The argument for $\mu_{V,1}<0$ is identical.)
\beq
\lim_{t \to \infty} \mu_{V,t} = \tilde{\mu}_V, \quad 
\lim_{t \to \infty} \sigma_{V,t}^2 = \tilde{\sigma}_V^2, \qquad
\lim_{t \to \infty} \mu_{U,t} = \frac{\tilde \mu_V}{\sqrt{\delta} \tilde \beta}, \quad 
\lim_{t \to \infty} \sigma_{U,t}^2 = \frac{\tilde \sigma_V^2}{\delta \tilde \beta^2}.
\label{eq:musigmaUV_lims}
\eeq
Hence,  the desired result immediately follows from \eqref{eq:ut1ut_diff}, \eqref{eq:vt1vt_diff}  and Remark \ref{rem:tn_infty}, if we show that $\E\{W_{V,t+1} W_{V,t}\} \to 1$ and $\E\{W_{U,t-1} W_{U,t}\} \to 1$ as $t \to \infty$.

 Taking $r=(t-1)$ in \eqref{eq:WV_corr} and using the formula for $g_t(\cdot, \cdot)$ from \eqref{eq:ft_gt_choice}, we obtain 
\begin{equation}
\begin{split}
   \E \{ W_{V,t+1} W_{V,t} \} \sigma_{V, t+1} \sigma_{V,t} & = 
  \delta \E \left\{ Z^2 (\mu_{U,t} G + \sigma_{U,t} W_{U,t}) (\mu_{U,t-1} G + \sigma_{U,t-1} W_{U,t-1}) \right\} \\
   &   = \delta \left( \E\{ Z^2 G^2\} \mu_{U,t} \mu_{U,t-1} \, + \, \E\{Z^2\} 
   \E\{ W_{U,t} W_{U,t-1}\} \sigma_{U,t} \sigma_{U,t-1} \right).
\end{split}
\label{eq:WVt1t0}
\end{equation}
Next, taking $r=(t-1)$ in \eqref{eq:WU_corr} and using the formula for 
$f_t(\cdot)$ from \eqref{eq:ft_gt_choice}, we get
\beq
\begin{split}
&\E\{ W_{U,t} W_{U,t-1}\} \sigma_{U,t} \sigma_{U,t-1} \\
& = \frac{1}{\delta} \E\left\{ \left( \frac{\mu_{V,t} X + \sigma_{V,t} W_{V,t}}{\beta_t}  - X \sqrt{\delta} \mu_{U,t}\right) \left( \frac{\mu_{V,t-1} X + \sigma_{V,t-1} W_{V,t-1}}{\beta_{t-1}}  - X \sqrt{\delta} \mu_{U,t-1}\right) \right\}  \\
& =\frac{1}{\delta}  \frac{\sigma_{V,t} \, \sigma_{V,t-1}}{ \beta_t  \,\beta_{t-1}}\E\{ W_{V,t} W_{V,t-1}\}. 
\end{split}
\label{eq:WUtt1}
\eeq
Here, the last equality is obtained by noting from \eqref{eq:SE_lin1} that $\sqrt{\delta} \mu_{U,t} = \frac{\mu_{V,t}}{\beta_t}$, hence the coefficients on $X$ cancel.
Combining \eqref{eq:WVt1t0} and \eqref{eq:WUtt1}, we obtain
\beq
\begin{split}
\E \{ W_{V,t+1} W_{V,t} \} = \frac{\delta \E\{ Z^2 G^2\} \mu_{U,t} \, \mu_{U,t-1}}{\sigma_{V, t+1} \sigma_{V,t}} \, + \, \frac{E\{Z^2\} \sigma_{V,t} \, \sigma_{V,t-1}}{\beta_t \, \beta_{t-1} \, \sigma_{V, t+1} \sigma_{V,t}} \E\{ W_{V,t} W_{V,t-1}\}.
\end{split}
\label{eq:WVt1t}
\eeq

For brevity, we write this iteration as  $w_{t+1} =  a_t \,  + \, b_t w_t$, where
\beq
w_{t} = \E\{ W_{V,t} W_{V,t-1} \}, \qquad a_t = \frac{\delta \E\{ Z^2 G^2\} \mu_{U,t} \, \mu_{U,t-1}}{\sigma_{V, t+1} \sigma_{V,t}},
\qquad b_t=\frac{E\{Z^2\} \sigma_{V,t} \, \sigma_{V,t-1}}{\beta_t \, \beta_{t-1} \, \sigma_{V, t+1} \sigma_{V,t}}.
\eeq
The iteration is initialized with $w_1 = \E\{W_{V,1} W_{V,0}\}=0$.  Note that, as $t \to \infty$, the sequences $a_t$ and $b_t$ converge to well-defined limits determined by 
\eqref{eq:musigmaUV_lims}. By using the sub-additivity of $\limsup$, we have that 
\beq
\limsup_{t\to \infty} w_{t+1} = \limsup_{t \to \infty} \, (a_t  +  b_t w_t) \le \lim_{t \to \infty} a_t + \lim_{t \to \infty} b_t \limsup_{t\to \infty} w_t.
\label{eq:wt_limsup}
\eeq
Rearranging and using the limits from \eqref{eq:musigmaUV_lims}, we obtain
\beq \label{eq:limsuple}
\limsup_{t\to \infty} w_t \le \frac{\E\{Z^2 G^2 \}}{(\tilde \beta^2 - \E\{Z^2\})} \frac{\tilde \mu_V^2}{\tilde \sigma_V^2} = 1,
\eeq
where the last equality follows from \eqref{eq:FP1}. By using the super-additivity of $\liminf$, we also have that 
\beq
\liminf_{t\to \infty} w_{t+1} = \liminf_{t \to \infty} \, (a_t  +  b_t w_t) \ge \lim_{t \to \infty} a_t + \lim_{t \to \infty} b_t \liminf_{t\to \infty} w_t,
\label{eq:wt_liminf}
\eeq
which leads to 
\beq \label{eq:liminfge}
\liminf_{t\to \infty} w_t \ge \frac{\E\{Z^2 G^2 \}}{(\tilde \beta^2 - \E\{Z^2\})} \frac{\tilde \mu_V^2}{\tilde{\sigma}_V^2} = 1.
\eeq
By combining \eqref{eq:limsuple} and \eqref{eq:liminfge}, we conclude that $\lim_{t \to \infty} \E\{W_{V,t+1} W_{V,t}\} =1$. By using \eqref{eq:WUtt1} and \eqref{eq:musigmaUV_lims}, we also have that $\lim_{t \to \infty} \E\{W_{U,t-1} W_{U,t}\} =1$, which completes the proof.

\section{An Auxiliary Lemma} \label{sec:aux}

\begin{lemma}[Lemma 4.5 in \cite{dumbgen2011}]\label{lemma:dumbgen}
Let $(Q_t)_{t \geq 1}$ be a sequence of distributions converging weakly to some distribution $Q$, and let $h$ be a non-negative continuous function such that
\begin{equation}
\lim_{t \to \infty} \int h \, \de Q_t = \int h \, \de Q.    
\end{equation}
Then, for any continuous function $\psi$ such that $\abs{\psi}/(1 + h)$ is bounded,
\begin{equation}
 \lim_{t \to \infty} \int \psi \, \de Q_t = \int \psi \, \de Q .
\end{equation}
\end{lemma}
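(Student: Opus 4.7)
The plan is to reduce the problem to weak convergence against a bounded continuous function by truncation, and then to control the truncation error using the moment-convergence hypothesis $\int h\,\de Q_t \to \int h\,\de Q$. Fix $C>0$ such that $|\psi(x)|\le C(1+h(x))$ for all $x$. For $M>0$, define the truncation
\[
\psi_M(x) := \max\{-M, \min\{M, \psi(x)\}\},
\]
which is bounded and continuous. By weak convergence of $Q_t$ to $Q$, for each fixed $M$, we have $\int \psi_M \, \de Q_t \to \int \psi_M\,\de Q$ as $t\to \infty$. The triangle inequality then gives
\[
\Big|\int \psi\,\de Q_t - \int \psi\,\de Q\Big| \le \int |\psi-\psi_M|\,\de Q_t + \Big|\int \psi_M\,\de Q_t - \int \psi_M\,\de Q\Big| + \int |\psi-\psi_M|\,\de Q,
\]
so the task reduces to showing that the first and third terms can be made arbitrarily small \emph{uniformly in} $t$ by choosing $M$ large enough.

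The key step is a uniform integrability statement for $h$ under the sequence $(Q_t)$: namely, that $\sup_t \int h \, \mathbf{1}_{\{h>K\}}\,\de Q_t \to 0$ as $K\to\infty$. To obtain this, I would introduce the bounded continuous truncation $h_K := h\wedge K$ and write $\int (h-h_K)\,\de Q_t = \int h\,\de Q_t - \int h_K\,\de Q_t$. By the hypothesis $\int h\,\de Q_t\to\int h\,\de Q$ and weak convergence applied to the bounded continuous $h_K$, we get $\int(h-h_K)\,\de Q_t \to \int(h-h_K)\,\de Q$ as $t\to\infty$, for each fixed $K$. On the other hand, monotone convergence yields $\int(h-h_K)\,\de Q \to 0$ as $K\to\infty$. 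Combining these, given any $\varepsilon>0$, one first picks $K$ so large that $\int(h-h_K)\,\de Q<\varepsilon/2$, then picks $t_0$ so large that $\int(h-h_K)\,\de Q_t < \varepsilon$ for all $t\ge t_0$; the finitely many remaining $t<t_0$ are handled by further enlarging $K$ (using that each $\int h\,\de Q_t<\infty$).

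Once this uniform integrability is established, the pointwise bound $|\psi-\psi_M|\le C(1+h)\,\mathbf{1}_{\{C(1+h)>M\}}$ (which follows from $|\psi-\psi_M|\le |\psi|\,\mathbf{1}_{\{|\psi|>M\}}$ and $|\psi|\le C(1+h)$) gives, for $K=M/C-1$,
\[
\int |\psi-\psi_M|\,\de Q_t \;\le\; C\!\int (1+h)\,\mathbf{1}_{\{h>K\}}\,\de Q_t \;\le\; C\big(Q_t(h>K)+ \textstyle\int h\,\mathbf{1}_{\{h>K\}}\,\de Q_t\big),
\]
and similarly under $Q$. The uniform integrability of $h$ controls the second summand directly; the first is controlled by Markov's inequality since $\int h \, \de Q_t$ is uniformly bounded (as a convergent sequence). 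Hence both truncation errors vanish as $M\to\infty$, uniformly in $t$, and the proof concludes by sending first $M\to\infty$ and then $t\to\infty$.

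\paragraph{Main obstacle.} The only nontrivial ingredient is the uniform integrability of $h$ under $(Q_t)$, which is precisely the reason the convergence $\int h\,\de Q_t\to\int h\,\de Q$ is assumed. Without it, weak convergence alone is insufficient to conclude, since $\psi$ may be unbounded. Every other step is standard measure-theoretic bookkeeping.
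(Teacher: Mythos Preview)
The paper does not actually prove this lemma: it is stated in Appendix~\ref{sec:aux} purely as a citation of Lemma~4.5 in \cite{dumbgen2011}, with no argument reproduced. So there is no ``paper's own proof'' to compare against.

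Your proposal is the standard and correct route: truncate $\psi$ to a bounded continuous function, use weak convergence for the truncated piece, and control the remainder via uniform integrability of $h$ under $(Q_t)$, which is exactly what the hypothesis $\int h\,\de Q_t\to\int h\,\de Q$ delivers. One small bookkeeping point: what your argument with $h_K=h\wedge K$ directly yields is $\sup_t\int (h-h_K)\,\de Q_t\to 0$, whereas the quantity you invoke afterward is $\sup_t\int h\,\mathbf{1}_{\{h>K\}}\,\de Q_t$. These are not literally the same, but the gap is easily closed (e.g.\ $h\,\mathbf{1}_{\{h>K\}}\le 2(h-h_{K/2})$), so the argument goes through.
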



\end{document}